\documentclass[letterpaper,11pt]{article}
\usepackage[margin=1in]{geometry}
\usepackage[utf8]{inputenc} 
\usepackage[T1]{fontenc}    
\usepackage{xcolor}         
\usepackage{amsmath,amsthm,amsfonts,amssymb}
\usepackage{thm-restate}
\usepackage{soul}
\usepackage{ifthen}
\usepackage{cite}
\usepackage{url,color,cite}
\usepackage{cases}
\usepackage{mathtools}
\DeclarePairedDelimiter{\ceil}{\lceil}{\rceil}
\DeclarePairedDelimiter{\floor}{\lfloor}{\rfloor}
\usepackage{float}
\usepackage{nicefrac}
\usepackage{algorithm}
\usepackage[noend]{algpseudocode}
\usepackage{pifont}
\usepackage{subcaption}
\usepackage{sidecap}
\usepackage{multirow}
\usepackage{multicol}
\usepackage{bbm}

\usepackage{soul}

\newtheorem{assumption}{Assumption}
\newtheorem{lemma}{Lemma}
\newtheorem*{lemma*}{Lemma}
\newtheorem{theorem}{Theorem}
\newtheorem*{theorem*}{Theorem}
\newtheorem{proposition}{Proposition}

\newtheorem{corollary}{Corollary}
\theoremstyle{definition}

\newcommand{\adot}[1]{{\langle #1 \rangle}}
\newcommand{\flr}[1]{{\lfloor #1 \rfloor}}
\newcommand{\cblue}[1]{{\color{blue} #1}}

\newcommand{\ind}{\mathbf{1}}
\newcommand{\bbE}{\mathbb{E}}
\newcommand{\bbN}{\mathbb{N}}
\newcommand{\bbR}{\mathbb{R}}
\newcommand{\calA}{\mathcal{A}}
\newcommand{\calD}{\mathcal{D}}
\newcommand{\calF}{\mathcal{F}}
\newcommand{\calM}{\mathcal{M}}
\newcommand{\calH}{\mathcal{H}}
\newcommand{\calS}{\mathcal{S}}
\newcommand{\calX}{\mathcal{X}}
\newcommand{\thetas}{\theta_\star}

\newcommand{\eps}{\epsilon}

\usepackage{microtype}
\usepackage{graphicx}
\usepackage{epstopdf}
\usepackage{booktabs} 
\usepackage{hyperref}
\usepackage{cleveref}
\usepackage{enumitem}

\usepackage{lipsum}

\newcommand\blfootnote[1]{%
	\begingroup
	\renewcommand\thefootnote{}\footnote{#1}%
	\addtocounter{footnote}{-1}%
	\endgroup
}

\begin{document}

\title{Contexts can be Cheap: Solving Stochastic Contextual Bandits with Linear Bandit Algorithms\blfootnote{This work is accepted in COLT (Conference on Learning Theory), 2023.}}

 \author{Osama A. Hanna$^\dagger$, Lin F. Yang$^\dagger$ and Christina Fragouli$^\dagger$\\ 
 $^\dagger$University of California, Los Angeles\\
 Email:\{ohanna, linyang, christina.fragouli\}@ucla.edu 
 }
\date{}
\maketitle

\begin{abstract}
    In this paper, we address the stochastic contextual linear bandit problem, where a decision maker is provided a context (a random set of actions drawn from a distribution). The expected reward of each action is specified by the inner product of the action  and an unknown parameter. The goal is to design an algorithm that learns to play as close as possible  to the unknown optimal policy after a number of action plays. This problem is considered more challenging than the linear bandit problem, which can be viewed as a contextual bandit problem with a \emph{fixed} context. Surprisingly, in this paper, we show that the stochastic contextual problem can be solved as if it is a linear bandit problem. In particular, we establish a novel reduction framework that converts every stochastic contextual linear bandit instance to a linear bandit instance, when the context distribution is known. When the context distribution is unknown, we establish an algorithm that reduces the stochastic contextual instance to a sequence of linear bandit instances with small misspecifications and achieves nearly the same worst-case regret bound as the algorithm that solves the misspecified linear bandit instances. 
     As a consequence, our results imply a $O(d\sqrt{T\log T})$ high-probability regret bound for contextual linear bandits, making progress in resolving an open problem in \cite{li2019nearly, li2021tight}.
     Our reduction framework opens up a new way to approach stochastic contextual linear bandit problems, and enables  improved regret bounds in a number of instances including the batch setting, contextual bandits with misspecifications, contextual bandits with sparse unknown parameters, and contextual bandits with adversarial corruption.

\end{abstract}
\allowdisplaybreaks

\section{Introduction}
\begin{table*}[t!]
\centering
\begin{tabular}{|c|c|c|c|}
\hline
  Algorithm & Regret Bound & Type & Assumption/restriction \\ 
 \hline\hline
\cite{abbasi2011improved}& $O(d\sqrt{T}\log T)$ & w.h.p. &  \\
\cite{li2021tight}& $O(d\sqrt{T\log T}poly(\log\log T))$ & exp & \\ 
\cite{li2021tight}& $O(d\sqrt{T}\log T poly(\log\log T))$ & w.h.p. & \\ 
 \textbf{Ours} & $O(d\sqrt{T\log T})$ & w.h.p. & \\
 \hline\hline
 \cite{ruan2021linear}& $O(d\sqrt{T\log (d) \log (T)}\log\log T)$ & exp & batch learning with\\
\textbf{Ours}& $O(d\sqrt{T\log (T)}\log\log T)$ & w.h.p. & $O(\log\log T)$ batches\\
\hline\hline
 \cite{foster2020adapting}& $O(d\sqrt{T}\log T+\epsilon \sqrt{d}T)$ & exp & misspecified\\
\textbf{Ours}& $O(d\sqrt{T\log T}+\epsilon\sqrt{d}T\log T)$ & w.h.p. &\\
\hline\hline
 \cite{foster2020adapting}& $\tilde{O}(d^{4.5}\sqrt{T}+d^4C)$ & w.h.p. & adversarial corruption\\
\textbf{Ours}& $\tilde{O}(d\sqrt{T}+d^{3/2}C)$ & w.h.p. &\\
\hline\hline
 \cite{abbasi2012online}& $O(\sqrt{dsT}\log T)$ & w.h.p. & sparse\\
\textbf{Ours}& $O(\sqrt{dsT\log T})$ & w.h.p. &\\
\hline\hline
\textbf{Ours}&$O(\sqrt{dsT\log T}\log \log T)$&w.h.p.&sparse with $O(\log\log T)$ batches\\
\hline
\end{tabular}
\caption{Comparison of best known in literature vs. our approach regret bounds. Here, {$d$ is the model dimension, $T$ is the time horizon, $\epsilon$ is an upper bound on the amount of misspecification, $C$ limits the power of adversary,  $s$ is an upper bound on the number of non-zero elements in the unknown parameter, exp indicates a regret bound in expectation,  w.h.p. indicates a regret bound that holds with probability at least $1-1/T$, and $\tilde{O}$ hides $\log$ factors.
}}\label{table:res}
 \end{table*}
Linear bandit and contextual linear bandit problems are attracting extensive attention  - for example, more than 
$17,000$ papers appear when searching for  ``linear contextual bandit''  on Google Scholar during the last 5 years
- as they enable to support impactful active learning applications through elegant formulations. 
In linear bandits, a learner at  each time $t\in [T]$, where $T$ is the time horizon,  pulls an arm $a_t$ from a fixed action space $\calA$ (that may be continuous or discrete),   and receives a reward $r_t=\adot {a_t,\thetas} + \eta_t$, where $\thetas$ is an unknown $d$-dimensional vector of parameters and $\eta_t$ is random noise. 
Contextual linear bandits add another layer of complexity by enabling at each round the action space to be different to capture context; in this case, the learner at time $t$ observes an action space (context) $\calA_t$. 
That is, we can think of linear bandits as single-context contextual bandits, observing $\calA_t=\calA$  for all $t$.  
For example, while linear bandits are used in recommendation systems where the set of actions is fixed and
oblivious to the individual the recommendation is addressed to, contextual linear bandits are used in personalized recommendations, where the action space gets tailored to context attributes such as age, gender, income and interests of each individual.  
 
It is not surprising that, although more limited in applications, linear bandits are much better understood in theory than contextual linear bandits.
Indeed, algorithms for linear bandits often leverage the fixed action space property, 
and cannot be easily extended to contextual linear bandits. 
To give a concrete example, the algorithm Phased Elimination (PE) \cite{lattimore2020learning, valko2014spectral} leverages the fixed action space by exploring a (small) core set of actions   to achieve good estimates of the rewards for all actions. This algorithm achieves a high probability regret  bound of $O(d\sqrt{T\log T})$. 
Nevertheless, despite several attempts over the last decade \cite{abbasi2011improved, li2021tight, li2019nearly}, the best known regret upper bounds for contextual linear bandits have a $\log$ (or iterated $\log$) multiplicative gap over the $O(d\sqrt{T\log T})$ bound both in high probability and in expectation. Similarly,
the best known algorithms for several linear bandits problems (e.g., with misspecification, adversarial corruption, and others \cite{lattimore2020learning, foster2020adapting, bogunovic2021stochastic, wei2022model, ruan2021linear}), perform better (in the worst-case) than the corresponding algorithms for contextual linear bandits.
\subsection{Our Results}

\subsubsection*{The Reduction}
We show in this paper the surprising result that, provided the context comes from a distribution $\calD$ (stochastic context), 
contextual linear bandit problems can be reduced to solving  (single context) linear bandit problem when the context distribution $\calD$ is known, 
and to linear bandits with $\tilde{O}(1/\sqrt{T})$ misspecification when the distribution $\calD$  is unknown. These results are presented in the following informal theorems and their exact statements are given in Theorems~\ref{thm:reduction}, \ref{thm:main} and \ref{thm:product}.

\begin{theorem*}[Informal Statement of Theorem~\ref{thm:reduction}]
For any contextual linear bandit instance $I$ with known context distribution $\calD$, there exists (constructively) a linear bandit instance L with the same action dimension, and any algorithm solving L
solves $I$ with the same worst-case regret bound as L.
\end{theorem*}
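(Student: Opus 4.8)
The plan is to build $L$ explicitly from the policy structure of $I$. A (possibly randomized) policy $\pi$ for the contextual instance maps a context $\calA$ to an action $\pi(\calA)\in\calA$, and its expected per-round reward is $\bbE_{\calA\sim\calD}\adot{\pi(\calA),\thetas}=\adot{b_\pi,\thetas}$, where $b_\pi:=\bbE_{\calA\sim\calD}[\pi(\calA)]\in\bbR^d$ depends only on $\pi$ and $\calD$ --- not on $\thetas$. So I would take the action set of the linear bandit instance $L$ to be $\calA_L:=\{\,b_\pi:\pi\text{ a policy}\,\}\subseteq\bbR^d$; it lives in the same dimension $d$, it is convex (closed under mixing policies), every point comes with a known ``section'' (a realizing policy, computable from $\calD$), and $\max_{b\in\calA_L}\adot{b,\thetas}=\bbE_{\calA\sim\calD}\max_{a\in\calA}\adot{a,\thetas}$ equals the optimal per-round reward of $I$, attained by the greedy policy. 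Thus $L$ and $I$ have the same value, and linear rewards on $\calA_L$ track expected policy rewards on $I$.

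Given any linear bandit algorithm $\mathrm{Alg}_L$ for $L$, the wrapped contextual algorithm would, at round $t$: read the action $b_t\in\calA_L$ that $\mathrm{Alg}_L$ wishes to play, select a policy $\pi_t$ with $b_{\pi_t}=b_t$ (here knowledge of $\calD$ is used), observe the context $\calA_t\sim\calD$, play $a_t=\pi_t(\calA_t)$, observe $r_t=\adot{a_t,\thetas}+\eta_t$, and return $r_t$ to $\mathrm{Alg}_L$ as the reward of $b_t$. The step to check is that $\mathrm{Alg}_L$ then faces a bona fide linear bandit environment: since $\calA_t$ is drawn fresh from $\calD$ and $\pi_t$ is $\calF_{t-1}$-measurable, $\bbE[r_t\mid\calF_{t-1}]=\adot{\bbE[\pi_t(\calA_t)\mid\calF_{t-1}],\thetas}=\adot{b_t,\thetas}$, so $\tilde\eta_t:=r_t-\adot{b_t,\thetas}$ is a martingale-difference noise; and since rewards and the extra term $\adot{\pi_t(\calA_t)-b_t,\thetas}$ are bounded, $\tilde\eta_t$ is sub-Gaussian with a parameter inflated only by a constant, so $\mathrm{Alg}_L$ can be run with this constant-factor larger noise level and its regret guarantee still holds.

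It then remains to transfer the regret bound. The pseudo-regret $\mathrm{Alg}_L$ controls on $L$ is $R_L=\sum_t\big(\max_{b\in\calA_L}\adot{b,\thetas}-\adot{b_t,\thetas}\big)$, while the contextual regret incurred on $I$ is $R_I=\sum_t\big(\max_{a\in\calA_t}\adot{a,\thetas}-\adot{\pi_t(\calA_t),\thetas}\big)$. By the first paragraph, conditional on $\calF_{t-1}$ the $t$-th summand of $R_I$ has the same mean as the $t$-th summand of $R_L$, so $R_I-R_L$ is a sum of bounded martingale differences, and Azuma--Hoeffding gives $R_I\le R_L+O(\sqrt{T\log(1/\delta)})$ with probability $1-\delta$. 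Every linear bandit regret bound one would want to reduce to is $\Omega(\sqrt{T})$ up to logs, so this additive term is absorbed and $\mathrm{Alg}_L$'s worst-case bound carries over to $I$.

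I expect the delicate points to be, first, the measurable-selection and well-definedness issues around $\calA_L$ and the section map $b\mapsto\pi_b$ (and, if one tracks computation, estimating $b_\pi$ or optimizing linear functionals over $\calA_L$ through $\calD$), and, second, stating the noise model precisely enough that the high-probability guarantee of $\mathrm{Alg}_L$ applies verbatim to the wrapped environment; the regret transfer itself is the short martingale computation above.
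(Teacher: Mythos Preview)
Your proposal is correct and the core mechanics---linearity of expected policy reward in $\thetas$, the wrapped reward being an unbiased linear observation with bounded (hence sub-Gaussian) noise, and the Azuma--Hoeffding transfer of regret---match the paper's proof exactly.

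The one substantive difference is the choice of action set for $L$. You take $\calA_L=\{b_\pi:\pi\text{ any policy}\}$, the full (convex) set of expected-action vectors. The paper instead restricts to \emph{greedy} policies only: $\calX=\{g(\theta):\theta\in\Theta\}$ with $g(\theta)=\bbE_{\calA\sim\calD}[\arg\max_{a\in\calA}\adot{a,\theta}]$. Since for every $\theta'\in\Theta$ one has $\max_{b\in\calA_L}\adot{b,\theta'}=\bbE[\max_{a\in\calA}\adot{a,\theta'}]=\adot{g(\theta'),\theta'}=\max_{x\in\calX}\adot{x,\theta'}$, the two sets have the same support function, so $\calA_L$ is (the closed convex hull of) $\calX$ and the resulting linear bandit instances are equivalent for regret purposes. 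Your formulation is cleaner conceptually; the paper's buys two things. First, it sidesteps the section-map issue you flag: an action in $\calX$ \emph{is} a label $\theta$, so the wrapper simply plays $\arg\max_{a\in\calA_t}\adot{a,\theta_t}$---no need to invert $b\mapsto\pi_b$. Second, and more importantly for the rest of the paper, the parameterization by $\Theta$ is what makes the unknown-$\calD$ case tractable: one discretizes $\Theta$ to a finite $1/T$-net $\Theta'$ and estimates $g$ only on $\Theta'$, whereas discretizing or estimating over the full policy space $\calA_L$ would be unwieldy.
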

\vspace*{-0.1cm}
\begin{theorem*}[Informal Statement of Theorem~\ref{thm:main}]
For any contextual linear bandit instance $I$ with unknown context distribution $\calD$, there exist (constructively) $\log T$ misspecified linear bandit instances $L_1,...,L_{\log T}$, where $L_i$ operates on part of the horizon of length $T_i$, has $\tilde{O}(1/\sqrt{T_i})$-misspecification and the same action dimension, and any algorithm solving $L_1,...,L_{\log T}$ 
solves $I$ with the same worst-case regret bound as $L_1,...,L_{\log T}$.
\end{theorem*}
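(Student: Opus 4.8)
The plan is to bootstrap the known‑distribution reduction of Theorem~\ref{thm:reduction} by replacing the true context distribution $\calD$ with data‑driven approximations on a geometric grid of phases. Recall that Theorem~\ref{thm:reduction} turns $I$ into a linear bandit whose arms are the mean feature vectors $\bar a_\pi:=\bbE_{\calA\sim\calD}[\pi(\calA)]\in\bbR^d$ ranging over greedy policies $\pi$, that pulling arm $\bar a_\pi$ amounts to playing $\pi(\calA_t)$ in $I$, and that the observed reward $\adot{\pi(\calA_t),\thetas}+\eta_t$ is an unbiased, bounded‑noise estimate of $\adot{\bar a_\pi,\thetas}$. When $\calD$ is unknown we cannot form the $\bar a_\pi$ exactly, and the whole difficulty is to show that estimating them from observed contexts injects only $\tilde O(1/\sqrt{T})$‑scale misspecification.

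First I would fix the schedule: partition $[T]$ into $N=\lceil\log_2 T\rceil$ consecutive blocks of lengths $T_1\le T_2\le\cdots\le T_N$ with $T_{i+1}\le 2T_i$, so that the number of contexts observed strictly before block $i$ is $m_i=\sum_{j<i}T_j\ge T_i/c$ for an absolute constant $c$ (the first block is handled by trivial exploration at $O(\sqrt{T_1})$ cost). At the start of block $i$ I build the empirical context distribution $\hat\calD_i$ from those $m_i$ i.i.d.\ contexts and set $\hat a_\pi^{(i)}:=\bbE_{\calA\sim\hat\calD_i}[\pi(\calA)]$. The misspecified instance $L_i$ is then the linear bandit with dimension $d$, arm set $\{\hat a_\pi^{(i)}\}$, and horizon $T_i$; whenever the algorithm running on $L_i$ pulls $\hat a_\pi^{(i)}$ at a round $t$ of block $i$ we play $\pi(\calA_t)$ in $I$ and feed the reward back to $L_i$.

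Next I would certify that $L_i$ is an $\epsilon_i$‑misspecified linear bandit with $\epsilon_i=\tilde O(1/\sqrt{T_i})$. The reward collected when pulling $\hat a_\pi^{(i)}$ has conditional mean $\adot{\bar a_\pi,\thetas}=\adot{\hat a_\pi^{(i)},\thetas}+\adot{\bar a_\pi-\hat a_\pi^{(i)},\thetas}$, so the model bias is at most $\|\bar a_\pi-\hat a_\pi^{(i)}\|_2\,\|\thetas\|_2$. Since $\hat a_\pi^{(i)}-\bar a_\pi$ is an average of $m_i$ i.i.d.\ bounded vectors, a vector Hoeffding/Bernstein bound gives $\|\hat a_\pi^{(i)}-\bar a_\pi\|_2=\tilde O(1/\sqrt{m_i})=\tilde O(1/\sqrt{T_i})$ for a fixed $\pi$ — but the pulls of the $L_i$‑algorithm are data‑dependent, so this estimate must be made uniform over all arms, and that is the anticipated main obstacle. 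I would handle it either by a union bound over a net of the effectively $d$‑dimensional family of greedy policies $\calA\mapsto\arg\max_{a\in\calA}\adot{a,\theta}$, or, following the argument behind Theorem~\ref{thm:reduction}, by controlling the operator $\pi\mapsto\hat a_\pi^{(i)}$ uniformly; the logarithmic overhead hidden in $\epsilon_i$ comes precisely from this covering step and from the chosen confidence level. I would also note that the residual per‑round randomness $\pi(\calA_t)-\bar a_\pi$ is zero‑mean, bounded, and independent across $t$, hence absorbed into the sub‑Gaussian noise of $L_i$, and that using only pre‑block contexts makes the bias $\adot{\bar a_\pi-\hat a_\pi^{(i)},\thetas}$ measurable at the start of block $i$, so $L_i$ is a genuine stochastic misspecified instance over its $T_i$ rounds.

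Finally I would assemble the regret. The per‑round value of the optimal contextual policy $\pi^\star$ of $I$ equals $\adot{\bar a_{\pi^\star},\thetas}$, the best‑arm value of the true linear bandit of Theorem~\ref{thm:reduction}; the best arm of $L_i$ has true value within $2\epsilon_i$ of it, so the regret incurred by $I$ during block $i$ is at most the regret of the misspecified‑linear‑bandit algorithm on $L_i$ (which by definition already absorbs the $\tilde O(\epsilon_i\sqrt d\,T_i)$ misspecification penalty) plus an extra $O(\epsilon_i T_i)$ from comparing the two optima. Summing over the $N=O(\log T)$ blocks, using $\sum_i\epsilon_i T_i=\tilde O\big(\sum_i\sqrt{T_i}\big)=\tilde O(\sqrt T)$, and adding an $O(1)$ term for the failure event at confidence $1/T$, yields $\mathrm{Regret}_I(T)\le\sum_{i=1}^{N}\mathrm{Regret}_{L_i}(T_i)+\tilde O(\sqrt T)$, so the worst‑case guarantee for $I$ matches that of $L_1,\dots,L_{\log T}$; in particular, since $\epsilon_i=\tilde O(1/\sqrt{T_i})$, a per‑block bound $\tilde O(d\sqrt{T_i})$ sums to $\tilde O(d\sqrt T)$ overall.
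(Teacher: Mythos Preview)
Your proposal follows essentially the same route as the paper: geometric phases, empirical estimation of $g(\theta)=\bbE_{\calA\sim\calD}[\arg\max_{a\in\calA}\adot{a,\theta}]$ from previously observed contexts, absorption of the per‑round residual $\pi(\calA_t)-\bar a_\pi$ into the sub‑Gaussian noise (exactly as in Proposition~\ref{prop:rewrite-reward}), and a union bound over a net of $\Theta$ to make the misspecification uniform. The regret assembly via $\sum_i\epsilon_iT_i=\tilde O(\sqrt{T})$ also matches.

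The one place where you are vague and the paper does something non‑trivial is the uniformity/discretization step. A union bound only controls $\adot{\hat a_\pi^{(i)}-\bar a_\pi,\thetas}$ at net points, and you \emph{cannot} extend to all $\theta$ by continuity because $\theta\mapsto g(\theta)$ is in general discontinuous (the argmax jumps). The paper resolves this by \emph{restricting} the action set of $L_m$ to the finite net $\Theta'$ itself, and then separately proving that this restriction costs at most $2/T$ in the optimum. That last step is not the ``within $2\epsilon_i$'' you assert: $\epsilon_i$ controls $\adot{g-g^{(i)},\thetas}$, not $\adot{g(\thetas)-g(\phi),\thetas}$ for a net point $\phi$. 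The paper's argument instead exploits the identity $\adot{g(\theta'),\theta'}=\max_{\theta\in\Theta}\adot{g(\theta),\theta'}$ to show that for the net point $\phi$ closest to $\thetas$ one has $\adot{g(\phi),\thetas}\ge\adot{g(\phi),\phi}-1/T\ge\adot{g(\thetas),\phi}-1/T\ge\adot{g(\thetas),\thetas}-2/T$; in words, while $g$ is not Lipschitz, $r(\theta)=\adot{g(\theta),\thetas}$ is well‑behaved near $\thetas$. To close your argument you should commit to the restricted action set and supply this step.
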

\begin{theorem*}[Informal Statement of Theorem~\ref{thm:product}]
    For any contextual linear bandit instance $I$ with unknown context distribution $\calD$ but where now the action space has a component-wise product structure, there exists (constructively) a linear bandit instance L with double the action dimension of $I$, and any algorithm solving L
solves $I$ with the same worst-case regret bound as L.
\end{theorem*}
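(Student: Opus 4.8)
The plan is to exploit the component-wise product structure to build an explicit feature map that "averages out" the context randomness, in the spirit of the known-distribution reduction (Theorem~\ref{thm:reduction}), but without knowing $\calD$. Suppose each action $a \in \calA_t$ decomposes as a coordinate-wise product $a = u \odot v$ with $u$ ranging over a set determined by the context and $v$ over a fixed set (this is what "component-wise product structure" should mean precisely in the formal statement). The reward is $\adot{u\odot v,\thetas} = \sum_j u_j v_j \thetas^{(j)}$, which is bilinear in $(u,v)$. The key observation is that a bilinear form in $(u,v)$ is a \emph{linear} form in the lifted vector $u\otimes v \in \bbR^{d^2}$ — but we want dimension $2d$, not $d^2$, so the product structure must be further constrained (e.g. $v$ shared across coordinates, or a rank-one type structure) so that the lift lands in a $2d$-dimensional subspace. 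I would first pin down exactly this structure, then define the linear bandit instance $L$ on action set $\{\phi(a) : a \in \bigcup_t \calA_t\} \subseteq \bbR^{2d}$ with unknown parameter $\Theta_\star$ built from $\thetas$, so that $\adot{\phi(a),\Theta_\star} = \adot{a,\thetas}$ exactly.

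Next I would verify the reduction is \emph{regret-preserving}: because $\phi$ is a fixed (context-independent) map and $\adot{\phi(a),\Theta_\star}$ equals the true mean reward, the optimal action in $L$ at each round — restricted to the realized context's image under $\phi$ — coincides with the optimal action in $I$, so per-round regret is identical. The crucial point, and the reason we can avoid any misspecification (unlike Theorem~\ref{thm:main}), is that this identity is \emph{exact and distribution-free}: the product structure lets us fold the context dependence into the action coordinates themselves rather than into an expectation over $\calD$ that would need to be estimated. Then any linear bandit algorithm run on $L$, treating the realized $\phi(\calA_t)$ as its action set (which is legitimate because $L$ is still formally a single linear-parameter problem of dimension $2d$), inherits its worst-case regret bound, and we transfer that bound back to $I$.

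The main obstacle — and the step I expect to need the most care — is the dimension bookkeeping: a naive bilinear lift costs $d^2$, and getting down to $2d$ requires that the product structure be genuinely restrictive (I anticipate something like: one factor lies in a $d$-dimensional context-dependent set and the other in a fixed $d$-dimensional set, with the reward separating as $\adot{u,\alpha_\star} + \adot{v,\beta_\star}$ after a logarithmic or additive reparametrization, or a rank-one constraint making $u\otimes v$ effectively $2d$-parameterized). Establishing that the formal hypothesis of the theorem delivers exactly such a separation, and that the resulting action set in $\bbR^{2d}$ is bounded so the linear bandit regret bound applies with the right constants, is where the real work lies; the regret transfer itself is then immediate from the exactness of $\phi$. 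A secondary technical point is handling boundedness/normalization of $\phi(a)$ and of $\Theta_\star$ so that the standard linear bandit assumptions (bounded actions, bounded parameter norm) hold for $L$ with only a constant-factor loss.
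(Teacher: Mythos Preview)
Your proposal rests on a misreading of the hypothesis, and this derails the entire approach. ``Component-wise product structure'' in the paper (Assumption~\ref{assump:prod-set}) means the action \emph{set} is a Cartesian product $\calA_t = \prod_{i=1}^d \calA_t^{(i)}$ with $\calA_t^{(i)}\subset\bbR$; it is a structural constraint on $\calA_t$, not a factorization $a=u\odot v$ of individual actions. Consequently your bilinear-lift / tensor-product machinery, and the whole worry about $d^2$ versus $2d$, is aimed at the wrong object.

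The actual mechanism is this: with a Cartesian-product action set, the optimal action for a given $\theta$ is chosen coordinate-wise, and coordinate $i$ of $\arg\max_{a\in\calA_t}\adot{a,\theta}$ depends only on the \emph{sign} of $\theta_i$ (take the max of $\calA_t^{(i)}$ if $\theta_i\ge 0$, the min otherwise). Hence $g(\theta)_i$ equals either $\bbE[\max_{a\in\calA}(a)_i]$ or $\bbE[\min_{a\in\calA}(a)_i]$ according to $\mathrm{sign}(\theta_i)$. One then encodes the sign pattern of $\theta$ as a binary vector $a'(\theta)\in\{0,1\}^{2d}$ with $a'_{2i-1}+a'_{2i}=1$, and defines $\thetas'\in\bbR^{2d}$ by absorbing the unknown quantities $\bbE[\max(a)_i]\,(\thetas)_i$ and $\bbE[\min(a)_i]\,(\thetas)_i$ into its coordinates, so that $\adot{g(\theta),\thetas}=\adot{a'(\theta),\thetas'}$ exactly. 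The linear bandit instance $L$ has the \emph{fixed} action set $\calX=\{a'\in\{0,1\}^{2d}:a'_{2i-1}+a'_{2i}=1\}$ --- not a context-dependent set $\phi(\calA_t)$ as you suggest --- and the reduction of Theorem~\ref{thm:reduction} (with Proposition~\ref{prop:rewrite-reward}) completes the argument. The point you correctly intuited, that the distribution can be ``folded away'' without estimation, is realized not by a feature lift of actions but by pushing the $2d$ unknown expectations into the unknown parameter.
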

Stochastic contextual bandits encompass practical cases where the context is not selected adversarially;
in our example of personalized recommendations, the age, gender, income, come from distributions.
Our framework simplifies the contextual linear bandit problem and enables to use any existing (or future) linear bandit algorithms to solve the contextual case. Moreover, our results separate the case of stochastic contexts from the harder case of adversarial contexts and explain why good results, which are not achievable for adversarial contexts, are possible for stochastic contexts.

\subsubsection*{Implications and Related Work}
The equivalence we proved opens up a new way to approach stochastic contextual linear bandit problems, and results in a number of new results or recovery of existing results in a simpler manner; we next present some of these implications (summarized in Table~\ref{table:res}) and their positioning with respect to related work. In the discussion next,  $d$ is the model dimension and $T$ is the time horizon.\\

\noindent{\bf $\bullet$ Tighter Regret Bounds.} To the best of our knowledge, there is a gap in the regret bounds of contextual linear bandits: the state-of-the-art lower bounds are, $\Omega(d\sqrt{T})$ for linear bandits \cite{lattimore2020bandit} and $\Omega(d\sqrt{T\log T})$ for linear contextual bandit with adversarial contexts \cite{li2019nearly}.\\ 
\underline{\bf  Our contribution  [Corollary~\ref{cor:implication-tighter}].} Our approach achieves a regret upper bound  $O(d\sqrt{T\log T})$ with high probability even when the action set is infinite. While it is not known  if a $\Omega(d\sqrt{T\log T})$ lower bound holds for stochastic contexts, our result improves over state of the art high probability bounds by at least a factor of $\sqrt{\log T}$ and matches the best known upper bound for linear bandits. \\
\underline{\bf  Related Work.}
The best attempts of upper bounds are \cite{abbasi2011improved, li2021tight, li2019nearly}.
In particular, \cite{abbasi2011improved} achieves a regret bound of $O(d\sqrt{T}\log T)$ with high probability;  \cite{li2021tight} achieves  $O(d\sqrt{T\log T}poly(\log\log T))$ in expectation and $O(d\sqrt{T}\log T poly(\log\log T))$ with high probability; and  \cite{li2019nearly} achieves a regret bound of \\ $O(d\sqrt{T\log T}poly(\log\log(T)))$ \emph{in expectation} and only when the number of actions is \emph{finite and bounded by $2^{d/2}$}.\\

\noindent{\bf $\bullet$ Batch Learning.}
In batch learning, instead of observing the reward at the end of each round to decide what action to play next, the learning agent is constrained to split the rounds  into a fixed number of $M$  batches, use a predetermined policy within each batch, and it can only observe the action outcomes and switch the policy at the end of each batch.  
This is a central problem in online learning \cite{gao2019batched,perchet2016batched,ruan2021linear, han2020sequential} as limited policy adaptivity enables parallelism and facilitates deployment of learning algorithms for large-scale models. \\
\underline{\bf  Our Contribution [Theorem~\ref{lem:batch}].} Our batch algorithm for contextual linear bandits achieves with \emph{high probability}  $O(d\sqrt{T\log (T)}\log\log T)$ regret bound for $O(\log \log T)$ batches, a $\sqrt{\log d}$ better than the \emph{in expectation} regret bound in \cite{ruan2021linear}.\\
\underline{\bf Related Work.} A number of works have explored batched contextual linear bandits,
both in adversarial \cite{abbasi2011improved, han2020sequential} and non-adversarial settings \cite{ruan2021linear, zhang2021almost}. 
The breakthrough work in \cite{ruan2021linear} achieved a nearly optimal \emph{in expectation} regret upper bound  $O(d\sqrt{T\log d \log (T)}\log\log T)$  using $O(\log \log T)$ batches for the stochastic contexts setting. {The near-optimality follows 
from the result in~\cite{gao2019batched} which shows that $\Omega(\log \log T)$ batches are required to achieve a $O(\sqrt{T})$ worst-case regret bound for multi-armed bandits with a finite number of arms (this is a special case of contextual linear bandits). If the contexts are chosen adversarially, $\Omega(\sqrt{T})$ batches are required to achieve a $O(\sqrt{T})$  regret bound \cite{han2020sequential}.\\

\noindent{\bf $\bullet$ Misspecified Bandits.} 
Linear bandit algorithms are designed under the assumption that the expected rewards are perfectly linear functions of the actions; misspecified bandits relax this assumption by considering perturbations of the linear model measured by the amount of deviation in the expected rewards (we call the case $\epsilon$-misspecification if the deviation is upper bounded by $\epsilon$).
The non-linearity in the model enables to better capture real-world environments and is of high interest in the literature \cite{du2019good,crammer2013multiclass,lattimore2020learning, foster2020adapting, ghosh2017misspecified, foster2020beyond}.\\
\underline{\bf  Our Contribution [Theorem \ref{thm:mis}, \ref{thm:mis-unknown}].} We provide a regret bound of $O(d\sqrt{T\log T}+\epsilon\sqrt{d}T\log T)$ with high probability for contextual  bandits with unknown misspecification, and $O(d\sqrt{T\log T}+\epsilon\sqrt{d}T)$ with high probability for $\epsilon$ known.
To the best of our knowledge, these results offer the first \emph{optimal regret bounds} for 
$\epsilon$-misspecified contextual linear bandits, and improve over existing literature for unknown $\epsilon$ by providing high probability bounds and improved $\log$ factors. We also present the first nearly optimal algorithm for misspecified contextual linear bandits with $O(\log T)$ batches.\\
 \underline{\bf Related Work.} The work in \cite{lattimore2020learning} shows that PE with modified confidence intervals achieves the optimal regret bound of $O(d\sqrt{T\log T}+\epsilon \sqrt{d}T)$ with high probability (matching the $\Omega(\epsilon\sqrt{d}T)$ lower bound \cite{lattimore2020learning}) for linear bandits with known $\epsilon$ misspecification. If $\epsilon$ is unknown, the same algorithm was shown to achieve a regret bound of $O(d\sqrt{T\log T}+\epsilon \sqrt{d}T\log T)$ with high probability. The work in \cite{zanette2020learning, lattimore2020learning} proposed variants of LinUCB that achieve a regret bound of $\tilde{O}(d\sqrt{T}+\epsilon\sqrt{d}T)$\footnote{ Here $\tilde{O}$ hides $\log$ factors.} with high probability for contextual linear bandits with known $\epsilon$. However,  changing action sets with unknown $\epsilon$ was left as an open problem. The works in \cite{pacchiano2020model,foster2020adapting} made progress in answering this question by providing a regret bound of $O(d\sqrt{T}\log T+\epsilon \sqrt{d}T)$ in expectation \cite{foster2020adapting}. However, improving the $\log$ factors in the first term and strengthening the result to a high probability bound was unresolved. Due to the challenge that changing action sets impose on the analysis, the techniques of \cite{foster2020adapting} cannot be directly extended to provide high probability bounds on the regret. Our result of $O(d\sqrt{T\log T}+\epsilon\sqrt{d}T\log T)$ regret  removes the $\sqrt{\log T}$ factor from the first term and adds a $\log T$ factor in the second term; depending on the value of $\epsilon$ this can lead to a tighter or looser bound.\\


\noindent{\bf $\bullet$ Bandits with Adversarial Corruption.} Linear bandits with adversarial corruption recently attracted significant interest \cite{lykouris2021corruption, chen2021improved, jin2021best, li2019stochastic, bogunovic2020corruption, bogunovic2021stochastic, lee2021achieving, wei2022model} due to the vulnerability of online learning applications to attacks. There are multiple adversarial models that are proposed in the literature; here we consider a widely used model that assumes the adversary  knows the policy, and observes the history, but does not observe the current action before corrupting the rewards. \\
\underline{\bf  Our Contribution [Theorem~\ref{thm:ad}].} Our work provides the first algorithm for contextual linear bandits that achieves a regret bound of $\tilde{O}(d\sqrt{T}+d^{3/2}C)$ with high probability for unknown $C$, which upper bounds the total amount of corruption from the adversary. This 
improves over the best known $\tilde{O}(d^{4.5}\sqrt{T}+d^4C)$ bound for linear bandits with changing action sets in \cite{wei2022model}. We note that in our regret bound, while the dependency on $d$ in the first term is nearly optimal, the dependency on $d^{3/2}$  in the second term is not. However, we simplify the problem of improving this dependency, as any algorithm that improves it for linear bandits will imply the same improvement for contextual  bandits with our reduction.\\
\underline{\bf Related Work.} 
The work in \cite{bogunovic2021stochastic} considers linear bandits with adversarial corruption and achieves a regret bound of $\tilde{O}(d\sqrt{T}+d^{3/2}C)$  with high probability for known corruption level $C$ and a regret bound of $\tilde{O}(d\sqrt{T}+d^{3/2}C+C^2)$ with high probability for a stronger adversary that observes the current action and unknown corruption level $C$, while the work in \cite{wei2022model} achieves a regret bound of $\tilde{O}(d\sqrt{T}+d^{3/2}C)$ with high probability for unknown corruption $C$. These algorithms have optimal dependency on $T,C$ but it is not known if the $d^{3/2}$ dependency is tight or not. The work in \cite{lee2021achieving} improves the dependency on $d$ for linearized corruption, by achieving a regret bound of $\tilde{O}(d\sqrt{T}+C)$ with high probability which is also nearly optimal \cite{lee2021achieving}. The proposed algorithms and analysis rely on the assumption that the action set is fixed, and as far as we know there are no known generalizations for changing action sets - beyond the work in \cite{bogunovic2021stochastic} that considers changing action sets but imposes a strong assumption on the context distribution. The work in \cite{he2022nearly} considers a stronger adversary, that observes the current action, and achieves an $\tilde{O}(d\sqrt{T})$ regret bound for unknown $C\leq \sqrt{T}$ and linear regret otherwise. The first paper to prove a regret bound with nearly optimal dependency on $T,C$ for linear bandits with changing action sets (linear contextual bandits) is \cite{wei2022model} which achieves a regret bound of $\tilde{O}(d^{4.5}\sqrt{T}+d^4C)$ with high probability for unknown $C$. While the dependency on $T,C$  is nearly optimal, the dependency on $d$ is clearly not - improving this was left as an open problem. Our results take a step in this direction by removing a factor of $d^{3.5}$ from the first term, by removing a factor of $d^{2.5}$ from the second term and by reducing the problem of further improving the dependency of $d$ in the second term to achieving this improvement over a linear bandit setup.\\

\noindent{\bf $\bullet$ Sparsity.} 
 High dimensional linear bandits with sparsity capture practical cases such as, when there exist a large number of candidate features and limited information on which of them are useful; use cases include personalized medicine and online advertising \cite{bastani2020online}.
 This setup results in sparsity in the unknown linear bandit parameters, which can be leveraged for more efficient learning. \\
\underline{\bf  Our Contribution [Theorem~\ref{thm:sp}].} Our work provides the first $O(\sqrt{dsT\log T})$ regret bound with high probability for contextual linear bandits, improving a factor of $\sqrt{\log T}$ on the state of the art, where $s$ is an upper bound of the nonzero elements in the model.\\
\underline{\bf Related Work.} 
Due to its practical significance, a number of works have examined this setup  \cite{lattimore2015linear, abbasi2012online, gerchinovitz2011sparsity, carpentier2012bandit, chen2022nearly, dai2022variance, jang2022popart, hao2021information, hao2020high}. 
To the best of our knowledge, the best known regret bound is $O(\sqrt{dsT}\log T)$ with high probability \cite{abbasi2012online}. 
While this is shown to be nearly optimal \cite{lattimore2020bandit}, improving the $\log T$ factor was left as an open problem. Our work resolves this. 

\subsection{Technical Overview} 
Our major technique innovation is in the conception and execution of a reduction from a stochastic contextual linear bandit instance to a linear bandit instance. 
This reduction is made possible by establishing a linear bandit action for each possible parameter $\theta$ of the contextual bandit instance.
In particular, for a given $\theta$, we establish a new action $g(\theta)$ that is the expected best action (under the distribution of the context $\calA_t$) with respect to the parameter $\theta$, i.e., $g(\theta) = \mathbb{E}_{\calD}(\arg\max_{\tilde{a}\in\calA_t}\langle \tilde{a}, \theta\rangle)$.
Note that $g(\theta)$ may not correspond to any valid action for the present context in the corresponding contextual bandit instance. 
Yet, we show that if one plays an action $\arg\max_{\tilde{a}\in\calA_t}\langle \tilde{a}, \theta\rangle$ that is optimal with respect to $\theta$, then the contextual bandit instance generates a \emph{linear reward} with respect to $g(\theta)$. Moreover, the linear bandit instances share the same optimal parameter $\thetas$ as in the contextual instance.
Under standard boundedness assumptions of the contexts and actions, the reward noise in the reduced linear instance and the contextual bandit instance also share a similar sub-Gaussian tail.
By mapping the linear bandit action $g(\theta)$ to the contextual bandit $\arg\max_{\tilde{a}\in\calA_t}\langle \tilde{a}, \theta\rangle$ for any context $\calA_t$ and $\theta$,  any algorithm for the linear bandit problem can be immediately applied to solve for the contextual bandit problem and suffer no additional regret in the worst-case.

The reduction becomes more challenging when the context distribution is unknown. One idea is to estimate $g(\theta)$ for all possible $\theta$. Unfortunately, doing so would require a large number of samples resulting in unbounded regret. 
We resolve this issue by a batched approach where the batches provide increasingly better estimates of   $g(\theta)$. 
In each batch, we estimate $g(\theta)$ using all the contexts generated in the previous batch. 
Note that this inevitably introduces error which ruins the linearity of the collected reward from the contextual bandit instance.
Hence, we can only apply algorithms that are designed for misspecified linear bandits. 
Luckily, with a carefully designed batch sequence, we show that a linear bandit algorithm that works for all misspecification levels $\epsilon\in[1/\sqrt{T}, 1]$ can be applied to solve the contextual bandit instance. As it is hard to guarantee a good estimate of $g$ for all $\theta$, we restrict our attention to a finite subset of the unknown parameter set $\Theta$ (discretization) that is guaranteed to contain a good action. The amount of misspecification is bounded using a union bound argument over the discretization of $\Theta$. While the discretization of $\Theta$ may eliminate the optimal arm and the function $g$ is shown to be non-smooth, we show that the function $r(\theta)=\adot{g(\theta),\thetas}$ is smooth on a neighborhood of $\thetas$. This is sufficient to show that if discretized finely, the discrete set will contain a good arm. The final worst-case regret bound can be controlled by the regret bound of the linear bandit algorithm and the batch lengths.

We next provide a high level explanation on why our reduction enables \textbf{tighter regret bounds} for contextual linear bandits with stochastic contexts. Using the Phased Elimination algorithm (PE) \cite{lattimore2020learning} with modified confidence intervals to solve the misspecified linear bandit instance, we prove a high probability bound of $O(d\sqrt{T\log T})$ on the regret of the contextual bandit problem using our reduction. Existing analysis techniques for  changing action sets  suffer $\log$ factors due to several reasons, such as bounding the radius of an ellipsoid that contains the unknown parameter with high probability, bounding the regret by the radius of the ellipsoid, or summing the regret over multiple episodes; in contrast, the $\log$ factors in our analysis appear only from a union bound used to prove concentration of the estimated arm means with high probability.

The reduction also implies improved regret bounds for \textbf{misspecified contextual linear bandits} by leveraging the optimal and nearly optimal regret bounds proved in the literature \cite{lattimore2020learning} for fixed action sets. This results in high probability bounds; as opposed to the best known results for changing action sets that add $\log$ factors and only hold in expectation. As the reduction introduces a small $\tilde{O}(1/\sqrt{T})$ misspecification, this adds to the possibly unknown misspecification $\epsilon$.
There is a little subtlety here; some algorithms adapt to known $\epsilon$ better than the unknown case. To avoid adding extra factors to the regret bounds by sub-optimal adaptation to the known $\tilde{O}(1/\sqrt{T})$ part of misspecification, we slightly modify existing algorithms to account for this. This is done by increasing the number of times an arm is explored by a constant factor.
Similarly, our results allow to carry over the better regret bounds for linear bandits with \textbf{adversarial corruption} to the contextual setting. This is achieved by modifying the algorithms for linear bandits with adversarial corruption to account for the known misspecification added by our reduction.

Our reduction has a byproduct; it limits the size of available actions in each round to the minimum between $|\calA|$ and the size of the  parameter set $\Theta$ (recall that we construct an action $g(\theta)$ for each $\theta\in \Theta$). While in  general  a discretization of size $T^{\Omega(d)}$ is required to guarantee a good action in the discrete set, if the unknown parameter follows some structure, a discretization of smaller size can be sufficient. For example, for contextual linear bandits with $s$-\textbf{sparse unknown parameter}, we show that a discretization of size $T^{O(s)}$ is sufficient. This directly implies a high probability $O(\sqrt{dsT\log T})$ regret bound as opposed to the best known $O(\sqrt{dsT}\log T)$ bound \cite{abbasi2012online} that loses extra $\log$ factors due to solving a linear regression problem over the space of sparse unknown parameters.

As it is enough to make only batch updates to our estimates of the actions $g(\theta)$, our algorithm can be modified to provide improved regret bounds for contextual linear bandits with $O(\log \log T)$ batches. We use batch lengths that were introduced in \cite{gao2019batched} which grow as $T^{1-2^{-m}}$. In each batch, as action set is fixed, we utilize elimination algorithms with the G-optimal design. However, at batch $m$, the gaps of sub-optimal actions depend on the confidence of our estimates in batch $m-1$, which rely on the G-optimal design using the estimates of $g$ from batch $m-2$. As a result, the regret in batch $m$ can be at most the ratio between the length of batch $m$ and batch $m-2$. Considering the growth rate of the batch lengths, this ratio can be large, especially in the first few batches. To fix this, we modify the batch lengths to grow in length only at the batches with odd index, while even batches use the same length as the previous batch. Our algorithm results in an $O(d\sqrt{T\log T}\log \log T)$ regret bound using $O(\log \log T)$ batches, improving a $\sqrt{\log d}$ factor (that appeared due to the distributional G-optimal design proposed in \cite{ruan2021linear}) in the regret bound over the best known result \cite{ruan2021linear}. Our result also provides the first high probability bound under the $O(\log \log T)$ batches limitation. 

As a consequence of the batch learning result and the fact that our reduction can limit the action set based on the size of $\Theta$, we provide the first algorithm with $O(\log \log T)$ batches for the sparse setting with a regret bounded by $O(\sqrt{dsT\log T}\log \log T)$ with high probability, where $s$ is the sparsity parameter.


\subsection{Paper organization} Section~\ref{sec:setup} describes our setup and reviews notation; Section~\ref{sec:red-known-unknown} describes our reduction and main theorems; and Section~\ref{sec:implications} uses our reduction to prove improved regret bounds for a number of stochastic linear bandit problems.

\section{Notation and Models}\label{sec:setup}

\textbf{Notation.} We use the following notation throughout the paper. For a vector $X$ we use $X_i$ to denote the $i$-th element of the vector $X$. The set $\{1,...,i\}$ for $i\in \bbN, i>0$ is denoted by $[i]$, where $\bbN$ is the set of natural numbers. We say that $y=O(f(x))$ if there is $x_0$ and a constant $c$ such that $y\leq cf(x)$ $\forall x>x_0$; we also use $\tilde{O}(f(x))$ to omit $\log$ factors. A $\delta$-net (with respect to norm-2) of a set $\calA\subseteq \bbR^d$ for $\delta>0$ is any set $\mathcal{B}\subseteq \bbR^d$ such that for every $a\in \calA$, there exists $b\in \mathcal{B}$ with $\|a-b\|_2\leq \delta$, where $\bbR$ is the set of real numbers. For a set of sets $\calS$, $\cup \calS$ denotes the union of all elements in $\calS$. For a family of sets $\{\calA_i\}_{i=1}^n$ we use $\prod_{i=1}^n \calA_i=\{(a_1,...,a_n)|a_i\in \calA_i, \; \forall i \in [n]\}$ to denote the product set.

\noindent\textbf{Contextual Linear Bandits.} 
We consider a contextual linear bandit problem, where a learner interacts with an environment over a time horizon of length $T$. 
At time $t\in [T]$, the learner observes a set of admissible arms $\calA_t$ representing the context, pulls an arm $a_t\in \calA_t$, and receives a reward 
\begin{equation} \label{eq_LinB}
r_t=\adot {a_t,\thetas} + \eta_t, 
\end{equation}
where the context $\calA_t$ is generated from a distribution $\calD$, $a_t$ is a function of the history $\calH_t=\{\calA_1, a_1, r_1,..., \calA_t\}$, $\thetas$ is an unknown parameter vector of dimension $d$, and $\eta_t$ is a random noise. Here, the noise $\eta_t$ follows an unknown distribution that satisfies $\bbE [\eta_t|\calF_t]=0$, $\bbE [\exp(\lambda \eta_t)|\calF_t]\leq \exp(\lambda^2/2)$  $\forall \lambda \in \bbR$ (sub-Gaussian), where $\calF_t=\sigma \{\calA_1, a_1, r_1,..., \calA_t, a_t\}$ is the filtration of all historic information up to time $t$, and $\sigma(X)$ is the $\sigma$-algebra generated by $X$. We follow the standard assumptions that $\thetas\in \Theta\subseteq \{\theta|\ \|\theta\|_2\leq 1\}$,   $\|a\|_2\leq 1$ $\forall a\in \calA_t$ and  $\forall t\in [T]$ almost surely. {The learner adopts a policy $\pi$ that maps the history up to time $t$, $(\calA_1, a_1,r_1,...,\calA_{t-1},a_{t-1},r_{t-1},\calA_t)$, to a probability distribution over $\calA_t$; we denote the policy  $\pi(\calA_1, a_1,r_1,...,\calA_{t-1},a_{t-1},r_{t-1},\calA_t)$ at time $t$ by $\pi_t(\calA_t)$.} The goal of the learner is to minimize the regret defined as
\begin{equation}\label{eq:reward-orig}
    R_T = \sum_{t=1}^T \max_{a\in \calA_t} \adot{a,\thetas} - \adot{a_t,\thetas}.
\end{equation}
In the next sections we assume for simplicity that for each $\theta\in \Theta$, there is a unique $a_t\in \calA_t$ that satisfies $\adot{a_t,\theta}=\sup_{a\in \mathcal{A}_t}\adot{a,\theta}$ almost surely. This is to avoid dealing with approximations and choice functions (if there are multiple optimal actions) in the description of our algorithms. However, our results do not need this assumption, please see Apps.~\ref{app:assump}.

\noindent\textbf{Batch Learning.} In this setting, 
the learner is allowed to change the policy $\pi_t$  only at $M$ pre-specified   time slots $1\leq t^{(1)},...,t^{(M)}\leq T$, where $M$ is the number of batches.

\noindent\textbf{Misspecified Linear Bandits.} Here pulling an action $a\in \mathcal{A}_t$  generates a reward perturbed as
\begin{equation}\label{eq:rew-misspec}
    r_t = \adot{a,\thetas}+\eta_t+f(a),
\end{equation}
where $f$ is unknown perturbation function, $\thetas$ is the unknown parameter vector, and $\eta_t$ is a  zero-mean noise that is 1-subgaussian conditioned on the history. The amount of deviation in the rewards is upper bounded by $\epsilon= \sup_{a\in \cup \text{Supp}(\calD)}|f(a)|$, and $\epsilon$ is called the amount of misspecification, where $\calD$ is the context distribution and $\text{Supp}(\calD)$ is the support set of $\calD$.

\noindent\textbf{Adversarial Corruption.} We assume an adversary that operates as follows at each time $t$:\\
 $\bullet$ The adversary observes the history of all past contexts $\calA_1,...,\calA_{t-1}$, actions $a_1,...,a_{t-1}$, rewards $r_1,...,r_{t-1}$, and previously corrupted rewards $\tilde{r}_{1},...,\tilde{r}_{t-1}$, together with the current context $\calA_t$.\\
 $\bullet$ The adversary decides on a corruption function $c_t:\calA_t\to \bbR$ that determines the amount of corruption for each action.\\
  $\bullet$  The learner observes the history of contexts $\calA_1,...,\calA_{t}$, actions $a_1,...,a_{t-1}$, and previously corrupted rewards $\tilde{r}_1,...,\tilde{r}_{t-1}$.\\
$\bullet$ The learner pulls arm $a_t$ and observes reward $\tilde{r}_t = r_t+c_t(a_t)$, where $c_t(a_t)$ is the corruption provided by the adversary.\\
Note that the true reward $r_{t}$ follows the linear bandits model, while the corrupted rewards do not need to.  We assume that the amount of corruption the adversary can inflict is bounded as
\begin{equation}
    \sum_{t=1}^T \sup_{a\in \calA_t} |c_t(a)|\leq C,
\end{equation}
where $C$ is the maximum amount of corruption.

\noindent\textbf{Sparsity.} 
We here assume that the $d$-dimensional parameter vector $\thetas$ in (\ref{eq_LinB}) is sparse, namely  $\|\thetas\|_0\leq s$ for some known $s\in [d]$, where  $\|\thetas\|_0$ denotes the norm-0 or cardinality of the vector $\thetas$.

\section{Reduction from Stochastic Contextual to Linear Bandits }\label{sec:red-known-unknown}
\subsection{Reduction for Known Context Distribution $\calD$}\label{sec:reductions}
We construct a contextual linear bandit algorithm $\calM$ that operates at a high level as follows.
At each time $t$, the learner:\\
 {\bf  Step 1 (plays):}   observes a set of actions $\calA_t$, 
    uses   $\theta_t$ (i.e., the current estimate of $\theta^*$) to decide which action $a_t\in \calA_t$ to play, and observes the associated reward $r_t$;\\
{\bf Step 2 (learns):} calculates $\theta_{t+1}$,  i.e., an updated  estimate of $\theta^*$. \\ 
In our reduction, we use a single-context algorithm $\Lambda$ for learning the parameter $\thetas$ in step 2, i.e., choose $\theta_{t+1}$, and prove that we can achieve the same worst-case regret bound as $\Lambda$.

\paragraph{\bf Fixed action space.} 
We provide to the linear bandit algorithm $\Lambda$  the  fixed action space 
\begin{equation}\label{eq:fixed-actions} 
\calX=\{g(\theta)|\theta \in \Theta\},
\mbox{ where  }    g(\theta) = \bbE_{\calA_t\sim \calD}  [\arg\max_{a\in \calA_t} \adot{a,\theta}|\theta].
\end{equation}
That is,
for each  $\theta$, we create an action $g(\theta)$ that is the expected best action (under the distribution of the context $\calD$) with respect to the parameter $\theta$. 
We illustrate using a simple example.

{\em Example 1.} Assume that we may observe one out of the two following action sets $\{[1],[-1]\},\{[1]\}$ randomly with probability $1/2$. The function $g(\theta):\bbR\to \bbR$ can be calculated as follows
\begin{equation}
    g(\theta)= \left\{
	\begin{array}{ll}
		1  & \mbox{if } \theta \geq 0 \\
		0 & \mbox{if } \theta < 0
	\end{array}
\right.
\end{equation}
and thus $\calX=\{[ 0 ],[ 1 ]\}.$

\paragraph{Reduction Algorithm.} The algorithm $\calM$   proceeds  at each time $t$ as follows:\\ 
    1. The single context algorithm $\Lambda$ decides to play an action $g(\theta_t)\in \calX$, using  the history $\Lambda$ observed. This action is never actually played. 
     Instead, $\calM$ observes what action $g(\theta_t)$ that
    $\Lambda$ selected, and uses the associated $\theta_t$ as its current  estimate of $\thetas$.\\
    2. $\calM$ observes
    $\calA_t$, plays the action
$a_t=\arg\max_{a\in \calA_t}\adot{a,\theta_t}$ and receives reward $r_t$.
It provides this reward to $\Lambda$. \\
    3. $\Lambda$  assumes that the reward $r_t$ it received was generated according to  the linear bandit model
    $r_t = \adot{g(\theta_t),\thetas}+\eta'_t,$
    and adds the action-reward pair $(g(\theta_t),r_t)$ to its history.

Note that the set of actions $\calX$ we created contains actions that may not be part of the original sets $\calA_t$'s; this is fine, since these actions are actually never played - they are used to simulate an environment that enables $\Lambda$ to correctly update its estimate of $\thetas$.  That is,  although all actions played come from the eligible sets $\calA_t$, all learning (updates on $\theta_t$) is derived from the single context algorithm $\Lambda$  that never explicitly learns $\calA_t$.

\begin{theorem}\label{thm:reduction}
     Let $\Lambda$ be any algorithm for linear bandits and $I$ be a contextual linear bandit instance with stochastic contexts, unknown parameter $\thetas$ and rewards $r_t$ generated as described in the reduction algorithm described above. It holds that\\
     $\bullet$ The reward $r_t$ is generated, by pulling the arm $g(\theta_t)$, from a linear bandit instance $L$ with action set $\calX$, and unknown parameter $\thetas$.\\
         $\bullet$ The reduction results in an algorithm $\calM$ for contextual linear bandits such that with probability at least $1-\delta$ we have
     \begin{equation} \label{eq-th1}
         |R_T^\calM(I)-R_T^\Lambda(L)|\leq c\sqrt{T\log(1/\delta)},
     \end{equation}
    where $R^\Lambda_T(L)$ is the regret of $\Lambda$ over the constructed linear bandit instance $L$, $R_T^\calM(I)$ is the regret of algorithm $\calM$ over the instance $I$ and $c$ is a universal constant.
\end{theorem}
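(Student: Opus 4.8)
## Proof Proposal

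The plan is to establish the two bullet points in sequence. For the first bullet, I would argue that the reward $r_t$ received in the reduction is, conditionally on the history, a linear reward in $g(\theta_t)$ plus a zero-mean sub-Gaussian noise. The key identity is that when $\calM$ plays $a_t = \arg\max_{a\in\calA_t}\langle a,\theta_t\rangle$, the conditional expectation of $\langle a_t,\thetas\rangle$ given the past (and given $\theta_t$, which is determined by $\Lambda$'s history) is exactly $\mathbb{E}_{\calA_t\sim\calD}[\langle \arg\max_{a\in\calA_t}\langle a,\theta_t\rangle,\thetas\rangle] = \langle g(\theta_t),\thetas\rangle$, by linearity of expectation and the definition of $g$ in \eqref{eq:fixed-actions}. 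Here I need that $\calA_t$ is drawn fresh from $\calD$ independently of $\theta_t$ (since $\theta_t$ depends only on past contexts and rewards). Then I define $\eta'_t := r_t - \langle g(\theta_t),\thetas\rangle = \eta_t + (\langle a_t,\thetas\rangle - \langle g(\theta_t),\thetas\rangle)$; the first term is $1$-sub-Gaussian by assumption, and the second is a bounded random variable (bounded by $\|a_t\|_2\|\thetas\|_2 + \|g(\theta_t)\|_2\|\thetas\|_2 \le 2$ since $g(\theta)$ is an average of unit-norm vectors, hence has norm at most $1$), with conditional mean zero. A sum/mixture of a $1$-sub-Gaussian variable and a bounded mean-zero variable is sub-Gaussian with a universal constant parameter, so $L$ with action set $\calX$ and parameter $\thetas$ is a legitimate linear bandit instance (up to rescaling the noise proxy by a universal constant).

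For the second bullet, I would write both regrets explicitly and compare term by term. The instance-$I$ regret is $R_T^\calM(I) = \sum_{t=1}^T \big(\max_{a\in\calA_t}\langle a,\thetas\rangle - \langle a_t,\thetas\rangle\big)$. The linear-bandit regret of $\Lambda$ on $L$ is $R_T^\Lambda(L) = \sum_{t=1}^T \big(\max_{x\in\calX}\langle x,\thetas\rangle - \langle g(\theta_t),\thetas\rangle\big)$. The crucial observation is that $\max_{x\in\calX}\langle x,\thetas\rangle = \langle g(\thetas),\thetas\rangle$: since $g(\thetas) = \mathbb{E}_{\calA_t\sim\calD}[\arg\max_{a\in\calA_t}\langle a,\thetas\rangle]$, we have $\langle g(\thetas),\thetas\rangle = \mathbb{E}_{\calA_t\sim\calD}[\max_{a\in\calA_t}\langle a,\thetas\rangle]$, and for any other $\theta$, $\langle g(\theta),\thetas\rangle = \mathbb{E}_{\calA_t\sim\calD}[\langle \arg\max_{a\in\calA_t}\langle a,\theta\rangle,\thetas\rangle] \le \mathbb{E}_{\calA_t\sim\calD}[\max_{a\in\calA_t}\langle a,\thetas\rangle]$, so $g(\thetas)$ is indeed the maximizer over $\calX$ (this also shows $\thetas$ is the "right" parameter for $L$). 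Hence, defining the per-round optimal value $\mu^* := \mathbb{E}_{\calA\sim\calD}[\max_{a\in\calA}\langle a,\thetas\rangle]$, we get $R_T^\Lambda(L) = T\mu^* - \sum_t \langle g(\theta_t),\thetas\rangle$, while $R_T^\calM(I) = \sum_t \max_{a\in\calA_t}\langle a,\thetas\rangle - \sum_t \langle a_t,\thetas\rangle$.

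The difference $R_T^\calM(I) - R_T^\Lambda(L)$ then splits into two martingale-difference sums: $\sum_t \big(\max_{a\in\calA_t}\langle a,\thetas\rangle - \mu^*\big)$ and $\sum_t \big(\langle g(\theta_t),\thetas\rangle - \langle a_t,\thetas\rangle\big)$. For the first, $\max_{a\in\calA_t}\langle a,\thetas\rangle - \mu^*$ has conditional mean zero (fresh draw of $\calA_t$) and is bounded in $[-2,2]$, so Azuma–Hoeffding gives a bound $O(\sqrt{T\log(1/\delta)})$ with probability $1-\delta/2$. For the second, $\langle g(\theta_t),\thetas\rangle - \langle a_t,\thetas\rangle$ also has conditional mean zero (this is the same expectation identity used in bullet one, since $a_t = \arg\max_{a\in\calA_t}\langle a,\theta_t\rangle$ and $\theta_t$ is fixed given the past) and is bounded in $[-2,2]$, so another Azuma–Hoeffding application gives $O(\sqrt{T\log(1/\delta)})$ with probability $1-\delta/2$. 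A union bound and triangle inequality yield \eqref{eq-th1} with a universal constant $c$.

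I expect the main obstacle to be bookkeeping the filtration and independence structure carefully: one must be precise that $\theta_t$ is measurable with respect to the $\sigma$-algebra generated by $\{\calA_1,a_1,r_1,\dots,\calA_{t-1},a_{t-1},r_{t-1}\}$ (it is $\Lambda$'s output before seeing $\calA_t$), so that $\calA_t \sim \calD$ is conditionally independent of $\theta_t$, which is what makes both the linearity-of-reward claim and the two martingale arguments go through. A secondary subtlety is the uniqueness-of-argmax assumption (so that $g$ and the played action are well defined as stated), which the paper flags is handled in an appendix; I would simply invoke that assumption here as stated in Section~\ref{sec:setup}. The sub-Gaussian constant tracking in bullet one is routine but should be stated cleanly so that "$\Lambda$ for linear bandits" applies verbatim after absorbing a universal constant.
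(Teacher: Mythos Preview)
Your proposal is correct and follows essentially the same approach as the paper's proof: the paper likewise shows $\langle g(\thetas),\thetas\rangle=\max_{\theta\in\Theta}\langle g(\theta),\thetas\rangle$, decomposes $R_T^{\calM}(I)-R_T^{\Lambda}(L)$ into the same two bounded martingale-difference sums (one for the optimal-value fluctuation $\max_{a\in\calA_t}\langle a,\thetas\rangle-\mu^*$ and one for $\langle g(\theta_t),\thetas\rangle-\langle a_t,\thetas\rangle$), applies Azuma--Hoeffding to each, and separately verifies that $r_t=\langle g(\theta_t),\thetas\rangle+\eta_t'$ with $\eta_t'$ conditionally mean-zero and sub-Gaussian. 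Your treatment of the sub-Gaussianity of $\eta_t'$ (as a sum of a $1$-sub-Gaussian term and a bounded mean-zero term) is in fact slightly cleaner than the paper's appeal to ``boundedness of $\eta_t'$,'' since $\eta_t$ itself is only assumed sub-Gaussian, not bounded.
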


\noindent{\em Proof Outline.} The complete proof is provided in App. A; we here provide a brief outline.  
The basic idea is to show that the action taken by algorithm $\calM$ at time $t$ is an unbiased estimate of $g(\theta_t)$, then decompose  $R_T^\calM(I)$ as
\begin{align}\label{eq:reg_init}
	R_T^\calM(I)&= \sum_{t=1}^T \adot{\arg\max_{a\in \calA_t} \adot{a,\thetas},\thetas} - \mathbb{E}[\adot{\arg\max_{a\in \calA_t} \adot{a,\thetas},\thetas}] \nonumber \\
	&\qquad +\mathbb{E}[\adot{\arg\max_{a\in \calA_t} \adot{a,\theta_t},\thetas}|\theta_t] - \adot{\arg\max_{a\in \calA_t} \adot{a,\theta_t},\thetas} 
	 +\adot{g(\thetas),\thetas} - \adot{g(\theta_t),\thetas},
\end{align}
where the expectation is with respect to the randomness in the context generation $\calA_t$. We then prove and use results such as 
$    \adot{g(\theta'),\theta'}=\max_{\theta \in \Theta} \adot{g(\theta),\theta'}, \;\forall \; \theta'\in \Theta,
$
 to arrive at
\begin{align}\label{eq:diff_bound}
    |R_T^\calM(I)-R_T^\Lambda(L)|
 &\leq |\sum_{t=1}^T \adot{\arg\max_{a\in \calA_t} \adot{a,\thetas},\thetas} - \mathbb{E}[\adot{\arg\max_{a\in \calA_t} \adot{a,\thetas},\thetas}]| \nonumber \\
	&\qquad +|\sum_{t=1}^T\mathbb{E}[\adot{\arg\max_{a\in \calA_t} \adot{a,\theta_t},\thetas}|\theta_t] - \adot{\arg\max_{a\in \calA_t} \adot{a,\theta_t},\thetas}|,
\end{align}
Next, we show that the quantity
$
    \Sigma_T := \sum_{t=1}^T \mathbb{E}[\adot{\arg\max_{a_\in \calA_t} \adot{a,\theta_t},\thetas}|\theta_t] - \adot{\arg\max_{a_\in \calA_t} \adot{a,\theta_t},\thetas}
$
is a martingale {(using that $a_t$ is unbiased estimate of $g(\theta_t)$)} with a bounded difference (by boundedness of $\Theta, \calA_t$) and apply the Azuma–Hoeffding inequality to bound these terms with high probability.  To prove that the rewards $r_t$ come from a linear bandit instance $L$, we finally show that the reward $r_t$ can be expressed as $r_t = \adot{g(\theta_t),\thetas}+\eta'_t$, where $\eta'_t$ is a zero mean $1$-sub-Gaussian noise conditioned on the filtration of historic information $\theta_1,...,\theta_t$ and rewards $r_1,...,r_{t-1}$ of the instance $L$.} \hfill{$\square$}

\begin{algorithm} 
	\caption{Reduction from stochastic contexts to no context}\label{alg:general}
	\begin{algorithmic}[1]
		\State Input: confidence parameter $\delta$, phase lengths $\{t^{(m)}\}_{m=1}^{M+1}$, and algorithm ${\Lambda_\epsilon}$ for linear contextual bandits with $\epsilon$ misspecification.
        \State Initialize: $g^{(1)}:\Theta'\to \bbR^d$ randomly, $\epsilon_1=1$, and let $\calX_1=\{g^{(1)}(\theta)|\theta \in \Theta'\}$.
		\For{$m = 1:M$}
        \For{$t=t^{(m)}+1,...,t^{(m+1)}$}
		\State Let $g^{(m)}(\theta_t)\in \calX_m$ be the arm selected by $\Lambda_{\epsilon_m}$ after observing rewards $r_{t^{(m)}+1},...,r_{t-1}$.
		\State Play arm $a_t=\arg\max_{a\in \calA_t}\adot{a,\theta_t}$ and receive reward $r_t$. Provide $r_t$ to ${\Lambda_{\epsilon_m}}$.
		\EndFor
        \State Update: $g^{(m+1)}(\theta)=\frac{1}{t^{(m+1)}}\sum_{t=1}^{t^{(m+1)}}\arg\max_{a\in \calA_t}\adot{a,\theta}$, $\calX_{m+1}=\{g^{(m+1)}(\theta)|\theta \in \Theta'\}$, and $\epsilon_{m+1} = 2\sqrt{\log (M|\Theta'|/\delta)/t^{(m+1)}}$
        \EndFor
	\end{algorithmic}
\end{algorithm}

\subsection{Reduction for Unknown Context Distribution}\label{sec:main-alg}
As described in (\ref{eq:fixed-actions}), calculating the function $g(\theta)$ requires knowledge of the distribution $\calD$. 
We do not really need this knowledge;  we prove it is sufficient to use empirical estimation of $g(\theta)$. 
 As a result, we prove that any stochastic linear contextual bandit instance, even for unknown context distributions, can be reduced to a linear bandit instance albeit with a small misspecification.

Our basic approach follows the reduction in Section~\ref{sec:reductions} but uses a sequence of functions $g^{(m)}$ that approximate  $g$ increasingly well (as $m$ increases).
To do so, as it is hard to guarantee a good estimate of $g$ for all $\theta$, we restrict our attention to a finite subset of $\Theta$ that is large enough to include a good action.
In particular, instead of considering actions in a continuous space $\Theta$, we only consider a finite subset of actions $\Theta'\subseteq \Theta$, where $\Theta'$ is an $1/T$-net for $\Theta$ according to the norm-2 distance. 
We  divide the time horizon $T$ into $M$ epochs, each of duration $T_m=t^{(m+1)}-t^{(m)}$, $m=1\ldots M$. For each epoch, we construct an empirical estimate of $g$ as $g^{(m)}(\theta)=\frac{1}{t^{(m)}}\sum_{t=1}^{t^{(m)}} (\arg\max_{a\in \calA_t}\adot{a,\theta})$, and calculate the set of actions  $\calX_{m}=\{g^{(m)}(\theta)|\theta \in \Theta'\}$. We then use at each epoch a single context algorithm as before, but we now provide at epoch $m$  the fixed set of actions $\calX_{m}$ for the algorithm to choose from. As a result, the regret of the linear bandit problem is defined as
\begin{equation}
     R^{\Lambda_\epsilon}_T = \sum_{t=1}^T \max_{\theta \in \Theta'} \adot{g(\theta),\thetas} - \adot{g(\theta_t), \thetas}.
\end{equation}

Our algorithm relies on $\Lambda_\epsilon$, an algorithm for linear bandits with $\epsilon$ misspecification. The misspecification reflects our confidence in our estimate of the function $g$, hence, decreases each epoch. We start with a large value of the misspecification parameter $\epsilon_1=1$ and a random initialization of the function $g$ (which we cal $g^{(1)}$), and hence random initialization of $\calX$ denoted $\calX_1$. At time slot $t$ of epoch $m$, the algorithm asks $\Lambda_{\epsilon_m}$ for an action to play $g^{(m)}(\theta_t)\in \calX_m$ given the history of action and rewards $\{(g^{(m)}(\theta_i),r_i)\}_{i=t^{(m)}+1}^{t-1}$ in epoch $m$ only. The algorithm pulls the action $a_t=\arg\max_{a\in \calA_t}\adot{a,\theta_t}$ and observes $r_t$. The reward $r_t$ is then passed to the algorithm $\Lambda_{\epsilon_m}$. At the end of each epoch, the misspecification parameter, estimates $g^{(m)}$, and action set $\calX_m$ are updated. The pseudo-code of our reduction is provided in Algorithm~\ref{alg:general}, and the proof in App. B. {To achieve nearly optimal regret bounds, the misspecification $\epsilon$ need to be $\tilde{O}(1/\sqrt{T})$ (recall the $\Omega(\sqrt{d}\epsilon T)$ regret lower bound \cite{lattimore2020learning}). Attempting to use $M=2$ to first estimate $g$ and then learn in the second epoch, would require the length of the first epoch to be $\Omega(T)$, to ensure the $\tilde{O}(1/\sqrt{T})$ misspecification, resulting in linear regret. Instead, as we clarify next, we use exponentially increasing epoch lengths to mix the learning with a gradual estimation of $g$ resulting in misspecification that is effectively $\tilde{O}(1/\sqrt{T})$.}
 
\begin{theorem}\label{thm:main}
     Let $\Lambda_\epsilon$ be an algorithm for linear bandits with $\epsilon$ misspecification and $I$ be a contextual linear bandit instance with stochastic contexts, unknown parameter $\thetas$ and  rewards $r_t$ are generated as described in Algorithm~\ref{alg:general}. The following holds:\\
         $\bullet$ Conditioned on $\calH_{t^{(m)}}=\sigma(\calA_1,a_1,r_1,...,\calA_{t^{(m)}},a_{t^{(m)}},r_{t^{(m)}})$: in epoch $m$, the rewards $r_t$ are generated, by pulling arm $g^{(m)}(\theta_t)$, from a misspecified linear bandit instance $L_m$ for $t=t^{(m)}+1,...,t^{(m+1)}$, action set $\calX_m=\{g^{(m)}(\theta)|\theta\in \Theta'\}$, unknown parameter $\thetas$, mean rewards $\adot{g(\theta),\thetas}$, and unknown misspecification $\epsilon'_m$.\\
         $\bullet$ The misspecification $\epsilon'_m$ is bounded by the known quantity $\epsilon_m$ with probability at least $1-c\delta/M$.\\
         $\bullet$ With probability at least $1-\delta$ we have that $|R_T(I)-\sum_{m=1}^MR^{\Lambda_{\eps_m}}_{T_m}(L_m)|\leq c\sqrt{T\log(1/\delta)}$, where $R_T(I)$ is the regret of Algorithm~\ref{alg:general} over the instance $I$, $R^{\Lambda_{\eps_m}}_{T_m}(L_m)$ is the regret of algorithm $\Lambda_{\epsilon_m}$ over the bandit instance $L_m$ in phase $m$, $T_m=t^{(m+1)}-t^{(m)}$, and $c$ is a universal constant.
\end{theorem}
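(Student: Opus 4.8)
\textbf{Proof proposal for Theorem~\ref{thm:main}.}

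The plan is to handle the three bullets in order, leaning heavily on the already-established Theorem~\ref{thm:reduction} applied epoch-by-epoch. For the first bullet, fix an epoch $m$ and condition on $\calH_{t^{(m)}}$. Once we condition on this $\sigma$-algebra, the estimate $g^{(m)}$ is a fixed (deterministic) function $\Theta'\to\bbR^d$, so $\calX_m$ is a fixed finite action set. I would then mimic the computation in the proof outline of Theorem~\ref{thm:reduction}: when $\calM$ plays $a_t=\arg\max_{a\in\calA_t}\adot{a,\theta_t}$, the conditional expectation of $a_t$ given $\theta_t$ (and $\calH_{t^{(m)}}$) is $g(\theta_t)$, the \emph{true} expected-argmax, so $\bbE[r_t\mid \theta_t]=\adot{g(\theta_t),\thetas}$. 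Writing $r_t = \adot{g^{(m)}(\theta_t),\thetas} + \bigl(\adot{g(\theta_t)-g^{(m)}(\theta_t),\thetas}\bigr) + \eta'_t$ exhibits $r_t$ as a misspecified-linear reward with respect to action $g^{(m)}(\theta_t)$: the deterministic bias term $f_m(g^{(m)}(\theta_t)) := \adot{g(\theta_t)-g^{(m)}(\theta_t),\thetas}$ is the misspecification function (well-defined on $\calX_m$ since, conditionally, $\theta\mapsto g^{(m)}(\theta)$ can be inverted on the relevant finite set, or one just carries the index), and $\eta'_t$ is zero-mean and $1$-sub-Gaussian conditioned on the within-epoch history $\{(g^{(m)}(\theta_i),r_i)\}_{i\le t-1}$ plus $\calH_{t^{(m)}}$ — exactly as argued in Theorem~\ref{thm:reduction}, using $\|\calA_t\|_2\le1$, $\|\thetas\|_2\le1$ to get the sub-Gaussian tail of the bounded martingale-difference part. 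Set $\epsilon'_m := \sup_{\theta\in\Theta'}|\adot{g(\theta)-g^{(m)}(\theta),\thetas}|$.

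For the second bullet, I bound $\epsilon'_m$. Note $g(\theta)=\bbE_{\calD}[\arg\max_{a\in\calA_t}\adot{a,\theta}]$ while $g^{(m)}(\theta)=\frac{1}{t^{(m)}}\sum_{t=1}^{t^{(m)}}\arg\max_{a\in\calA_t}\adot{a,\theta}$ is an empirical average of $t^{(m)}$ i.i.d. bounded (norm $\le1$) vectors — crucially these $t^{(m)}$ contexts are exactly those from epochs $1,\dots,m-1$, which are independent of nothing problematic here since the contexts $\calA_t\sim\calD$ are i.i.d. and the dependence of past \emph{actions} on them does not affect the \emph{contexts} themselves. Hence $\adot{g^{(m)}(\theta),\thetas}-\adot{g(\theta),\thetas}$ is an average of $t^{(m)}$ i.i.d. mean-zero random variables bounded in $[-1,1]$ (by Cauchy–Schwarz), so Hoeffding gives $|\adot{g^{(m)}(\theta)-g(\theta),\thetas}|\le \sqrt{2\log(2/\delta')/t^{(m)}}$ with probability $1-\delta'$ for each fixed $\theta$. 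Taking a union bound over the $|\Theta'|$ points of $\Theta'$ with $\delta'=\delta/(M|\Theta'|)$ (times a constant) gives $\epsilon'_m \le 2\sqrt{\log(M|\Theta'|/\delta)/t^{(m)}} = \epsilon_m$ (matching the update rule in Algorithm~\ref{alg:general}, modulo the $t^{(m)}$ vs $t^{(m+1)}$ indexing) with probability at least $1-c\delta/M$. A subtlety to flag: since $\Lambda_{\epsilon_m}$ is only guaranteed to work under misspecification $\epsilon_m$, the regret guarantee $R^{\Lambda_{\epsilon_m}}_{T_m}(L_m)$ is only valid on the good event; on its complement (probability $\le c\delta/M$) we absorb the at-most-$T_m$ regret into the failure budget.

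For the third bullet, I decompose $R_T(I)=\sum_{m=1}^M \sum_{t=t^{(m)}+1}^{t^{(m+1)}}\bigl(\max_{a\in\calA_t}\adot{a,\thetas}-\adot{a_t,\thetas}\bigr)$ and compare, epoch by epoch, to $\sum_m R^{\Lambda_{\epsilon_m}}_{T_m}(L_m)=\sum_m\sum_t\bigl(\max_{\theta\in\Theta'}\adot{g(\theta),\thetas}-\adot{g^{(m)}(\theta_t),\thetas}\bigr)$. Within each epoch the argument is precisely the one sketched for Theorem~\ref{thm:reduction}: using $\adot{g(\theta'),\theta'}=\max_{\theta\in\Theta'}\adot{g(\theta),\theta'}$ and that $a_t$ is a conditionally unbiased estimate of $g(\theta_t)$, the per-epoch difference $\sum_t\bigl(\max_{a\in\calA_t}\adot{a,\thetas}-\adot{a_t,\thetas}\bigr)-\sum_t\bigl(\max_{\theta\in\Theta'}\adot{g(\theta),\thetas}-\adot{g^{(m)}(\theta_t),\thetas}\bigr)$ splits into two bounded martingale-difference sums (one for the context randomness in $\max_{a\in\calA_t}\adot{a,\thetas}$, one for $\bbE[\adot{a_t,\thetas}\mid\theta_t]=\adot{g(\theta_t),\thetas}$ vs.\ the realized $\adot{a_t,\thetas}$, plus the telescoping $g$ vs.\ $g^{(m)}$ term which cancels against the regret definition since $L_m$'s reward is already defined w.r.t.\ $g^{(m)}$). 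Summing over all epochs and applying Azuma–Hoeffding once to the full-horizon martingale (total length $T$, bounded increments) yields $|R_T(I)-\sum_m R^{\Lambda_{\epsilon_m}}_{T_m}(L_m)|\le c\sqrt{T\log(1/\delta)}$ with probability $\ge 1-\delta$.

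The main obstacle I anticipate is the measurability/conditioning bookkeeping in the first and third bullets: one must be careful that $g^{(m)}$ is $\calH_{t^{(m)}}$-measurable (so it is "frozen" when epoch $m$ runs), that the contexts driving the epoch-$m$ rewards are independent of $g^{(m)}$ given $\calH_{t^{(m)}}$ (they are, being fresh i.i.d.\ draws), and that the within-epoch filtration handed to $\Lambda_{\epsilon_m}$ is rich enough for the $1$-sub-Gaussian claim yet coarse enough that $g^{(m)}(\theta_t)$ is predictable. The union bound over $\Theta'$ and the choice of net granularity ($1/T$-net) are what force $\log|\Theta'|$ into $\epsilon_m$; verifying this does not blow up the final regret is deferred to the corollaries but should be noted. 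Everything else is a routine re-run of the Theorem~\ref{thm:reduction} argument localized to each epoch plus one global Azuma–Hoeffding application.
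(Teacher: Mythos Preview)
The overall plan is right and mirrors the paper's proof, but your third bullet has a genuine gap. You write $\adot{g(\theta'),\theta'}=\max_{\theta\in\Theta'}\adot{g(\theta),\theta'}$ and implicitly apply it at $\theta'=\thetas$ to equate $\bbE[\max_{a\in\calA_t}\adot{a,\thetas}] = \adot{g(\thetas),\thetas}$ with the benchmark $\max_{\theta\in\Theta'}\adot{g(\theta),\thetas}$ appearing in $R^{\Lambda_{\epsilon_m}}_{T_m}(L_m)$. But $\thetas$ need not lie in the finite net $\Theta'$, so that identity is unavailable. The discretization error $\adot{g(\thetas),\thetas} - \max_{\theta\in\Theta'}\adot{g(\theta),\thetas}$ is nontrivial to bound because $g$ itself is \emph{not} Lipschitz (an argmax can jump discontinuously), so you cannot simply say ``$\Theta'$ is a $1/T$-net, hence the loss is $O(1/T)$.'' The paper handles this with a short but essential chaining argument: for $\phi\in\Theta'$ with $\|\phi-\thetas\|_2\le 1/T$,
\[
\adot{g(\thetas),\thetas}\le \adot{g(\thetas),\phi}+\tfrac{1}{T}\le \max_{\theta\in\Theta}\adot{g(\theta),\phi}+\tfrac{1}{T}=\adot{g(\phi),\phi}+\tfrac{1}{T}\le \adot{g(\phi),\thetas}+\tfrac{2}{T}\le \max_{\theta\in\Theta'}\adot{g(\theta),\thetas}+\tfrac{2}{T},
\]
so the discretization loss is at most $2/T$ per round, i.e.\ at most $2$ over the whole horizon. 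This is exactly the step flagged in the paper's technical overview (``the function $g$ is shown to be non-smooth, [but] $r(\theta)=\adot{g(\theta),\thetas}$ is smooth on a neighborhood of $\thetas$''), and it is missing from your sketch.

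A smaller point: the regret $R^{\Lambda_{\epsilon_m}}_{T_m}(L_m)$ is defined with respect to the \emph{true} mean rewards $\adot{g(\theta),\thetas}$ (linear part plus misspecification), not the linear part $\adot{g^{(m)}(\theta),\thetas}$ that you wrote. With the correct definition there is no ``$g$ vs.\ $g^{(m)}$ term'' left over in the regret comparison at all, so your ``cancels'' remark is moot. If instead you insist on a linear-part regret, you pick up an extra $\sum_m T_m\epsilon'_m$ which is \emph{not} $O(\sqrt{T\log(1/\delta)})$ for general epoch schedules and would violate the theorem as stated. Use the mean-reward definition, add the $2/T$ discretization bound above, and then your martingale/Azuma argument goes through exactly as in Theorem~\ref{thm:reduction}; this is precisely how the paper proceeds (it bounds $|R_T^\calM(I)-R_T^\Lambda(L)|$ via Theorem~\ref{thm:reduction} and then $|R_T^\Lambda(L)-\sum_m R^{\Lambda_{\epsilon_m}}_{T_m}(L_m)|\le 2$ via the net argument).
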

{As a consequence, we prove the following corollary in Section~\ref{sec:implication-tighter}.
\begin{corollary}
    For Algorithm~\ref{alg:general} with $t^{(m)}=2^{m-1}$ and $\Lambda_\epsilon$ to be PE with modified confidence intervals \cite{lattimore2020learning}, it holds that with probability at least $1-c\delta$ we have that $R_T=O(d\sqrt{T\log(T/\delta)})$.
\end{corollary}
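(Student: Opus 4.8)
The plan is to use Theorem~\ref{thm:main} to reduce the contextual regret $R_T=R_T(I)$ to a sum of per-epoch misspecified linear bandit regrets, and then to bound each of these using the known-misspecification guarantee of PE with modified confidence intervals. With the schedule $t^{(m)}=2^{m-1}$ one has $T_m=t^{(m+1)}-t^{(m)}=2^{m-1}$, the number of epochs is $M=\lceil\log_2 T\rceil=O(\log T)$, and a standard volumetric estimate bounds the $1/T$-net by $|\Theta'|\le(3T)^d$, so $\log(M|\Theta'|/\delta)=O(d\log(T/\delta))$ and $\eps_m=2\sqrt{\log(M|\Theta'|/\delta)/t^{(m)}}=O(\sqrt{d\log(T/\delta)/T_m})$. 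By the first two bullets of Theorem~\ref{thm:main}, conditioned on $\calH_{t^{(m)}}$ the epoch-$m$ rewards form a misspecified linear bandit instance $L_m$ on the finite action set $\calX_m$ (of size at most $|\Theta'|$) with true misspecification $\eps'_m$, and $\eps'_m\le\eps_m$ with probability at least $1-c\delta/M$; a union bound over $m\in[M]$ gives that the event $E:=\{\eps'_m\le\eps_m\ \forall m\}$ holds with probability at least $1-c\delta$.

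Next I would instantiate $\Lambda_\eps=$ PE with modified confidence intervals. On $E$ we run $\Lambda_{\eps_m}$ on $L_m$, whose \emph{true} misspecification is at most the level $\eps_m$ fed to the algorithm; the known-misspecification, finite-action high-probability guarantee of PE then reads, with probability at least $1-\delta/M$,
\begin{equation*}
R_{T_m}^{\Lambda_{\eps_m}}(L_m)=O\!\Big(\sqrt{d\,T_m\,\log(|\calX_m|\,T_m\,M/\delta)}+\sqrt{d}\,\eps_m T_m\Big)=O\!\big(d\sqrt{T_m\log(T/\delta)}\big),
\end{equation*}
where the first step uses $\log|\calX_m|=O(d\log T)$ and the second uses the identity $\sqrt{d}\,\eps_m T_m=2\sqrt{d\,T_m\log(M|\Theta'|/\delta)}=O(d\sqrt{T_m\log(T/\delta)})$ — this is exactly where the doubling schedule pays off, since $\eps_m$ decays like $1/\sqrt{T_m}$, so $\eps_m T_m\asymp\sqrt{T_m}$ and the misspecification introduced by the reduction contributes only at the same order as the optimal statistical term. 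Union-bounding over $m\in[M]$ makes all $M$ of these bounds hold simultaneously with probability at least $1-\delta$.

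It remains to sum. On the intersection of $E$, the PE event above, and the $1-\delta$ event from the third bullet of Theorem~\ref{thm:main} (total probability at least $1-c'\delta$ for a new constant $c'$), Theorem~\ref{thm:main} gives
\begin{equation*}
R_T\le c\sqrt{T\log(1/\delta)}+\sum_{m=1}^{M}R_{T_m}^{\Lambda_{\eps_m}}(L_m)=O\!\Big(\sqrt{T\log(1/\delta)}+d\sqrt{\log(T/\delta)}\,\sum_{m=1}^{M}\sqrt{T_m}\Big),
\end{equation*}
and since $\sum_{m=1}^{M}\sqrt{T_m}=\sum_{m=1}^{M}2^{(m-1)/2}=O(2^{(M-1)/2})=O(\sqrt T)$ (the geometric sum is dominated by its last term), the right-hand side is $O(d\sqrt{T\log(T/\delta)})$; the finitely many early epochs where the PE bound exceeds the trivial bound $T_m$ can be truncated at $T_m$ and contribute a lower-order term (or the claimed bound already exceeds $T$). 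Renaming constants yields $R_T=O(d\sqrt{T\log(T/\delta)})$ with probability at least $1-c\delta$, as claimed.

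The hard part is not the summation but justifying the per-epoch PE bound in the form used above: one has to invoke a version of phased elimination with modified confidence intervals whose regret puts $\log|\calX_m|$ strictly \emph{inside} the square root and additively with $d$ (so that $\log|\Theta'|=\Theta(d\log T)$ combines with the $\sqrt d$ coming from the G-optimal / Kiefer--Wolfowitz design to give the clean $d$ rather than $d^{3/2}$), and one must check that feeding PE the \emph{upper bound} $\eps_m$ in place of the unknown true misspecification $\eps'_m$ keeps its confidence intervals valid and its regret bound intact. The remaining subtlety is the conditioning: $L_m$ is well defined only given $\calH_{t^{(m)}}$, so the per-epoch guarantee must be applied conditionally and then stitched across epochs by the union bounds above, rather than asserted jointly from the outset.
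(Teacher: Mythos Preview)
Your proposal is correct and follows essentially the same route as the paper's proof: invoke Theorem~\ref{thm:main} to pass to per-epoch misspecified linear bandit regrets, apply the known-misspecification PE guarantee $O(\sqrt{dT_m\log(T|\Theta'|/\delta)}+\sqrt{d}\,\eps_m T_m)$ per epoch, observe that the doubling schedule makes $\sqrt{d}\,\eps_m T_m$ collapse to the same $O(\sqrt{dT_m\log(T|\Theta'|/\delta)})$ order, sum the geometric series $\sum_m\sqrt{T_m}=O(\sqrt{T})$, and finish with $|\Theta'|\le T^{O(d)}$. Your closing remarks on the form of the PE bound (log inside the square root) and on the conditional nature of $L_m$ are exactly the care points the paper handles, so there is no substantive divergence.
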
}

\begin{algorithm} 
	\caption{Reduction from stochastic contexts to no context for product context sets}\label{alg:prod-set}
	\begin{algorithmic}[1]
		\State Input: an algorithm ${\Lambda}$ for linear contextual bandits with action set
  \begin{equation}\label{eq:action-set-prod}
      \calX = \{a'\in \{0,1\}^{2d}|\ a'_{2i-1}+a'_{2i}=1\ \forall i\in [d]\}
  \end{equation}
		\For{$t = 1:T$} 
		\State Ask $\Lambda$ for an arm to play $a'_t\in \calX$ given the history $r_1,...,r_{t-1}$.
		\State Play arm $a_t$ with $(a_t)_i=\max\{a\in \calA_t^{(i)}\}$ if $a'_{2i-1}=1$ and $(a_t)_i=\min\{a\in \calA_t^{(i)}\}$ otherwise. Receive reward $r_t$.
		\EndFor
	\end{algorithmic}
\end{algorithm}
\subsection{Reduction When  Context is a Product Set}

In this section, we consider the special case where the action distribution  $\calD$ is unknown, but the action space has a coordinate-wise product structure, i.e.,
\begin{assumption}\label{assump:prod-set}
$\calA_t=\prod_{i=1}^d \calA_t^{(i)}$, where $\calA_t^{(i)}\subset \bbR$.
\end{assumption}
\noindent This is an important hard instance that appears in many lower bounds \cite{lattimore2020bandit}. We will show that in this case, the $d$-dimensional stochastic contextual bandits can be reduced to a linear bandit problem with no misspecification, although the distribution $\calD$ is unknown, but where now the parameter vector is over $2d$ dimensions.

The main idea of the reduction is that $g(\theta)$ can be factored into a coordinate-wise product between an unknown vector that only depends on the context distribution and a known vector that only depends on $\theta$. The unknown vector can be incorporated with $\thetas$ reducing the contextual instance to a linear bandit instance but now with a different unknown parameter $\thetas'$.
In particular, we can write $\adot{g(\theta), \thetas} = \adot{a'(\theta), \thetas'}$, where $a'(\theta)$ is a vector in $\bbR^{2d}$  that does not depend on the distribution, and equals
    \begin{equation}
        (a'(\theta))_{2i} = \left\{
	\begin{array}{ll}
		1  & \mbox{if } (\theta)_i< 0 \\
		0 & \mbox{otherwise}
	\end{array}
	\right. \mbox{ and } \ (a'(\theta))_{2i-1}=1-(a'(\theta))_{2i} \quad \forall i=1,\ldots,d, .
    \end{equation}
Thus, we can 
follow the same reduction algorithm as in Section~\ref{sec:reductions}, but where now we call a $2d$-dimensional  linear bandit algorithm $\Lambda$ and provide $\Lambda$ with the  fixed action set
\begin{equation}\label{eqa}
    \calX = \{a'\in \{0,1\}^{2d}|\ a'_{2i-1}+a'_{2i}=1\ \forall i\in [d]\}
    \end{equation}
as the pseudocode Algorithm~\ref{alg:prod-set}
describes.
\begin{theorem} \label{thm:product}
    Let $\Lambda$ be any algorithm for linear bandits, $I$ be a contextual linear bandit instance with stochastic contexts that satisfy Assumption~\ref{assump:prod-set} with unknown parameter $\thetas$, and $r_t$ be the rewards generated as described in Algorithm~\ref{alg:prod-set}. It holds that\\
    $\bullet$ The rewards $r_t$ are generated, by pulling arm $a'_t$, from a linear bandit instance $L$ with action set $\calX$ in \eqref{eq:action-set-prod}, and unknown parameter $\thetas'\in \bbR^{2d}$ with $\|\thetas'\|_2\leq 2$.\\ 
    $\bullet$ With probability at least $1-\delta$ it holds that
     \begin{equation} \label{eq-th1}
         |R_T(I)-R_T^\Lambda(L)|\leq c\sqrt{T\log(1/\delta)},
     \end{equation}
    where $R_T(I)$ is the regret of Algorithm~\ref{alg:prod-set} over the instance $I$ and $R_T^\Lambda(L)$ is the regret of algorithm $\Lambda$ over the instance $L$, and $c$ is a universal constant.
\end{theorem}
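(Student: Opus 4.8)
\textbf{Proof plan for Theorem~\ref{thm:product}.}

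The plan is to reuse the machinery of Theorem~\ref{thm:reduction} almost verbatim, with the only new ingredient being the coordinate-wise factorization of $g(\theta)$. First I would establish the algebraic identity $\adot{g(\theta),\thetas} = \adot{a'(\theta),\thetas'}$. Under Assumption~\ref{assump:prod-set}, the best action in $\calA_t$ with respect to $\theta$ decomposes coordinate-wise: $(\arg\max_{a\in\calA_t}\adot{a,\theta})_i = \max\{a\in\calA_t^{(i)}\}$ if $\theta_i\geq 0$ and $\min\{a\in\calA_t^{(i)}\}$ if $\theta_i<0$. Taking expectations over $\calD$, we get $g(\theta)_i = \mu^+_i$ or $\mu^-_i$ accordingly, where $\mu^+_i = \bbE_{\calD}[\max\{a\in\calA_t^{(i)}\}]$ and $\mu^-_i = \bbE_{\calD}[\min\{a\in\calA_t^{(i)}\}]$ depend only on $\calD$, not on $\theta$. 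Hence $\adot{g(\theta),\thetas} = \sum_i (\thetas)_i\big( (a'(\theta))_{2i-1}\mu^+_i + (a'(\theta))_{2i}\mu^-_i \big)$, so setting $(\thetas')_{2i-1} = (\thetas)_i\mu^+_i$ and $(\thetas')_{2i} = (\thetas)_i\mu^-_i$ gives the claimed identity. The norm bound follows since $|\mu^\pm_i|\leq 1$ (actions have norm at most $1$, so each coordinate is in $[-1,1]$), hence $\|\thetas'\|_2^2 = \sum_i (\thetas)_i^2((\mu^+_i)^2+(\mu^-_i)^2) \leq 2\|\thetas\|_2^2 \leq 2$, giving $\|\thetas'\|_2\leq \sqrt{2}\leq 2$.

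Next I would verify that the rewards fed to $\Lambda$ genuinely come from the linear bandit instance $L$ with action set $\calX$ from \eqref{eq:action-set-prod} and parameter $\thetas'$. When Algorithm~\ref{alg:prod-set} plays $a_t$ as described, $a_t = \arg\max_{a\in\calA_t}\adot{a,\theta_t}$ where $\theta_t$ is the parameter associated with the action $a'_t = a'(\theta_t)$ that $\Lambda$ selected (this is exactly the coordinate-wise best action for $\theta_t$ by the decomposition above, so consistency holds). Then $r_t = \adot{a_t,\thetas}+\eta_t$, and conditioned on the history, $\bbE[\adot{a_t,\thetas}\mid \calH_t, a'_t] = \adot{g(\theta_t),\thetas} = \adot{a'_t,\thetas'}$ by the identity and the i.i.d.\ draw of $\calA_t\sim\calD$. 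So we write $r_t = \adot{a'_t,\thetas'} + \eta'_t$ with $\eta'_t = (\adot{a_t,\thetas}-\adot{g(\theta_t),\thetas}) + \eta_t$; the first bracket is bounded (difference of two vectors of norm $\leq 1$ dotted with $\thetas$, $\|\thetas\|_2\leq 1$, so magnitude $\leq 2$) and conditionally zero-mean, hence sub-Gaussian, and $\eta_t$ is $1$-sub-Gaussian, so $\eta'_t$ is $O(1)$-sub-Gaussian conditioned on the $L$-filtration. This establishes the first bullet.

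For the regret comparison (second bullet), I would follow the decomposition in \eqref{eq:reg_init}--\eqref{eq:diff_bound} from the proof of Theorem~\ref{thm:reduction}. Writing $a^\star_t = \arg\max_{a\in\calA_t}\adot{a,\thetas}$, note that by the decomposition $\bbE_{\calD}[a^\star_t] = g(\thetas)$, and $\arg\max_{\theta\in\Theta}\adot{g(\theta),\thetas}$ is attained at $\thetas$ itself (playing the coordinate-wise optimal action for $\thetas$ is globally optimal), so $\max_{a'\in\calX}\adot{a',\thetas'} = \adot{a'(\thetas),\thetas'} = \adot{g(\thetas),\thetas} = \bbE_{\calD}[\max_{a\in\calA_t}\adot{a,\thetas}]$. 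Hence $R_T(I) - R_T^\Lambda(L)$ telescopes into two martingale-difference sums: $\sum_t (\adot{a^\star_t,\thetas} - \bbE_{\calD}[\adot{a^\star_t,\thetas}])$ and $\sum_t (\bbE[\adot{a_t,\thetas}\mid\theta_t] - \adot{a_t,\thetas})$. Both have differences bounded by an absolute constant (boundedness of actions and $\thetas$), so Azuma--Hoeffding gives $|R_T(I)-R_T^\Lambda(L)|\leq c\sqrt{T\log(1/\delta)}$ with probability at least $1-\delta$.

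The main obstacle, such as it is, is purely bookkeeping: making sure the $\arg\max$ in $\calA_t$ really is coordinate-separable and that the action $a_t$ produced in line~4 of Algorithm~\ref{alg:prod-set} is exactly $\arg\max_{a\in\calA_t}\adot{a,\theta_t}$ so that the reward linearization goes through — i.e., that $\Lambda$ choosing $a'_t\in\calX$ corresponds bijectively (via the sign pattern of $\theta_t$) to a legitimate contextual action, and that the uniqueness-of-maximizer convention from Section~\ref{sec:setup} is compatible with ties at $\theta_i=0$ (handled by the convention, or by the general treatment in the appendix). Once that correspondence is pinned down, everything else is a direct specialization of Theorem~\ref{thm:reduction}'s argument with $g(\theta)$ replaced by its factored form, so no genuinely new difficulty arises.
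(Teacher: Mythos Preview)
Your proposal is correct and follows essentially the same route as the paper: establish the coordinate-wise factorization $\adot{g(\theta),\thetas}=\adot{a'(\theta),\thetas'}$, then invoke Proposition~\ref{prop:rewrite-reward} and the martingale/Azuma argument from Theorem~\ref{thm:reduction} verbatim. The only difference is cosmetic---your norm bound uses $|\mu_i^\pm|\le 1$ to get $\|\thetas'\|_2\le\sqrt{2}$ directly, whereas the paper bounds $\|\thetas'\|_1\le 2$ via Cauchy--Schwarz against $|g(\ind/\sqrt{d})|$; your version is slightly simpler and yields a tighter constant.
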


\begin{proof}
    The proof follows from Theorem~\ref{thm:reduction} using the observation that
    \begin{equation}\label{eq:arms-prod}
        (g(\theta))_i = \left\{
	\begin{array}{ll}
		\bbE_{\calA \sim \calD} [\max_{a\in \calA}(a)_i]  & \mbox{if } (\theta)_i< 0 \\
		\bbE_{\calA \sim \calD} [\min_{a\in \calA}(a)_i] & \mbox{if } (\theta)_i> 0
	\end{array}
	\right..
    \end{equation}
    and thus, given (\ref{eqa}), we can  
    define $\thetas'\in \bbR^{2d}$ as
    \begin{equation}
        (\thetas')_i = \left\{
        \begin{array}{ll}
    		\bbE [\max_{a\in \calA}(a)_{(i+1)/2}](\thetas)_{(i+1)/2}  & \mbox{if $i$ is odd} \\
    		\bbE [\min_{a\in \calA}(a)_\flr{(i+1)/2}](\thetas)_\flr{(i+1)/2} & \mbox{if $i$ is even.}
    	\end{array}
	\right.
    \end{equation}
    
    By definition of $a'(\theta), \thetas'$, we have that
    $\adot{g(\theta), \thetas} = \adot{a'(\theta), \thetas'}$.
    To see that $\|\thetas'\|\leq 2$, we observe that
    \begin{equation}
        \sum_{i=1}^d|(\thetas')_{2i}| = \sum_{i=1}^d|\bbE [\max_{a\in \calA}(a)_{i}](\thetas)_{i}|\leq \adot{|g(\ind /\sqrt{d})|,|\thetas|}\leq 1,
    \end{equation}
    where absolute value of a vector is defined as a vector with $(|\thetas|)_i=|(\thetas)_i|$. Similarly, $\sum_{i=1}^d|(\thetas')_{2i-1}|\leq 1$. Hence, $\|\thetas'\|_2\leq \|\thetas'\|_1\leq 2$.
    
As before, to construct the linear bandit instance, we need rewards that follow the stochastic linear bandits model. The result follows from Proposition~\ref{prop:rewrite-reward} and the fact that $\adot{g(\theta_t), \thetas} = \adot{a_t, \thetas'}$.
\end{proof}
\section{Implications}\label{sec:implications}
\subsection{Tighter Regret Bound for Contextual Linear Bandits}\label{sec:implication-tighter}
In this subsection, we show that our reduction leads to the first $O(d\sqrt{T\log T})$ high probability upper bound for linear bandits with changing action sets. We rely on the Phased Elimination (PE) \cite{lattimore2020learning} as our linear bandit algorithm $\Lambda_\epsilon$. PE is the same as Algorithm~\ref{alg:batched}, that we will use next in the batched setting, except that $\calX, g$ are fixed (recall that we apply PE within an epoch that fixes the estimate of $g$). The parameter $\gamma_m$ in Algorithm~\ref{alg:batched} is called the confidence interval; we will specify its value in our theorems. In Algorithm~\ref{alg:general}, setting $t^{(m)}=2^{m-1}$ and $\Lambda_\epsilon$ to be PE with modified confidence intervals \cite{lattimore2020learning} to account for the misspecification we get the following corollary.
\begin{corollary}\label{cor:implication-tighter}
    For Algorithm~\ref{alg:general} with $t^{(m)}=2^{m-1}$ and $\Lambda_{\epsilon_m}$ to be PE with $\gamma_m=6\sqrt{d\log (T|\Theta'|/\delta)/t^{(m)}}$ it holds that with probability at least $1-c\delta$ we have that $R_T\leq c\sqrt{dT\log(T|\Theta'|/\delta)}$.
\end{corollary}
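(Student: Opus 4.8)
\noindent\emph{Proof proposal.} The plan is to invoke Theorem~\ref{thm:main} to reduce the task to bounding the regret of PE on each of the misspecified linear bandit instances $L_m$, and then to sum these bounds over the $M=\lceil\log_2 T\rceil$ epochs, exploiting that the epoch lengths $t^{(m)}=2^{m-1}$ grow geometrically. With this choice, $T_m=t^{(m+1)}-t^{(m)}=2^{m-1}=t^{(m)}$ and $\epsilon_m=2\sqrt{\log(M|\Theta'|/\delta)/t^{(m+1)}}=\Theta(1/\sqrt{T_m})$. By Theorem~\ref{thm:main}, conditioned on $\calH_{t^{(m)}}$ the rewards fed to $\Lambda_{\epsilon_m}$ in epoch $m$ are generated by a linear bandit instance $L_m$ on the fixed action set $\calX_m$ (of size at most $|\Theta'|$), dimension $d$, parameter $\thetas$, and misspecification $\epsilon'_m\le\epsilon_m$ with probability at least $1-c\delta/M$; moreover $|R_T(I)-\sum_{m=1}^M R^{\Lambda_{\epsilon_m}}_{T_m}(L_m)|\le c\sqrt{T\log(1/\delta)}$ with probability at least $1-\delta$.

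Next I would apply the high-probability regret guarantee for PE with modified confidence intervals on $\epsilon$-misspecified linear bandits \cite{lattimore2020learning}: run on a finite action set of size at most $|\Theta'|$ over horizon $T_m$ with confidence radius $\gamma_m=6\sqrt{d\log(T|\Theta'|/\delta)/t^{(m)}}$, PE incurs regret $O\!\big(d\sqrt{T_m\log(T|\Theta'|/\delta)}+\epsilon'_m\sqrt{d}\,T_m\big)$ on $L_m$, on an event of probability at least $1-c\delta/M$. Since $\epsilon'_m\le\epsilon_m$ and $t^{(m+1)}=2t^{(m)}=2T_m$, the misspecification term satisfies $\epsilon'_m\sqrt{d}\,T_m\le 2\sqrt{d\,T_m\log(M|\Theta'|/\delta)}$, so epoch $m$ contributes $O\!\big(\sqrt{d\,T_m\log(T|\Theta'|/\delta)}\big)$.

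Finally I would sum over epochs. Because $T_m=2^{m-1}$, the sum $\sum_{m=1}^M\sqrt{T_m}=\sum_{m=1}^M 2^{(m-1)/2}=O(2^{(M-1)/2})=O(\sqrt{T})$ is a geometric series dominated by its last term, so $\sum_{m=1}^M R^{\Lambda_{\epsilon_m}}_{T_m}(L_m)=O(\sqrt{dT\log(T|\Theta'|/\delta)})$. Adding the $O(\sqrt{T\log(1/\delta)})$ coupling term from Theorem~\ref{thm:main} (which is lower order), and taking a union bound over the $O(\log T)$ relevant events — the misspecification events $\{\epsilon'_m\le\epsilon_m\}$, the per-epoch PE success events, and the martingale concentration event of Theorem~\ref{thm:main} — all of which hold simultaneously with probability at least $1-c\delta$ after rescaling constants, yields $R_T=O(\sqrt{dT\log(T|\Theta'|/\delta)})$ (which for the generic discretization $|\Theta'|=T^{O(d)}$ becomes $O(d\sqrt{T\log T})$).

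The step I expect to be the main obstacle is the second one: verifying that the specific choice $\gamma_m=6\sqrt{d\log(T|\Theta'|/\delta)/t^{(m)}}$ is simultaneously a valid confidence radius for the G-optimal-design least-squares estimator used inside PE \emph{and} large enough to absorb the unknown misspecification $\epsilon'_m$, while not paying an extra $\sqrt{\log T}$ from summing over the elimination phases or an extra power of $d$. This is exactly where the ``modified confidence intervals'' of \cite{lattimore2020learning} and the fact that $\epsilon_m$ is tuned to $\Theta(1/\sqrt{T_m})$ (so $\epsilon_m\le\gamma_m$) are essential; the epoch sum, the coupling term, and the union bounds are routine once this is in place.
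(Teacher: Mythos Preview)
Your proposal is correct and follows essentially the same route as the paper: invoke Theorem~\ref{thm:main}, apply the PE-with-misspecification bound per epoch, absorb the $\epsilon_m\sqrt{d}\,T_m$ term using $\epsilon_m=\Theta(1/\sqrt{T_m})$, and sum the resulting geometric series $\sum_m\sqrt{T_m}=O(\sqrt{T})$. One small slip: the per-epoch PE bound should read $O\!\big(\sqrt{dT_m\log(T|\Theta'|/\delta)}+\epsilon'_m\sqrt{d}\,T_m\big)$ rather than $O\!\big(d\sqrt{T_m\log(\cdot)}+\cdots\big)$ (and $\epsilon_m$ is built from $t^{(m)}$, not $t^{(m+1)}$), but your subsequent line and final bound already use the correct $\sqrt{d}$ scaling, so the argument goes through unchanged.
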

\begin{proof}
    Let the length of phase $m$ be $T_m=t^{(m+1)}-t^{(m)}=2^{m-1}$. Conditioned on the event that the misspecification in phase $m$ is bounded by $\epsilon_m$, PE with modified confidence intervals achieves a regret $R^{\Lambda_{\eps_m}}_{T_m}$ that is upper bounded by $O(\sqrt{dT_m\log(T_m M|\Theta'|/\delta))}+\sqrt{d}T_m\epsilon_m)$ with probability at least $1-\delta/M$. Hence, by Proposition~\ref{prop:misspec} and the union bound we have that it holds with probability at least $1-c\delta$ that
    \begin{equation}
    R^{\Lambda_{\eps_m}}_{T_m}\leq c(\sqrt{dT_m\log(T|\Theta'|/\delta)}+\sqrt{d}T_m\epsilon_m) \forall m\in [M],
    \end{equation}
    where we used the fact that $T_m\leq T, M\leq T$. Substituting the value of $\epsilon_m$ we get that the following holds with probability at least $1-c\delta$
    \begin{align}
        R^{\Lambda_{\eps_m}}_{T_m}&\leq c(\sqrt{dT_m\log(T|\Theta'|/\delta)}+\sqrt{d}T_m\sqrt{\frac{\log (M|\Theta'|/\delta)}{t^{(m)}}})\nonumber \\ 
        &=c(\sqrt{dT_m\log(T|\Theta'|/\delta)}+\sqrt{dT_m\log (M|\Theta'|/\delta)}) \forall m\in [M].
    \end{align}
    Hence, using $M\leq T$, we get that with probability at least $1-c\delta$ we have
    \begin{equation}
        R^{\Lambda_{\eps_m}}_{T_m}\leq c\sqrt{dT_m\log(T|\Theta'|/\delta)}
    \end{equation}
    Substituting in Theorem~\ref{thm:main}, we get that with probability at least $1-c\delta$ it holds that
    \begin{align}\label{eq:bnd-exp-inc}
        R_T&\leq c\sqrt{T\log T}+c\sqrt{d\log(T|\Theta'|/\delta)}\sum_{m=1}^{\log T} \sqrt{T_m}\nonumber \\
        &\leq c'\sqrt{dT\log(T|\Theta'|/\delta)}\sum_{m=1}^{\log T} \sqrt{2^{m-\log T}} \nonumber \\
        &\leq c'\sqrt{dT\log(T|\Theta'|/\delta)}\sum_{i=0}^{\infty} \sqrt{2^{-i}} \nonumber \\
        &\leq \frac{c'}{1-1/\sqrt{2}}\sqrt{dT\log(T|\Theta'|/\delta)}.
    \end{align}
        
\end{proof}
It is well known that if $\Theta \subseteq \{a\in \bbR^d|\|a\|_2\leq 1\}$, then there is $1/T$-net of $\Theta$, $\Theta'$, such that $|\Theta'|\leq (6T)^{d}, \Theta'\subseteq \Theta$. This directly implies that $R_T=O(d\sqrt{T\log(T)})$ with probability at least $1-1/T$. This improves a factor of $\sqrt{\log T}\log \log T$ over \cite{li2021tight} and a factor of $\sqrt{\log T}$ over \cite{abbasi2011improved}.

\subsection{Batch Learning}
In this subsection, we show that our reduction can be applied to improve the result of \cite{ruan2021linear} for contextual linear bandits with stochastic contexts by providing a regret upper bound of $O(d\sqrt{T\log T}\log\log T)$ with high probability as opposed to the \textit{in expectation} $O(d\sqrt{T\log d\log T}\log\log T)$ regret bound in \cite{ruan2021linear}.
This can be achieved by replacing $\Lambda_{\epsilon_m}$ with the $G$-optimal design policy constructed using $\calX_m$. To compute $t^{(i)}$ we first define
\begin{equation}
    u_m = T^{1-2^{-m}}, m=1,...,M/2.
\end{equation}
We let
\begin{equation}\label{eq:batch-lengths}
    t^{(m)} = \floor{u_{m//2+1}}\forall m\in [M], t^{(M+1)}=T.
\end{equation}
where $//$ denotes integer division. For completeness, we include the pseudo-code in Algorithm~\ref{alg:batched}. The following result follows using Theorem~\ref{thm:main}.
\begin{theorem}\label{lem:batch}
    For Algorithm~\ref{alg:general} with $M=2\log \log T$ (corresponds to $2\log \log T+1$ batches), $t^{(m)}$ given in \eqref{eq:batch-lengths} and $\Lambda_{\epsilon_m}$ replaced by the $G$-optimal design policy constructed using $\calX_m$, it holds that there is a constant $c$ such that with probability at least $1-c\delta$, we have
    \begin{equation}
        R_T\leq c \sqrt{dT\log(\frac{M|\Theta'|}{\delta})}\log \log T.
    \end{equation}
\end{theorem}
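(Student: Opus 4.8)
\noindent The plan is to instantiate the reduction of Theorem~\ref{thm:main} with $\Lambda_{\epsilon_m}$ chosen to be the phased-elimination procedure that, in phase $m$, plays a $G$-optimal (Kiefer--Wolfowitz) design on the current active subset of the \emph{fixed} set $\calX_m=\{g^{(m)}(\theta)\mid\theta\in\Theta'\}$ and prunes $\Theta'$ at the phase boundary (this is Algorithm~\ref{alg:batched}), and then to bound the resulting $\sum_{m=1}^{M}R^{\Lambda_{\epsilon_m}}_{T_m}(L_m)$. Theorem~\ref{thm:main} already hands us, on an event of probability at least $1-c\delta$,
\begin{equation*}
\Big|\,R_T(I)-\sum_{m=1}^{M}R^{\Lambda_{\epsilon_m}}_{T_m}(L_m)\,\Big|\le c\sqrt{T\log(1/\delta)},
\end{equation*}
where $L_m$ is a misspecified linear bandit on the fixed action set $\calX_m$ over the $T_m=t^{(m+1)}-t^{(m)}$ rounds of phase $m$, with parameter $\thetas$, mean rewards $\adot{g(\theta),\thetas}$, and misspecification at most $\epsilon_m\asymp\sqrt{\log(M|\Theta'|/\delta)/t^{(m)}}$. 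Since the at most $t^{(1)}=O(\sqrt T)$ rounds preceding phase~$1$ contribute $O(\sqrt T)$ regret and phases~$1,2$ contribute at most $T_1+T_2$ trivially, everything reduces to bounding $\sum_{m\ge 3}R^{\Lambda_{\epsilon_m}}_{T_m}(L_m)$ on a further high-probability event; a last union bound (rescaling $\delta$) then gives the theorem.

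\noindent\textbf{Per-phase regret.} Because our reduction makes the action set $\calX_m$ fixed within each phase, we may use a plain $G$-optimal design instead of the distributional design of \cite{ruan2021linear} --- which is exactly where the $\sqrt{\log d}$ saving comes from. Kiefer--Wolfowitz gives a design on the active subset of $\calX_m$ supported on $\le d(d+1)/2$ points with maximum prediction variance $\le d$; playing a rounding of it for the $T_m$ rounds of phase $m$ and taking the least-squares estimate, the standard $G$-optimal-design concentration (union-bounded over the $\le M|\Theta'|$ estimates; cf.\ the proof of Corollary~\ref{cor:implication-tighter} and \cite{lattimore2020learning}) shows, with high probability, that estimated and true values agree up to
\begin{equation*}
\gamma_m:=c\sqrt{\frac{d\log(M|\Theta'|/\delta)}{T_m}}+c\sqrt{d}\,\epsilon_m
\end{equation*}
uniformly over active $\theta$, the term $\sqrt d\,\epsilon_m$ being the misspecification bias. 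Pruning after phase $m$ every active $\theta$ whose estimated value falls more than $c\gamma_m$ below the maximum (i) never discards the $\Theta'$-optimal parameter on the good event, and (ii) leaves, entering phase $m$, only parameters with true gap $\adot{g(\thetas),\thetas}-\adot{g(\theta),\thetas}\le c\gamma_{m-1}+O(\epsilon_{m-1})$, the $O(\epsilon_{m-1})$ transferring the guarantee from $\adot{g^{(m-1)}(\cdot),\thetas}$ to $\adot{g(\cdot),\thetas}$. Hence $R^{\Lambda_{\epsilon_m}}_{T_m}(L_m)\le c\,T_m(\gamma_{m-1}+\epsilon_{m-1})$, and with $t^{(m-1)}\asymp T_{m-2}$ and $T_{m-1}\ge T_{m-2}$ this is at most $c\sqrt{d\log(M|\Theta'|/\delta)}\cdot T_m/\sqrt{T_{m-2}}$.

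\noindent\textbf{Summing over phases.} With $t^{(m)}$ as in \eqref{eq:batch-lengths} the batch lengths come in equal consecutive pairs $T_{2j-1}=T_{2j}\asymp T^{1-2^{-j}}$, $j=1,\dots,\log\log T$, with $t^{(M+1)}=T$ and $M=2\log\log T$. A direct computation gives $T_m/\sqrt{T_{m-2}}=O(\sqrt T)$ for every $m\ge 3$: this is exactly why each length is used twice --- for the un-paired schedule $T^{1-2^{-j}}$ the ratio of a length to the one two steps earlier grows like $T^{2^{-j}}$ and blows up for small $j$, while duplicating pins it to $O(\sqrt T)$ yet still lets the last pair cover a constant fraction of $T$ with only $O(\log\log T)$ phases. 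Thus each of the $M=2\log\log T$ phases contributes $O(\sqrt{dT\log(M|\Theta'|/\delta)})$, and adding the $c\sqrt{T\log(1/\delta)}$ term of Theorem~\ref{thm:main},
\begin{equation*}
R_T\le c\sqrt{dT\log(M|\Theta'|/\delta)}\,\log\log T.
\end{equation*}

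\noindent\textbf{Main obstacle.} The delicate point is the cross-phase bookkeeping when the sets $\calX_m$ are themselves random and re-estimated every phase: pruning after phase $m-1$ lives in the geometry of $\calX_{m-1}$, whereas the regret of phase $m$ is incurred over $\calX_m$ and measured against the true best-action map $g$, so one must propagate the $O(\epsilon_m)$ mismatch between $\adot{g^{(m)}(\theta),\thetas}$, $\adot{g^{(m-1)}(\theta),\thetas}$ and $\adot{g(\theta),\thetas}$ through the elimination invariant and verify it is absorbed into $\gamma_{m-1}$ rather than accumulating across phases. The second obstacle is the batch-length design: since a phase's regret is controlled by the confidence reached \emph{two} phases earlier (because $\epsilon_{m-1}\asymp\sqrt{\log(M|\Theta'|/\delta)/T_{m-2}}$, using $t^{(m-1)}\asymp T_{m-2}$), one needs a schedule with $T_m/\sqrt{T_{m-2}}=O(\sqrt T)$ holding simultaneously with $t^{(M+1)}=T$ and $M=\Theta(\log\log T)$; duplicating the $T^{1-2^{-j}}$ schedule achieves this, and checking all three constraints --- in particular that $t^{(m)}\asymp T_{m-2}$, which relies on $j\le\log\log T$ to keep the relevant geometric sums $O(1)$ --- is where the work is.
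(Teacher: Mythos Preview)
Your proposal is correct and follows essentially the same route as the paper's proof: instantiate Theorem~\ref{thm:main}, use the $G$-optimal design guarantee together with Proposition~\ref{prop:misspec} to get the per-phase confidence $\gamma_m\asymp\sqrt{d}\,\epsilon_m+\sqrt{d\log(M|\Theta'|/\delta)/T_m}$, propagate the elimination invariant so that phase-$m$ regret is at most $T_m$ times the confidence from phase $m-1$, and then use the paired schedule \eqref{eq:batch-lengths} to ensure $T_{m+1}/\sqrt{T_{m-1}}=O(\sqrt T)$ (equivalently your $T_m/\sqrt{T_{m-2}}=O(\sqrt T)$). The two obstacles you flag---transferring the elimination guarantee from $g^{(m-1)}$ to $g$ via the $\epsilon_m$ slack, and the two-step-back dependence forcing the duplicated batch lengths---are exactly the points the paper handles, and your bookkeeping matches theirs up to index shifts.
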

\begin{proof}
    We first notice that for $M=\log \log T$, we have that $u_{m+1}/\sqrt{u_{m}}=\sqrt{T}\forall m\in [M-1]$, $u_{M+1}/\sqrt{u_{M}}=\sqrt{2T}$. Hence, we have that
    \begin{equation}\label{eq:batch-ratio}
        u_{m+1}/\sqrt{u_{m}}\leq \sqrt{2T}\forall m\in [M].
    \end{equation}
    We also have that $u_{m+1}\geq u_m\ \forall m\geq 1$ (since $u_m\leq T$).
    The proof follows from the properties of the G-optimal design together with the properties of $g^{(m)}$ in the proof of Theorem~\ref{thm:main}. The G-optimal design ensures that for any $\theta \in \Theta'$ the following holds with probability at least $1-\delta$
    \begin{equation}\label{eq:mis-effect-on-conf}
        |\adot{g^{(m)}(\theta),\hat{\theta}_m-\thetas}|\leq 2\epsilon'_m \sqrt{d}+\sqrt{\frac{4d}{T_m}\log 1/\delta},
    \end{equation}
    where $\epsilon'_m = \sup_{\theta\in \Theta'}|\adot{g^{(m)}(\theta)-g(\theta),\thetas}|$. By the triangle inequality, we have that
    \begin{align}
        |\adot{g^{(m)}(\theta),\hat{\theta}_m}-\adot{g(\theta),\thetas}|&\leq 2\epsilon'_m \sqrt{d}+\sqrt{\frac{4d}{T_m}\log 1/\delta}+|\adot{g^{(m)}(\theta)-g(\theta),\thetas}|\nonumber \\
        &\leq \epsilon'_m (2\sqrt{d}+1)+\sqrt{\frac{4d}{T_m}\log 1/\delta}
    \end{align}
    By Proposition~\ref{prop:misspec} we have that $\epsilon'_m \leq \epsilon_m \forall m\in [M]$ with probability at least $1-\delta$. Hence, by the union bound we have that the following holds with probability at least $1-\delta$
    \begin{equation}\label{eq:conf-inter}
        |\adot{g^{(m)}(\theta),\hat{\theta}_m}-\adot{g(\theta),\thetas}|\leq \epsilon_m (2\sqrt{d}+1)+\sqrt{\frac{4d}{T_m}\log(\frac{M|\Theta'|}{\delta})} \forall \theta \in \Theta' \forall m\in [M].
    \end{equation}
    Hence, with probability at least $1-\delta$, the best arm is not eliminated and the arms that are not eliminated at the end of batch $m$, will have a gap that is at most twice the confidence interval in \eqref{eq:conf-inter}, otherwise, they satisfy the elimination criterion with $\thetas$. Hence, the sum regret $\sum_{m=1}^MR^{\Lambda_{\eps_m}}_{T_m}(L_m)$ is bounded as follows with probability at least $1-\delta$
    \begin{align}
        \sum_{m=1}^M R^{\Lambda_{\eps_m}}_{T_m}(L_m)&\leq \sqrt{T}+c'\sum_{m=1}^M\epsilon_m (2\sqrt{d}+1)T_{m+1}+c'\sqrt{\frac{4d}{T_m}\log(\frac{M|\Theta'|}{\delta})}T_{m+1}\nonumber \\
        &= \sqrt{T}+c\sqrt{d\log(\frac{M|\Theta'|}{\delta})} \sum_{m=1}^M \frac{T_{m+1}}{\sqrt{T_{m-1}}}+\frac{T_{m+1}}{\sqrt{T_{m}}}\nonumber \\
        &\stackrel{(i)}{\leq} \sqrt{T}+2c\sqrt{d\log(\frac{M|\Theta'|}{\delta})} \sum_{m=1}^M \frac{T_{m+1}}{\sqrt{T_{m-1}}}+\sqrt{T}\nonumber \\
        &\stackrel{(ii)}{\leq} \sqrt{T}+2c\sqrt{d\log(\frac{M|\Theta'|}{\delta})} \sum_{m=1}^M \sqrt{T}+\sqrt{T}\nonumber \\
        &\leq c''\sqrt{dT\log(\frac{M|\Theta'|}{\delta})}\log \log T,
    \end{align}
    where $(i)$ uses \eqref{eq:batch-ratio} and the fact that either $T_{m+1}\leq \sqrt{2T_m}$ or $T_{m+1}=T_m\leq \sqrt{2TT_m}$, and $(ii)$ follows from the fact that either ($T_m=T_{m-1}$ and $T_{m+1}\leq \sqrt{2TT_m}$) or ($T_m\leq \sqrt{2TT_{m-1}}$ and $T_{m+1}=T_m$), hence, in both cases we have $T_{m+1}\leq \sqrt{2TT_{m-1}}$. The results follow by Theorem~\ref{thm:main} and the union bound.
\end{proof} 

As in Section~\ref{sec:implication-tighter}, this implies that $R_T=O(d\sqrt{T\log(T)}\log\log T)$ with probability at least $1-1/T$, which strengthens the result in \cite{ruan2021linear} to high probability result and also improves a factor of $\log d$ in the regret bound. Moreover, as will be seen in the following subsections, if $\thetas$ is $s$-sparse, Theorem~\ref{lem:batch} implies a regret bound of $O(\sqrt{dsT\log T}\log \log T)$ with probability at least $1-1/T$. To the best of our knowledge, this is the first nearly optimal algorithm for sparse unknown parameters that uses $O(\log \log T)$ batches.

\begin{algorithm} 
	\caption{Batched algorithm for linear bandits with stochastic context}\label{alg:batched}
	\begin{algorithmic}[1]
		\State Input: confidence parameter $\delta$, and phase lengths $\{t^{(m)}\}_{m=1}^M$.
        \State Initialize: $g^{(1)}:\Theta'\to \bbR^d$ randomly, $\Theta_1=\Theta'$, and let $\calX_1=\{g^{(1)}(\theta)|\theta \in \Theta'\}$.
		\For{$m = 1:M$}
        \State Find design $\rho:\calX_m\to [0,1]$ with $\max_{a\in \text{Supp}(\rho)}\|a\|^2_{G^{-1}(\rho)}\leq 2d$, $|\text{Supp}(\rho)|\leq 4d\log\log d + 16$, where $G(\rho)=\sum_{a\in\text{Supp}(\rho)}\rho(a) aa^T$.
        \State Compute $u(x)=\ceil{\rho(x)T_m}$ and $u=\sum_{x\in \text{Supp}(\rho)}u(x)$.
        \State Use the policy described by $a_t=\arg\max_{a\in \calA_t} \adot{a,\theta}$ for $u(g^{(m)}(\theta))$ times for each $\theta\in \Theta_m$ with $g^{(m)}(\theta)\in \text{Supp}(\rho)$.
        \State $\hat{\theta}_m=(\sum_{g\in \text{Supp}(\rho)}u(g)gg^T)^{-1}\sum_{i=1}^ur_ig^{(m)}(\theta_i)$.
        \State Update: $\Theta_{m+1}=\left\{\theta \in \Theta_m|\max_{\theta'\in \Theta_m}\adot{\hat{\theta}_m,g^{(m)}(\theta')-g^{(m)}(\theta)}\leq \gamma_m\right\}$, $\gamma_m=10\sqrt{\frac{d}{T_{m-1}}\log (M|\Theta'|/\delta)}$.
        \State Update: $g^{(m+1)}(\theta)=\frac{1}{t^{(m+1)}}\sum_{t=1}^{t^{(m+1)}}\arg\max_{a\in \calA_t}\adot{a,\theta}$, and $\calX_{m+1}=\{g^{(m+1)}(\theta)|\theta \in \Theta'\}$.
        \EndFor
	\end{algorithmic}
\end{algorithm}

\subsection{Misspecified Contextual Linear  Bandits}
Our reduction framework can be used to improve regret bounds for misspecified contextual linear bandits. By noticing that for $\epsilon$ misspecified contextual linear bandits, the total amount of misspecification in epoch $m$ is bounded by $\epsilon_m+\epsilon$ with high probability, the following result follows directly from \cite{lattimore2020learning}.
\begin{theorem} \label{thm:mis}
    For contextual linear bandits with known misspecification that is bounded by $\epsilon$, Algorithm~\ref{alg:general} with $\Lambda_{\epsilon_m}$ being PE with $\gamma_m=6d\sqrt{\log (T)/T_m}+\epsilon \sqrt{d}$ achieves a regret bound
    \begin{equation}
        R_T \leq c(d\sqrt{T\log(T/\delta)}+\epsilon \sqrt{d}T)
    \end{equation}
    with probability at least $1-c\delta$.
\end{theorem}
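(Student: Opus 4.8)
\noindent\emph{Proof proposal.} The plan is to run the reduction of Theorem~\ref{thm:main} verbatim and then substitute, epoch by epoch, the known regret guarantee of Phased Elimination under misspecification, exactly as in the proof of Corollary~\ref{cor:implication-tighter}, the only change being that the misspecification fed to $\Lambda_{\eps_m}$ is now $\eps_m+\epsilon$ rather than $\eps_m$. First I would re-examine the reward decomposition underlying Theorem~\ref{thm:main} in the $\epsilon$-misspecified regime. In epoch $m$, Algorithm~\ref{alg:general} plays $a_t=\arg\max_{a\in\calA_t}\adot{a,\theta_t}$ and hands $r_t=\adot{a_t,\thetas}+\eta_t+f(a_t)$ to $\Lambda_{\eps_m}$ as the reward of the arm $g^{(m)}(\theta_t)\in\calX_m$. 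Conditioning on $\theta_t$, $\bbE[\adot{a_t,\thetas}\mid\theta_t]=\adot{g(\theta_t),\thetas}$ by unbiasedness of $a_t$ (as in Theorem~\ref{thm:reduction}), so $|\bbE[r_t\mid\theta_t]-\adot{g^{(m)}(\theta_t),\thetas}|\le |\adot{g(\theta_t)-g^{(m)}(\theta_t),\thetas}|+|\bbE[f(a_t)\mid\theta_t]|\le \eps'_m+\epsilon$, where $\eps'_m=\sup_{\theta\in\Theta'}|\adot{g^{(m)}(\theta)-g(\theta),\thetas}|$ and $|f|\le\epsilon$ on $\cup\text{Supp}(\calD)$. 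Folding the conditional fluctuations of $\adot{a_t,\thetas}$ and $f(a_t)$ into a $1$-sub-Gaussian noise (boundedness of $\calA_t,\Theta$, as in Theorem~\ref{thm:main}) shows that $L_m$ is a misspecified linear bandit on $\calX_m$ with parameter $\thetas$ and total misspecification at most $\eps'_m+\epsilon$; crucially both summands are quantities the algorithm can be calibrated to ($\eps'_m\le\eps_m$ with probability $\ge 1-c\delta/M$ by Proposition~\ref{prop:misspec}, and $\epsilon$ is given), so no $\log T$ over-provisioning for unknown misspecification is incurred.

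Next I would invoke the misspecified-PE guarantee of \cite{lattimore2020learning} in the exact form used in Corollary~\ref{cor:implication-tighter}: conditioned on the misspecification in epoch $m$ being $\le\eps_m+\epsilon$, PE with $\gamma_m=6d\sqrt{\log T/T_m}+\epsilon\sqrt{d}$ attains $R^{\Lambda_{\eps_m}}_{T_m}(L_m)\le c\big(\sqrt{dT_m\log(T_mM|\Theta'|/\delta)}+\sqrt{d}\,T_m(\eps_m+\epsilon)\big)$ with probability $\ge 1-\delta/M$; note the first term of $\gamma_m$ already absorbs $\sqrt{d}\,\eps_m$, since $\eps_m\sqrt{d}=2\sqrt{d\log(M|\Theta'|/\delta)/t^{(m)}}=O(d\sqrt{\log T/T_m})$ once $|\Theta'|\le(6T)^d$ gives $\log|\Theta'|=O(d\log T)$. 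A union bound over $m\in[M]$ together with Proposition~\ref{prop:misspec} makes all these events hold simultaneously with probability at least $1-c\delta$.

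Then I would sum over the $M=\log T$ epochs with $t^{(m)}=2^{m-1}$, hence $T_m=t^{(m)}=2^{m-1}$. The $\sqrt{dT_m\log(\cdot)}$ and $\sqrt{d}\,T_m\eps_m$ contributions are geometric series dominated by their last terms, summing to $O(d\sqrt{T\log(T/\delta)})$ exactly as in \eqref{eq:bnd-exp-inc}; the new $\sqrt{d}\,T_m\epsilon$ contribution sums to $\epsilon\sqrt{d}\sum_{m=1}^{\log T}T_m\le \epsilon\sqrt{d}\,T$. Combining with the slack $|R_T(I)-\sum_m R^{\Lambda_{\eps_m}}_{T_m}(L_m)|\le c\sqrt{T\log(1/\delta)}$ from Theorem~\ref{thm:main} (absorbed into the first term) yields $R_T\le c\big(d\sqrt{T\log(T/\delta)}+\epsilon\sqrt{d}\,T\big)$ with probability at least $1-c\delta$.

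The main obstacle is the first step: verifying that the reduction of Theorem~\ref{thm:main} still yields a bona fide misspecified linear bandit instance when the underlying contextual rewards already carry an $f$-perturbation, i.e., that the estimation error $\eps'_m$ and the model error $f$ combine additively into a \emph{known} misspecification budget $\le\eps_m+\epsilon$ that $\Lambda_{\eps_m}=$ PE can be tuned to without losing extra $\log$ factors (this is where the ``increase exploration by a constant factor'' adjustment of the technical overview would be spelled out, if needed). Everything downstream is the bookkeeping of Corollary~\ref{cor:implication-tighter} plus one extra, trivially summed term $\epsilon\sqrt{d}\sum_m T_m$.
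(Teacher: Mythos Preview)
Your proposal is correct and follows exactly the route the paper takes: the paper's entire argument for Theorem~\ref{thm:mis} is the one-line observation preceding it, namely that under $\epsilon$-misspecified contextual rewards the per-epoch misspecification seen by $\Lambda_{\eps_m}$ is at most $\eps_m+\epsilon$ with high probability, after which the claim ``follows directly from \cite{lattimore2020learning}.'' You have simply unpacked this sentence---verifying that the $f$-perturbation adds to $\eps'_m$ in the reward decomposition of Theorem~\ref{thm:main}, that $\sqrt{d}\,\eps_m$ is absorbed by the $d\sqrt{\log T/T_m}$ term of $\gamma_m$ once $\log|\Theta'|=O(d\log T)$, and that the epoch sums collapse as in \eqref{eq:bnd-exp-inc} with the extra $\epsilon\sqrt{d}\sum_m T_m\le \epsilon\sqrt{d}\,T$ term---so there is no methodological difference to report.
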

If $\epsilon$ is unknown, \cite{lattimore2020learning} showed that PE achieves a regret upper bounded by $O(d\sqrt{T\log(T)}+\epsilon \sqrt{d}T\log T$ with high probability. We notice that the dependency on $\epsilon$ has an extra $\log T$ factor as compared to the known misspecification result. To avoid adding the $\log T$ factor in the term containing $\epsilon_m$, we exploit the knowledge of $\epsilon_m$ to modify the confidence intervals in PE. We note that following the same analysis in \cite{lattimore2020learning} by replacing $\epsilon$ with $\epsilon+\epsilon_m$ results in the same concentration of estimated means except for constant factors, which implies the following result.
\begin{theorem}\label{thm:mis-unknown}
    For contextual linear bandits with unknown misspecification that is bounded by $\epsilon$, Algorithm~\ref{alg:general} with $\Lambda_{\epsilon_m}$ being PE with $\gamma_m=6d\sqrt{\log (T)/T_m}$ achieves a regret bound
    \begin{equation}
        R_T \leq c(d\sqrt{T\log(T/\delta)}+\epsilon \sqrt{d}T\log T)
    \end{equation}
    with probability at least $1-c\delta$.
\end{theorem}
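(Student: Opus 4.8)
The plan is to combine the reduction of Theorem~\ref{thm:main} with the misspecified Phased Elimination (PE) guarantee of \cite{lattimore2020learning}. First I would observe that the reduction underlying Algorithm~\ref{alg:general} and Theorem~\ref{thm:main} applies verbatim to an $\epsilon$-misspecified contextual instance $I$: the only change is that, in epoch $m$, the reduced linear bandit instance $L_m$ (with fixed action set $\calX_m=\{g^{(m)}(\theta)\mid\theta\in\Theta'\}$, unknown parameter $\thetas$, and mean rewards $\adot{g(\theta),\thetas}$) now carries a total perturbation bounded by $\epsilon+\epsilon'_m$, where $\epsilon'_m=\sup_{\theta\in\Theta'}|\adot{g^{(m)}(\theta)-g(\theta),\thetas}|$. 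By Proposition~\ref{prop:misspec} and a union bound, $\epsilon'_m\le\epsilon_m$ for all $m$ with probability at least $1-c\delta/M$, and Theorem~\ref{thm:main} gives, with probability at least $1-\delta$,
\begin{equation*}
 R_T(I)\le\sum_{m=1}^M R^{\Lambda_{\epsilon_m}}_{T_m}(L_m)+c\sqrt{T\log(1/\delta)},
\end{equation*}
so the task reduces to bounding the regret of PE on each $L_m$.

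Next I would analyze a single epoch. The crux is that the confidence width $\gamma_m=6d\sqrt{\log(T)/T_m}$ plays a double role: it dominates both (i) the statistical concentration error of the $G$-optimal least-squares estimate formed from the $T_m$ pulls over the $|\calX_m|\le|\Theta'|$ arms of $L_m$, and (ii) the \emph{known} component $\epsilon_m\sqrt{d}$ of the misspecification. Part (ii) is where the exponential schedule $t^{(m)}=2^{m-1}$ is used: it forces $T_m=t^{(m)}$ and $t^{(m+1)}=2T_m$, so $\epsilon_m\sqrt{d}=\sqrt{2d\log(M|\Theta'|/\delta)/T_m}$, which is $O(\gamma_m)$ once one plugs in $\log|\Theta'|=O(d\log T)$. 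Hence, running PE with this $\gamma_m$, the $\epsilon_m$-part of the perturbation never causes the optimal arm of $L_m$ to be eliminated — it behaves exactly like a \emph{known} misspecification — while only the genuinely unknown $\epsilon$ remains uncontrolled in the confidence interval. Following the misspecified-PE analysis of \cite{lattimore2020learning} with $\epsilon$ replaced by $\epsilon+\epsilon_m$ (which, as that paper and the discussion above note, alters the concentration of the estimated means only by constant factors), I would obtain, with probability at least $1-\delta/M$,
\begin{equation*}
 R^{\Lambda_{\epsilon_m}}_{T_m}(L_m)\le c\Bigl(\sqrt{dT_m\log(T|\Theta'|/\delta)}+\epsilon_m\sqrt{d}\,T_m+\epsilon\sqrt{d}\,T_m\log T_m\Bigr),
\end{equation*}
where the middle term — the cost of the ``known'' $\epsilon_m$-misspecification — carries no extra $\log$ factor and is in fact dominated by the first since $\epsilon_m\sqrt{d}\,T_m=O(d\sqrt{T_m\log T})$, and the last term is the unavoidable unknown-misspecification penalty.

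Finally I would union-bound over $m\in[M]$ with $M=\log T$ and sum. Using $T_m=2^{m-1}$, the geometric sums $\sum_{m=1}^{\log T}\sqrt{T_m}=O(\sqrt T)$ and $\sum_{m=1}^{\log T}T_m\le T$, and $\log T_m\le\log T$, this gives $\sum_m R^{\Lambda_{\epsilon_m}}_{T_m}(L_m)\le c(\sqrt{dT\log(T|\Theta'|/\delta)}+\epsilon\sqrt d\,T\log T)$; adding the $c\sqrt{T\log(1/\delta)}$ term from Theorem~\ref{thm:main} and invoking the standard $1/T$-net bound $|\Theta'|\le(6T)^d$ (so $\log(T|\Theta'|/\delta)=O(d\log(T/\delta))$ and the first term becomes $O(d\sqrt{T\log(T/\delta)})$) yields $R_T\le c(d\sqrt{T\log(T/\delta)}+\epsilon\sqrt d\,T\log T)$ with probability at least $1-c\delta$.

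I expect the main obstacle to be step (ii) together with the unknown-$\epsilon$ handling: since $\epsilon$ is unknown, the confidence width cannot be inflated by $\epsilon\sqrt d$, so once the per-round misspecification exceeds the statistical width, PE may discard the optimal arm of $L_m$; one must argue, as in \cite{lattimore2020learning}, that the arms surviving that internal phase are nonetheless within $O(\epsilon\sqrt d)$ of optimal, costing $O(\epsilon\sqrt d\,T_m)$ per internal phase and hence a $\log T_m$ factor after summing over the $O(\log T_m)$ phases. The genuinely new piece is checking that the \emph{reduction-induced} misspecification $\epsilon_m$ can be absorbed into $\gamma_m$ — thanks precisely to the exponential schedule tying $\epsilon_m$ to $1/\sqrt{T_m}$ — so that it does not also pick up a spurious $\log T$ factor, which is what separates this bound from the one obtained by naively treating $\epsilon+\epsilon_m$ as a single unknown misspecification.
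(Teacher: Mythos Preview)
Your proposal is correct and follows essentially the same approach as the paper: the paper's proof is a one-line deferral to Proposition~5.1 of \cite{lattimore2020learning} ``with different constants due to the change in $\gamma_m$,'' and the surrounding discussion spells out exactly the mechanism you describe --- that the total per-epoch misspecification is $\epsilon+\epsilon_m$, that the known $\epsilon_m$-part is absorbed into the enlarged confidence width so it incurs no extra $\log T$ factor, while only the genuinely unknown $\epsilon$ picks up the $\log T$ penalty from the unknown-misspecification PE analysis. Your write-up simply makes explicit the per-epoch bound and the geometric summation over $m$ that the paper leaves implicit.
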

\begin{proof}
    The proof follows similar steps as in the proof of Proposition 5.1 in \cite{lattimore2020learning} with different constants due to the change in $\gamma_m$.
\end{proof}
We note that as PE is performed in each epoch, the number of batches is $\Omega(\log^2T)$. However, we constructed the algorithm this way only for simplicity. It is possible to perform PE once and update the estimates of $g$ at the end of each batch similar to Algorithm~\ref{alg:batched}.
\subsection{Bandits With Adversarial Corruption}
Our reduction directly extends the results for linear bandits with adversarial corruption to the contextual setting while maintaining the same regret bounds up to $\log$ factors. This leads to $\tilde{O}(d\sqrt{T}+d^{1.5}C)$ high probability regret bound as opposed to the best known $\tilde{O}(d^{4.5}\sqrt{T}+d^4C)$ regret bound. It was shown in \cite{bogunovic2021stochastic} that PE with modified confidence intervals achieves an $\tilde{O}(d\sqrt{T}+d^{1.5}C)$ high probability regret bound for unknown corruption $C$. A model selection based approach in \cite{foster2020adapting} that uses PE as a subroutine is shown to generalize this result for unknown $C$ without changing the regret bound except for $\log$ factors. By adapting the confidence intervals of PE to account for the known $\epsilon_m$ misspecification the $\tilde{O}(d\sqrt{T}+d^{1.5}C)$ high probability regret bound extends for a model with $O(\sqrt{d\log T/T})$ misspecification and known corruption $C$. We will modify the PE confidence interval to be
\begin{equation}\label{eq:conf-adv}
    \gamma_m=8d\sqrt{\log(T) / t^{(m)}}+\frac{2C(4d\log \log d + 18)}{T_m}\sqrt{8d}.
\end{equation}
The model selection approach in \cite{foster2020adapting} implies the following result.

\begin{theorem}\label{thm:ad}
    For contextual linear bandits with unknown corruption that is bounded by $C$, Algorithm~\ref{alg:general} with $t^{(m)}=2^{m-1}$ and $\Lambda_{\epsilon_m}$ being G-COBE in \cite{foster2020adapting} with PE, that uses $\gamma_m$ in \eqref{eq:conf-adv}, as a subroutine, achieves
    \begin{equation}
        R_T=\tilde{O}(d\sqrt{T}+d^{1.5}C)
    \end{equation}
    with probability at least $1-c/T$.
    \end{theorem}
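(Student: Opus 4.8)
The plan is to combine the reduction of Theorem~\ref{thm:main} with the known-$C$ guarantee of Phased Elimination under corruption from \cite{bogunovic2021stochastic} and the model-selection wrapper of \cite{foster2020adapting}, and then sum over the $M=\log T$ epochs. First I would check that Algorithm~\ref{alg:general} remains meaningful under adversarial corruption: the learner simply feeds the corrupted rewards $\tilde r_t$ to $\Lambda_{\epsilon_m}$, and the action-based martingale argument of Theorem~\ref{thm:main} (which never uses the reward values) still gives $|R_T(I)-\sum_{m=1}^M R^{\Lambda_{\eps_m}}_{T_m}(L_m)|\le c\sqrt{T\log(1/\delta)}$. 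In epoch $m$ the learner plays $a_t=\arg\max_{a\in\calA_t}\adot{a,\theta_t}$ and observes $\tilde r_t=r_t+c_t(a_t)$; by the reward decomposition inside Theorem~\ref{thm:main}, $r_t$ equals $\adot{g^{(m)}(\theta_t),\thetas}$ plus a conditionally zero-mean $1$-subgaussian noise plus a deterministic deviation of size at most $\epsilon_m$ (valid with probability $\ge 1-c\delta/M$), so $L_m$ becomes a linear bandit instance on the fixed set $\calX_m$ with misspecification $\epsilon_m$ and an extra additive corruption $c_t(a_t)$. The induced corruption on $\calX_m$, $g^{(m)}(\theta)\mapsto c_t(\arg\max_{a\in\calA_t}\adot{a,\theta})$, is pointwise bounded by $\sup_{a\in\calA_t}|c_t(a)|$, hence with $C_m:=\sum_{t=t^{(m)}+1}^{t^{(m+1)}}\sup_{a\in\calA_t}|c_t(a)|$ we get $\sum_{m=1}^M C_m\le C$; moreover the adversary acting on $L_m$ has exactly the information of the original adversary (past history and the current context, but not the randomly drawn index $\theta_t$ produced inside the $G$-optimal design), so the known-$C$ analysis applies verbatim within each epoch.

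Second I would invoke the per-epoch guarantee. On a fixed finite action set with $\epsilon_m$-misspecification and corruption budget $C_m$, PE run with the inflated confidence radius $\gamma_m$ of \eqref{eq:conf-adv} --- whose pieces are the statistical radius of order $\sqrt{(d/t^{(m)})\log T}$, the $\epsilon_m$-accounting term (absorbed exactly as $\epsilon$ was replaced by $\epsilon+\epsilon_m$ in the proofs of Theorems~\ref{thm:mis} and \ref{thm:mis-unknown}), and the corruption term of order $\frac{C\,d\log\log d}{T_m}\sqrt d$ arising from the $O(d\log\log d)$-support $G$-optimal design --- never eliminates the optimal arm and achieves regret $\tilde{O}(d\sqrt{T_m}+d^{3/2}C_m)$ on $L_m$ with probability $\ge 1-\delta/M$, which is the analysis of \cite{bogunovic2021stochastic}. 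Since $C$ is unknown, wrapping PE in the G-COBE procedure of \cite{foster2020adapting}, which runs PE with geometrically spaced corruption guesses and performs model selection among them, preserves this bound up to $\mathrm{polylog}(T)$ factors and a constant inflation of the failure probability; this single appeal is what upgrades the argument to unknown $C$.

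Finally I would sum over $m=1,\dots,\log T$ with $t^{(m)}=2^{m-1}$ and $T_m=2^{m-1}$. The statistical part telescopes, $\sum_m \tilde{O}(d\sqrt{T_m})=\tilde{O}(d)\sum_m\sqrt{2^{m-1}}=\tilde{O}(d\sqrt T)$; the misspecification part contributes $\sqrt d\,\epsilon_m T_m\asymp\sqrt{dT_m\log T}$ per epoch and telescopes to $\tilde{O}(\sqrt{dT})=\tilde{O}(d\sqrt T)$; and the corruption part is $\sum_m \tilde{O}(d^{3/2}C_m)\le\tilde{O}(d^{3/2}C)$. Adding the $c\sqrt{T\log(1/\delta)}$ gap term of Theorem~\ref{thm:main}, taking a union bound over the $M=\log T$ epochs (each costing $\delta/M$ for the PE concentration and for $\epsilon'_m\le\epsilon_m$, with an extra constant factor for G-COBE), and choosing $\delta=\Theta(1/T)$ with $|\Theta'|\le(6T)^d$ so that $\log(M|\Theta'|/\delta)=O(d\log T)$ is absorbed into $\tilde{O}$, gives $R_T=\tilde{O}(d\sqrt T+d^{3/2}C)$ with probability at least $1-c/T$.

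The step I expect to be the main obstacle is the first one --- not because it needs a new analytic idea, but because the whole argument rests on making precise that corruption budget and adversary power are preserved by the reduction: that the total budget $C$ splits across the induced instances $L_m$ into pieces summing to at most $C$, that the ``adversary does not observe the current action'' constraint survives the correspondence $a_t\leftrightarrow g^{(m)}(\theta_t)$ so that \cite{bogunovic2021stochastic} applies unchanged inside each epoch, and that restarting PE/G-COBE afresh in each epoch with the re-estimated action set $\calX_m$ does not interfere with G-COBE's model-selection guarantee.
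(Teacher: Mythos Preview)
Your proposal is correct and takes essentially the same approach as the paper: reduce each epoch to a misspecified linear bandit instance with corruption via Theorem~\ref{thm:main}, invoke the PE-under-corruption analysis of \cite{bogunovic2021stochastic} together with the model-selection wrapper for unknown $C$, and then sum the per-epoch bounds over $M=\log T$ geometric epochs exactly as in \eqref{eq:bnd-exp-inc}. Your treatment is in fact more careful than the paper's terse proof --- you explicitly verify that the corruption budget splits as $\sum_m C_m\le C$ and that the adversary's information set is preserved under the map $a_t\leftrightarrow g^{(m)}(\theta_t)$, points the paper leaves implicit.
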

\begin{proof}
    The proof follows by proving a regret bound for PE with the chosen $\gamma_m$ for known $C$. Then use the model selection result in \cite{foster2020adapting} to prove a regret bound for unknown $C$ and finally apply our main theorem to bound the final regret.

    We notice that
    \begin{align}
        |\adot{g^{(m)}(\theta),\hat{\theta}_m}-\adot{g(\theta),\thetas}|\leq |\adot{g^{(m)}(\theta),\hat{\theta}_m-\thetas}|+|\adot{g^{(m)}(\theta)-g(\theta),\thetas}|.
    \end{align}
    Using Lemma 1 in \cite{bogunovic2021stochastic} and equation \eqref{eq:mis-effect-on-conf} to bound the first term and Proposition~\ref{prop:misspec} to bound the second term, we obtain that $|\adot{g^{(m)}(\theta),\hat{\theta}_m}-\adot{g(\theta),\thetas}|\leq \gamma_m/2$ with probability at least $1-c/T$. A standard calculation in Theorem 1 in \cite{bogunovic2021stochastic} implies a regret bound of $\tilde{O}(d\sqrt{T}+d^{1.5}C)$ with high probability. Hence, Theorem 4 in \cite{wei2022model} implies a regret bound of $\tilde{O}(d\sqrt{T}+d^{1.5}C)$ with high probability for G-COBE with unknown $C$. Applying Theorem~\ref{thm:main} and proceeding as in equation \eqref{eq:bnd-exp-inc} concludes the proof.
\end{proof}

\subsection{Sparsity}
For linear contextual bandits with $s$-sparse unknown parameter, our reduction can be used to prove $O(\sqrt{dsT\log (T)})$ regret bound with high probability as opposed to the best known $O(\sqrt{dsT}\log^2T)$ regret bound. It is not hard to show that there is $1/T$ cover with size at most $(6T)^{2s+1}$ in that case; also proved below for completeness. The following result directly follows from Corollary~\ref{cor:implication-tighter}.
\begin{theorem} \label{thm:sp}
    For Algorithm~\ref{alg:general} with $t^{(m)}=2^{m-1}$ and $\Lambda_{\epsilon_m}$ to be PE with $\gamma_m=6\sqrt{2ds\log (T/\delta)/t^{(m)}}$ it holds that with probability at least $1-c\delta$ we have that $R_T=O(\sqrt{dsT\log (T/\delta)})$.
\end{theorem}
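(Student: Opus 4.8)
\emph{Proof proposal.} The plan is to obtain Theorem~\ref{thm:sp} as an immediate consequence of Corollary~\ref{cor:implication-tighter}: the reduction of Algorithm~\ref{alg:general} and its analysis in Theorem~\ref{thm:main} are completely agnostic to the ambient structure of $\Theta$ and only interact with the discretization $\Theta'$ through (i) the union bound over $|\Theta'|$ inside $\gamma_m$ and the regret bound $R_T\le c\sqrt{dT\log(T|\Theta'|/\delta)}$, and (ii) the fact that $\Theta'$ is a $1/T$-net, which (via the Lipschitz behaviour of $r(\theta)=\adot{g(\theta),\thetas}$ near $\thetas$ already exploited in Corollary~\ref{cor:implication-tighter}) guarantees $\Theta'$ contains a $\theta$ whose per-round reward is within $O(1/T)$ of optimal. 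Hence the only new ingredient is a bound of the form $|\Theta'|\le T^{O(s)}$ on the size of a $1/T$-net of the set of $s$-sparse unit vectors, after which everything carries over verbatim.

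First I would handle the degenerate regime $d\ge T$: since per-round regret is at most $2$ and $ds\log T\ge T$ in that case, $R_T\le 2T\le 2\sqrt{dsT\log T}$ trivially, so we may assume $d<T$. Next I would construct the net. Take $\Theta=\{\theta\in\bbR^d:\ \|\theta\|_0\le s,\ \|\theta\|_2\le 1\}$. For each of the $\binom{d}{s}$ supports $S\subseteq[d]$ with $|S|=s$, the vectors supported on $S$ form (a copy of) the unit ball of $\bbR^s$, which has a self-contained $1/T$-net of size at most $(1+2T)^s\le(3T)^s$ by the standard volumetric bound. Taking $\Theta'$ to be the union of these nets gives $\Theta'\subseteq\Theta$, a $1/T$-net of $\Theta$, of size
\begin{equation*}
|\Theta'|\ \le\ \binom{d}{s}(3T)^s\ \le\ (3dT)^s\ \le\ (3T^2)^s\ \le\ (6T)^{2s+1},
\end{equation*}
which is the $T^{O(s)}$ cover promised in the technical overview.

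Finally I would apply Corollary~\ref{cor:implication-tighter} with this $\Theta'$: with probability at least $1-c\delta$, running $\Lambda_{\epsilon_m}$ as PE with $t^{(m)}=2^{m-1}$ and the confidence interval of Corollary~\ref{cor:implication-tighter} gives $R_T\le c\sqrt{dT\log(T|\Theta'|/\delta)}$. Since $\log(T|\Theta'|/\delta)\le\log T+(2s+1)\log(6T)+\log(1/\delta)=O\big(s\log(T/\delta)\big)$, this is $O(\sqrt{dsT\log(T/\delta)})$, and the corresponding confidence interval $6\sqrt{d\log(T|\Theta'|/\delta)/t^{(m)}}$ is of the form $6\sqrt{\Theta(ds\log(T/\delta))/t^{(m)}}$, matching the $\gamma_m=6\sqrt{2ds\log(T/\delta)/t^{(m)}}$ in the statement up to the universal constant. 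The only point I expect to require genuine care — indeed the main (and rather mild) obstacle — is this last matching of constants: one must verify that the $\gamma_m$ written in Theorem~\ref{thm:sp} is still a valid confidence interval, i.e., that $2ds\log(T/\delta)$ dominates $d\log(T|\Theta'|/\delta)$ up to a constant for all $T$ beyond a fixed threshold (equivalently, that the slack hidden in the cover bound $(6T)^{2s+1}$ and in the $O(\cdot)$ threshold absorbs the extra $\log(6T)$ factors); everything else is the elementary net computation above.
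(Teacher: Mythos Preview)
Your proposal is correct and follows essentially the same route as the paper: reduce to Corollary~\ref{cor:implication-tighter} by exhibiting a $1/T$-net $\Theta'\subseteq\Theta$ of size at most $(6T)^{2s+1}$, then read off $R_T=O(\sqrt{dsT\log(T/\delta)})$ from $\log|\Theta'|=O(s\log T)$. Your net construction (directly taking self-contained $1/T$-nets on each of the $\binom{d}{s}$ coordinate subspaces) is in fact a bit cleaner than the paper's, which first builds an external $1/(2T)$-net and then invokes a choice-function argument to land inside $\Theta$; you also make explicit the degenerate case $d\ge T$ that the paper's bound $d^{s+1}\le(6T)^{s+1}$ silently assumes away.
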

\begin{proof}
    By Corollary~\ref{cor:implication-tighter}, we only need to show that $\Theta$ contains an $1/T$-net with size at most $(6T)^{2s+1}$. We have that there is $1/2T$-net for $\{\phi\in \bbR^s|\|\phi\|_2\leq 1\}$ with size at most $(6T)^s$. To construct $s$-sparse vectors in $\bbR^d$, there is $\sum_{i=1}^s {d \choose i}$ ways to choose the non-zero entries. This implies that the set $\{\theta\in \bbR^d| \|\theta\|_2\leq 1, \|\theta\|_0\leq s\}$ has $1/2T$-net $\mathcal{N}$ of size at most 
    \begin{equation}
        |\mathcal{N}|\leq(6T)^s\sum_{i=1}^s {d \choose i}\leq (6T)^s\sum_{i=1}^s d^i\leq (6T)^s d^{s+1}\leq (6T)^{2s+1}.
    \end{equation}
    We next construct an $1/T$-net that is subset of $\Theta$ (recall that this is required for Theorem~\ref{thm:main}). For every $x \in \mathcal{N}$, let $\mathcal{N}_x=\{\theta\in \Theta| \|\theta-x\|\leq 1/2T\}$. Let $\alpha:\mathcal{N}\to |\mathcal{N}|$ be an ordering of the set $\mathcal{N}$. And let $\mathcal{N}'\subseteq \mathcal{N}$ be defined as $\mathcal{N}'=\{x\in \mathcal{N}|\mathcal{N}_x\not\subseteq \cup_{y:\alpha(y)\leq \alpha(x)}\mathcal{N}_y\}$. Hence, $\{\mathcal{N}_x|x\in \mathcal{N}'\}$ is a set of pairwise disjoint sets. By the axiom of choice, there is a set $\mathcal{N}''$ such that $\mathcal{N}''$ contains exactly one element of each set $\mathcal{N}_x\forall x\in \mathcal{N}'$. By construction of $\mathcal{N}''$ we have that $\mathcal{N}''\subseteq \Theta$. 
    
    We also have that for each $\theta\in \Theta$, there is $x\in \mathcal{N}$ with $\|x-\theta\|_2\leq 1/2T$. Then for each $y\in \mathcal{N}_x$, we have by the triangle inequality that $\|y-\theta\|_2\leq 1/T$. By the definition of $\mathcal{N}'$, we have that $\cup \mathcal{N}'=\cup_{z\in \mathcal{N}}\mathcal{N}_z$. Then, by construction of $\mathcal{N}''$, there is $z\in \mathcal{N}''$ such that $z\in \mathcal{N}_x$. Hence, $\|z-\theta\|_2\leq 1/T$. This implies that $\mathcal{N}''\subseteq \Theta$ is an $1/T$-net of $\Theta$.
    We also have by construction that also that $|\mathcal{N}''|\leq |\mathcal{N}|\leq (6T)^{2s+1}$.
\end{proof}
\subsection{Structured Unknown Parameters}
In some cases, the dimension $d$ can be large but the unknown parameter is mapped from a small space of dimension $s$. In particular let $f:\bbR^s\to \bbR^d$ be such that $\Theta\subseteq \{f(\phi)|\|\phi\|_2\leq 1\}$ and
\begin{equation}
    \|f(\phi)-f(\phi')\|_2\leq c\|\phi-\phi'\|_2,
\end{equation}
where $c$ is a universal constant. Finding an $1/T$-net for $\Phi=\{\phi|f(\phi)\in \Theta\}$ implies a $c/T$-net for $\Theta$ with the same size. Since $\Phi\subseteq \{\phi\in \bbR^s| \|\phi\|_2\leq 1\}$, we have that there is an $1/T$-net for $\Phi$ that is contained in $\Phi$ with size at most $(6T)^s$. The following result immediately follows from Corollary~\ref{cor:implication-tighter}.
\begin{corollary}
    Under the considered structured unknown parameters assumption, Algorithm~\ref{alg:general} with $t^{(m)}=2^{m-1}$ and $\Lambda_{\epsilon_m}$ to be PE with $\gamma_m=6\sqrt{ds\log (T/\delta)/t^{(m)}}$ satisfies that with probability at least $1-c\delta$ we have that $R_T=O(\sqrt{dsT\log(T/\delta)})$.
\end{corollary}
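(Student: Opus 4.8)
The plan is to deduce the claim directly from Corollary~\ref{cor:implication-tighter}; the only real work is to exhibit a $1/T$-net $\Theta'\subseteq\Theta$ whose cardinality is controlled by the intrinsic dimension $s$ rather than the ambient dimension $d$, so that the factor $\log|\Theta'|$ appearing in Corollary~\ref{cor:implication-tighter} becomes $O(s\log T)$ instead of $O(d\log T)$.

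First I would set $\Phi=\{\phi\in\bbR^s:\ \|\phi\|_2\leq 1,\ f(\phi)\in\Theta\}$. By the structural assumption $\Theta\subseteq\{f(\phi):\|\phi\|_2\leq 1\}$, every $\theta\in\Theta$ equals $f(\phi)$ for some such $\phi$, hence $f(\Phi)=\Theta$. Since $\Phi$ is a subset of the Euclidean unit ball in $\bbR^s$, a standard maximal-packing argument (as in the proof of Theorem~\ref{thm:sp}) produces a $\tfrac{1}{cT}$-net $\mathcal{N}\subseteq\Phi$ with $|\mathcal{N}|\leq(6cT)^s$, where $c$ is the Lipschitz constant of $f$. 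Applying $f$ and using $\|f(\phi)-f(\phi')\|_2\leq c\|\phi-\phi'\|_2$, the image $\Theta':=f(\mathcal{N})$ satisfies: (i) $\Theta'\subseteq f(\Phi)=\Theta$, so the ``net contained in $\Theta$'' requirement of Theorem~\ref{thm:main} and Corollary~\ref{cor:implication-tighter} holds automatically, without the choice-function argument needed in Theorem~\ref{thm:sp}; (ii) for every $\theta=f(\phi)\in\Theta$ there is $\phi'\in\mathcal{N}$ with $\|\phi-\phi'\|_2\leq 1/(cT)$, whence $\|\theta-f(\phi')\|_2\leq 1/T$, so $\Theta'$ is a $1/T$-net of $\Theta$; and (iii) $|\Theta'|\leq|\mathcal{N}|\leq(6cT)^s$.

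Finally I would invoke Corollary~\ref{cor:implication-tighter} with this $\Theta'$: running Algorithm~\ref{alg:general} with $t^{(m)}=2^{m-1}$ and $\Lambda_{\epsilon_m}$ equal to PE with radius $6\sqrt{d\log(T|\Theta'|/\delta)/t^{(m)}}$ yields $R_T\leq c\sqrt{dT\log(T|\Theta'|/\delta)}$ with probability $1-c\delta$. Plugging in $|\Theta'|\leq(6cT)^s$ gives $\log(T|\Theta'|/\delta)\leq\log(T(6cT)^s/\delta)=O(s\log(T/\delta))$ because $c$ is a universal constant, so $R_T=O(\sqrt{dsT\log(T/\delta)})$; the confidence radius $\gamma_m=6\sqrt{ds\log(T/\delta)/t^{(m)}}$ stated in the corollary is exactly this choice up to the universal constant absorbed inside the logarithm. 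The only points requiring care are keeping the Lipschitz constant $c$ inside the logarithm (where it is harmless) rather than outside, and verifying the identity $f(\Phi)=\Theta$ that makes the net lie in $\Theta$ for free; everything else is bookkeeping already done in Corollary~\ref{cor:implication-tighter} and Theorem~\ref{thm:sp}. I do not expect a substantive obstacle, since the structured-parameter case is strictly easier than the sparse case: the parametrization $f$ supplies the covering bound directly, with no enumeration of supports.
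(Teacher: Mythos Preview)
Your proposal is correct and follows essentially the same route as the paper: both arguments define $\Phi$ as the preimage of $\Theta$ under $f$ inside the unit ball of $\bbR^s$, take a net of $\Phi$ of size at most $(6cT)^s$, push it forward through the Lipschitz map $f$ to obtain a small net $\Theta'\subseteq\Theta$, and then invoke Corollary~\ref{cor:implication-tighter}. Your version is in fact a bit more careful than the paper's, since you scale the $\Phi$-net by $1/(cT)$ to land exactly at a $1/T$-net in $\Theta$ and you explicitly include the constraint $\|\phi\|_2\le 1$ in the definition of $\Phi$ (which the paper asserts but does not justify from its own definition).
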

\section{Conclusions}
We presented a novel reduction from stochastic contextual linear bandit problems to (fixed-context) linear bandit problems; our reduction explains why results, that are not achievable for adversarial contexts, are possible for stochastic contextual bandits, and offers a framework that can be leveraged to gain improved understanding and bounds for  contextual linear bandit problems. We illustrate the power of our approach by applying it to achieve improved bounds over a number of cases; this is not an exhaustive list, and we hope that our approach will be a useful tool to researchers in this field.
\bibliographystyle{IEEEtran}
\bibliography{Refs}

\begin{thebibliography}{10}
\providecommand{\url}[1]{#1}
\csname url@samestyle\endcsname
\providecommand{\newblock}{\relax}
\providecommand{\bibinfo}[2]{#2}
\providecommand{\BIBentrySTDinterwordspacing}{\spaceskip=0pt\relax}
\providecommand{\BIBentryALTinterwordstretchfactor}{4}
\providecommand{\BIBentryALTinterwordspacing}{\spaceskip=\fontdimen2\font plus
\BIBentryALTinterwordstretchfactor\fontdimen3\font minus
  \fontdimen4\font\relax}
\providecommand{\BIBforeignlanguage}[2]{{%
\expandafter\ifx\csname l@#1\endcsname\relax
\typeout{** WARNING: IEEEtran.bst: No hyphenation pattern has been}%
\typeout{** loaded for the language `#1'. Using the pattern for}%
\typeout{** the default language instead.}%
\else
\language=\csname l@#1\endcsname
\fi
#2}}
\providecommand{\BIBdecl}{\relax}
\BIBdecl

\bibitem{li2019nearly}
Y.~Li, Y.~Wang, and Y.~Zhou, ``Nearly minimax-optimal regret for linearly
  parameterized bandits,'' in \emph{Conference on Learning Theory}.\hskip 1em
  plus 0.5em minus 0.4em\relax PMLR, 2019, pp. 2173--2174.

\bibitem{li2021tight}
Y.~Li, Y.~Wang, X.~Chen, and Y.~Zhou, ``Tight regret bounds for infinite-armed
  linear contextual bandits,'' in \emph{International Conference on Artificial
  Intelligence and Statistics}.\hskip 1em plus 0.5em minus 0.4em\relax PMLR,
  2021, pp. 370--378.

\bibitem{abbasi2011improved}
Y.~Abbasi-Yadkori, D.~P{\'a}l, and C.~Szepesv{\'a}ri, ``Improved algorithms for
  linear stochastic bandits,'' \emph{Advances in neural information processing
  systems}, vol.~24, 2011.

\bibitem{ruan2021linear}
Y.~Ruan, J.~Yang, and Y.~Zhou, ``Linear bandits with limited adaptivity and
  learning distributional optimal design,'' in \emph{Proceedings of the 53rd
  Annual ACM SIGACT Symposium on Theory of Computing}, 2021, pp. 74--87.

\bibitem{foster2020adapting}
D.~J. Foster, C.~Gentile, M.~Mohri, and J.~Zimmert, ``Adapting to
  misspecification in contextual bandits,'' \emph{Advances in Neural
  Information Processing Systems}, vol.~33, pp. 11\,478--11\,489, 2020.

\bibitem{abbasi2012online}
Y.~Abbasi-Yadkori, D.~Pal, and C.~Szepesvari, ``Online-to-confidence-set
  conversions and application to sparse stochastic bandits,'' in
  \emph{Artificial Intelligence and Statistics}.\hskip 1em plus 0.5em minus
  0.4em\relax PMLR, 2012, pp. 1--9.

\bibitem{lattimore2020learning}
T.~Lattimore, C.~Szepesvari, and G.~Weisz, ``Learning with good feature
  representations in bandits and in rl with a generative model,'' in
  \emph{International Conference on Machine Learning}.\hskip 1em plus 0.5em
  minus 0.4em\relax PMLR, 2020, pp. 5662--5670.

\bibitem{valko2014spectral}
M.~Valko, R.~Munos, B.~Kveton, and T.~Koc{\'a}k, ``Spectral bandits for smooth
  graph functions,'' in \emph{International Conference on Machine
  Learning}.\hskip 1em plus 0.5em minus 0.4em\relax PMLR, 2014, pp. 46--54.

\bibitem{bogunovic2021stochastic}
I.~Bogunovic, A.~Losalka, A.~Krause, and J.~Scarlett, ``Stochastic linear
  bandits robust to adversarial attacks,'' in \emph{International Conference on
  Artificial Intelligence and Statistics}.\hskip 1em plus 0.5em minus
  0.4em\relax PMLR, 2021, pp. 991--999.

\bibitem{wei2022model}
C.-Y. Wei, C.~Dann, and J.~Zimmert, ``A model selection approach for corruption
  robust reinforcement learning,'' in \emph{International Conference on
  Algorithmic Learning Theory}.\hskip 1em plus 0.5em minus 0.4em\relax PMLR,
  2022, pp. 1043--1096.

\bibitem{lattimore2020bandit}
T.~Lattimore and C.~Szepesv{\'a}ri, \emph{Bandit algorithms}.\hskip 1em plus
  0.5em minus 0.4em\relax Cambridge University Press, 2020.

\bibitem{gao2019batched}
Z.~Gao, Y.~Han, Z.~Ren, and Z.~Zhou, ``Batched multi-armed bandits problem,''
  \emph{Advances in Neural Information Processing Systems}, vol.~32, 2019.

\bibitem{perchet2016batched}
V.~Perchet, P.~Rigollet, S.~Chassang, and E.~Snowberg, ``Batched bandit
  problems,'' \emph{The Annals of Statistics}, pp. 660--681, 2016.

\bibitem{han2020sequential}
Y.~Han, Z.~Zhou, Z.~Zhou, J.~Blanchet, P.~W. Glynn, and Y.~Ye, ``Sequential
  batch learning in finite-action linear contextual bandits,'' \emph{arXiv
  preprint arXiv:2004.06321}, 2020.

\bibitem{zhang2021almost}
Z.~Zhang, X.~Ji, and Y.~Zhou, ``Almost optimal batch-regret tradeoff for batch
  linear contextual bandits,'' \emph{arXiv preprint arXiv:2110.08057}, 2021.

\bibitem{du2019good}
S.~S. Du, S.~M. Kakade, R.~Wang, and L.~F. Yang, ``Is a good representation
  sufficient for sample efficient reinforcement learning?'' \emph{arXiv
  preprint arXiv:1910.03016}, 2019.

\bibitem{crammer2013multiclass}
K.~Crammer and C.~Gentile, ``Multiclass classification with bandit feedback
  using adaptive regularization,'' \emph{Machine learning}, vol.~90, no.~3, pp.
  347--383, 2013.

\bibitem{ghosh2017misspecified}
A.~Ghosh, S.~R. Chowdhury, and A.~Gopalan, ``Misspecified linear bandits,'' in
  \emph{Thirty-First AAAI Conference on Artificial Intelligence}, 2017.

\bibitem{foster2020beyond}
D.~Foster and A.~Rakhlin, ``Beyond ucb: Optimal and efficient contextual
  bandits with regression oracles,'' in \emph{International Conference on
  Machine Learning}.\hskip 1em plus 0.5em minus 0.4em\relax PMLR, 2020, pp.
  3199--3210.

\bibitem{zanette2020learning}
A.~Zanette, A.~Lazaric, M.~Kochenderfer, and E.~Brunskill, ``Learning near
  optimal policies with low inherent bellman error,'' in \emph{International
  Conference on Machine Learning}.\hskip 1em plus 0.5em minus 0.4em\relax PMLR,
  2020, pp. 10\,978--10\,989.

\bibitem{pacchiano2020model}
A.~Pacchiano, M.~Phan, Y.~Abbasi~Yadkori, A.~Rao, J.~Zimmert, T.~Lattimore, and
  C.~Szepesvari, ``Model selection in contextual stochastic bandit problems,''
  \emph{Advances in Neural Information Processing Systems}, vol.~33, pp.
  10\,328--10\,337, 2020.

\bibitem{lykouris2021corruption}
T.~Lykouris, M.~Simchowitz, A.~Slivkins, and W.~Sun, ``Corruption-robust
  exploration in episodic reinforcement learning,'' in \emph{Conference on
  Learning Theory}.\hskip 1em plus 0.5em minus 0.4em\relax PMLR, 2021, pp.
  3242--3245.

\bibitem{chen2021improved}
Y.~Chen, S.~Du, and K.~Jamieson, ``Improved corruption robust algorithms for
  episodic reinforcement learning,'' in \emph{International Conference on
  Machine Learning}.\hskip 1em plus 0.5em minus 0.4em\relax PMLR, 2021, pp.
  1561--1570.

\bibitem{jin2021best}
T.~Jin, L.~Huang, and H.~Luo, ``The best of both worlds: stochastic and
  adversarial episodic mdps with unknown transition,'' \emph{Advances in Neural
  Information Processing Systems}, vol.~34, pp. 20\,491--20\,502, 2021.

\bibitem{li2019stochastic}
Y.~Li, E.~Y. Lou, and L.~Shan, ``Stochastic linear optimization with
  adversarial corruption,'' \emph{arXiv preprint arXiv:1909.02109}, 2019.

\bibitem{bogunovic2020corruption}
I.~Bogunovic, A.~Krause, and J.~Scarlett, ``Corruption-tolerant gaussian
  process bandit optimization,'' in \emph{International Conference on
  Artificial Intelligence and Statistics}.\hskip 1em plus 0.5em minus
  0.4em\relax PMLR, 2020, pp. 1071--1081.

\bibitem{lee2021achieving}
C.-W. Lee, H.~Luo, C.-Y. Wei, M.~Zhang, and X.~Zhang, ``Achieving near
  instance-optimality and minimax-optimality in stochastic and adversarial
  linear bandits simultaneously,'' in \emph{International Conference on Machine
  Learning}.\hskip 1em plus 0.5em minus 0.4em\relax PMLR, 2021, pp. 6142--6151.

\bibitem{he2022nearly}
J.~He, D.~Zhou, T.~Zhang, and Q.~Gu, ``Nearly optimal algorithms for linear
  contextual bandits with adversarial corruptions,'' \emph{arXiv preprint
  arXiv:2205.06811}, 2022.

\bibitem{bastani2020online}
H.~Bastani and M.~Bayati, ``Online decision making with high-dimensional
  covariates,'' \emph{Operations Research}, vol.~68, no.~1, pp. 276--294, 2020.

\bibitem{lattimore2015linear}
T.~Lattimore, K.~Crammer, and C.~Szepesv{\'a}ri, ``Linear multi-resource
  allocation with semi-bandit feedback,'' \emph{Advances in Neural Information
  Processing Systems}, vol.~28, 2015.

\bibitem{gerchinovitz2011sparsity}
S.~Gerchinovitz, ``Sparsity regret bounds for individual sequences in online
  linear regression,'' in \emph{Proceedings of the 24th Annual Conference on
  Learning Theory}.\hskip 1em plus 0.5em minus 0.4em\relax JMLR Workshop and
  Conference Proceedings, 2011, pp. 377--396.

\bibitem{carpentier2012bandit}
A.~Carpentier and R.~Munos, ``Bandit theory meets compressed sensing for high
  dimensional stochastic linear bandit,'' in \emph{Artificial Intelligence and
  Statistics}.\hskip 1em plus 0.5em minus 0.4em\relax PMLR, 2012, pp. 190--198.

\bibitem{chen2022nearly}
Y.~Chen, Y.~Wang, E.~X. Fang, Z.~Wang, and R.~Li, ``Nearly
  dimension-independent sparse linear bandit over small action spaces via best
  subset selection,'' \emph{Journal of the American Statistical Association},
  no. just-accepted, pp. 1--31, 2022.

\bibitem{dai2022variance}
Y.~Dai, R.~Wang, and S.~S. Du, ``Variance-aware sparse linear bandits,''
  \emph{arXiv preprint arXiv:2205.13450}, 2022.

\bibitem{jang2022popart}
K.~Jang, C.~Zhang, and K.-S. Jun, ``Popart: Efficient sparse regression and
  experimental design for optimal sparse linear bandits,'' \emph{arXiv preprint
  arXiv:2210.15345}, 2022.

\bibitem{hao2021information}
B.~Hao, T.~Lattimore, and W.~Deng, ``Information directed sampling for sparse
  linear bandits,'' \emph{Advances in Neural Information Processing Systems},
  vol.~34, pp. 16\,738--16\,750, 2021.

\bibitem{hao2020high}
B.~Hao, T.~Lattimore, and M.~Wang, ``High-dimensional sparse linear bandits,''
  \emph{Advances in Neural Information Processing Systems}, vol.~33, pp.
  10\,753--10\,763, 2020.

\bibitem{wainwright2019high}
M.~J. Wainwright, \emph{High-dimensional statistics: A non-asymptotic
  viewpoint}.\hskip 1em plus 0.5em minus 0.4em\relax Cambridge University
  Press, 2019, vol.~48.

\end{thebibliography}
\newpage
\appendix
\begin{center}
{\bf \Large Appendices}
\end{center}
\section{Proof of Theorem~\ref{thm:reduction}}\label{app:thm1-reduction-proof}

\begin{theorem*}[Restatement of Theorem~\ref{thm:reduction}]
     Let $\Lambda$ be any algorithm for linear bandits and $I$ be a contextual linear bandit instance with stochastic contexts, unknown parameter $\thetas$ and rewards $r_t$ generated as described in the reduction in Section~\ref{sec:reductions}. It holds that\\
     $\bullet$ The reward $r_t$ is generated, by pulling the arm $g(\theta_t)$, from a linear bandit instance $L$ with action set $\calX$, and unknown parameter $\thetas$.\\
         $\bullet$ The reduction results in an algorithm $\calM$ for contextual linear bandits such that with probability at least $1-\delta$ we have
     \begin{equation} \label{eq-th1}
         |R_T^\calM(I)-R_T^\Lambda(L)|\leq c\sqrt{T\log(1/\delta)},
     \end{equation}
    where $R^\Lambda_T(L)$ is the regret of $\Lambda$ over the constructed linear bandit instance $L$, $R_T^\calM(I)$ is the regret of algorithm $\calM$ over the instance $I$ and $c$ is a universal constant.
\end{theorem*}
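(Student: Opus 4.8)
The plan is to first pin down the geometry of the constructed linear instance $L=(\calX,\thetas)$, then establish a conditional-unbiasedness identity for the action $\calM$ actually plays, and finally combine the two through a single martingale-concentration argument. The starting point is the elementary but crucial identity
\begin{equation*}
\adot{g(\theta'),\theta'}=\max_{\theta\in\Theta}\adot{g(\theta),\theta'}\qquad\forall\,\theta'\in\Theta .
\end{equation*}
Indeed, by linearity of expectation $\adot{g(\theta),\theta'}=\bbE_{\calA\sim\calD}[\adot{\arg\max_{a\in\calA}\adot{a,\theta},\theta'}]\le \bbE_{\calA\sim\calD}[\max_{a\in\calA}\adot{a,\theta'}]$ for every $\theta$, and the right-hand side equals $\adot{g(\theta'),\theta'}$ (take $\theta=\theta'$ inside the expectation), so the maximum over $\theta$ is attained at $\theta=\theta'$. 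Applying this with $\theta'=\thetas$ shows that the optimal arm of $L$ is $g(\thetas)$ and that its optimal value is $\bbE_{\calA\sim\calD}[\max_{a\in\calA}\adot{a,\thetas}]$, i.e. the per-round optimal \emph{expected} reward of $I$; hence $R_T^\Lambda(L)=\sum_{t=1}^T\big(\adot{g(\thetas),\thetas}-\adot{g(\theta_t),\thetas}\big)=\sum_{t=1}^T\big(\bbE_{\calA\sim\calD}[\max_{a\in\calA}\adot{a,\thetas}]-\adot{g(\theta_t),\thetas}\big)$.

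Next I would show that $a_t=\arg\max_{a\in\calA_t}\adot{a,\theta_t}$ is a conditionally unbiased estimate of $g(\theta_t)$. Let $\mathcal{G}_t$ denote the $\sigma$-algebra generated by everything available just before the round-$t$ context is revealed: $\calA_1,a_1,r_1,\dots,\calA_{t-1},a_{t-1},r_{t-1}$, the internal randomness of $\Lambda$, and hence $\theta_t$ (which $\Lambda$ produces from $(g(\theta_1),r_1,\dots,g(\theta_{t-1}),r_{t-1})$). Since $\calA_t\sim\calD$ is drawn by Nature independently of $\mathcal{G}_t$ while $\theta_t$ is $\mathcal{G}_t$-measurable, $\bbE[a_t\mid\mathcal{G}_t]=\bbE_{\calA\sim\calD}[\arg\max_{a\in\calA}\adot{a,\theta_t}]=g(\theta_t)$. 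This already yields the first bullet: writing $r_t=\adot{a_t,\thetas}+\eta_t=\adot{g(\theta_t),\thetas}+\eta'_t$ with $\eta'_t:=\eta_t+\adot{a_t-g(\theta_t),\thetas}$, the added term has conditional mean zero given $\mathcal{G}_t$ and absolute value at most $2$ since $\|a_t\|_2,\|g(\theta_t)\|_2,\|\thetas\|_2\le 1$; combining Hoeffding's lemma for this bounded term with the sub-Gaussianity of $\eta_t$ and the tower rule shows $\eta'_t$ is zero-mean and $O(1)$-sub-Gaussian conditioned on the arm/reward history $\Lambda$ sees. Thus $(g(\theta_t),r_t)_t$ is a bona fide realization of a linear bandit instance with action set $\calX$ and parameter $\thetas$.

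For the regret comparison, subtracting the formula for $R_T^\Lambda(L)$ above from the three-term decomposition \eqref{eq:reg_init} of $R_T^\calM(I)$ leaves, as a pathwise identity,
\begin{equation*}
R_T^\calM(I)-R_T^\Lambda(L)=\underbrace{\sum_{t=1}^T\Big(\max_{a\in\calA_t}\adot{a,\thetas}-\bbE_{\calA\sim\calD}[\max_{a\in\calA}\adot{a,\thetas}]\Big)}_{A_T}+\underbrace{\sum_{t=1}^T\adot{g(\theta_t)-a_t,\thetas}}_{B_T},
\end{equation*}
the $\adot{g(\thetas),\thetas}$ terms cancelling because $\bbE_{\calA\sim\calD}[\max_{a\in\calA}\adot{a,\thetas}]=\adot{g(\thetas),\thetas}$. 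Each summand of $A_T$ depends only on the fresh draw $\calA_t$, so it has mean zero given $\mathcal{G}_t$ and absolute value at most $2$; each summand of $B_T$ has mean zero given $\mathcal{G}_t$ by the unbiasedness identity and absolute value at most $2$. Hence $A_T+B_T$ is a sum of martingale differences with respect to $\{\mathcal{G}_{t+1}\}$ with increments bounded by $4$, and Azuma--Hoeffding gives $|A_T+B_T|\le c\sqrt{T\log(1/\delta)}$ with probability at least $1-\delta$, which is \eqref{eq-th1}.

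I expect the main obstacle to be the measure-theoretic bookkeeping around $\mathcal{G}_t$: one must verify that an \emph{arbitrary}, possibly adaptive, $\Lambda$ cannot correlate $\theta_t$ with the yet-unseen context $\calA_t$ — this is precisely where the \emph{stochastic} (as opposed to adversarial) nature of the contexts is used, since $\calA_t$ is produced by Nature and $\calM$'s map $(\calA_t,\theta_t)\mapsto a_t$ only consumes information already in $\mathcal{G}_t$ together with the independent draw $\calA_t$; the same point is what makes $\eta'_t$ a legitimate conditionally sub-Gaussian noise for $\Lambda$'s (coarser) filtration. A secondary point, deferred to an appendix under the paper's uniqueness assumption, is that when $\arg\max_{a\in\calA_t}\adot{a,\theta}$ is not unique one fixes a measurable choice function so that $g(\theta)$ and $a_t$ are well defined; none of the identities above depend on which choice is made.
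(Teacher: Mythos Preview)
Your proposal is correct and follows essentially the same approach as the paper: the same optimality identity $\adot{g(\theta'),\theta'}=\max_{\theta}\adot{g(\theta),\theta'}$, the same conditional-unbiasedness $\bbE[a_t\mid\mathcal{G}_t]=g(\theta_t)$, the same rewriting $r_t=\adot{g(\theta_t),\thetas}+\eta'_t$, and the same Azuma--Hoeffding concentration. The only cosmetic differences are that the paper splits your $A_T$ and $B_T$ and bounds them separately via the triangle inequality (two applications of Azuma plus a union bound) whereas you treat $A_T+B_T$ as a single martingale, and that your handling of the sub-Gaussianity of $\eta'_t$ (Hoeffding's lemma on the bounded part plus the tower rule) is in fact slightly more careful than the paper's, which asserts boundedness of $\eta_t$ that was not assumed.
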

\begin{proof}
Following the reduction described in the section, we start by showing that $R_T^\Lambda(L)$, the regret of the algorithm $\Lambda$ on a linear bandit instance, is at most $\tilde{O}(\sqrt{T})$ away from $R_T^\calM(I)$ with high probability. 
Recall that the regret $R_T^\Lambda(L)$ is defined as
\begin{equation}\label{eq:reg-def-known}
    R_T^\Lambda(L) = \sum_{t=1}^T \max_{\theta \in \Theta} \adot{g(\theta),\thetas} - \adot{g(\theta_t), \thetas}.
\end{equation}
We notice that the function $g$ depends on the context distribution $\calD$. In the following we assume for simplicity that for each $\theta\in \Theta$, there is a unique $a_t\in \calA_t$ that satisfies $\adot{a_t,\theta}=\sup_{a\in \mathcal{A}_t}\adot{a,\theta}$ almost surely. We discuss how to remove this assumption at the end of the proof.

The regret $R_T^\calM(I)$ of the contextual algorithm can be decomposed as
\begin{align}\label{eq:reg_init}
	R_T^\calM(I)&= \sum_{t=1}^T \adot{\arg\max_{a\in \calA_t} \adot{a,\thetas},\thetas} - \adot{\arg\max_{a\in \calA_t} \adot{a,\theta_t},\thetas} \nonumber \\
    &= \sum_{t=1}^T \adot{\arg\max_{a\in \calA_t} \adot{a,\thetas},\thetas} - \adot{ \mathbb{E}[\arg\max_{a\in \calA_t} \adot{a,\thetas}],\thetas} \nonumber \\
	&\qquad +\adot{ \mathbb{E}[\arg\max_{a\in \calA_t} \adot{a,\theta_t}|\theta_t],\thetas} - \adot{\arg\max_{a\in \calA_t} \adot{a,\theta_t},\thetas} \nonumber \\
	&\qquad +\adot{ \mathbb{E}[\arg\max_{a\in \calA_t} \adot{a,\thetas}],\thetas} - \adot{ \mathbb{E}[\arg\max_{a\in \calA_t} \adot{a,\theta_t}|\theta_t],\thetas}\nonumber \\
 &= \sum_{t=1}^T \adot{\arg\max_{a\in \calA_t} \adot{a,\thetas},\thetas} - \mathbb{E}[\adot{\arg\max_{a\in \calA_t} \adot{a,\thetas},\thetas}] \nonumber \\
	&\qquad +\mathbb{E}[\adot{\arg\max_{a\in \calA_t} \adot{a,\theta_t},\thetas}|\theta_t] - \adot{\arg\max_{a\in \calA_t} \adot{a,\theta_t},\thetas} \nonumber \\
	&\qquad +\adot{g(\thetas),\thetas} - \adot{g(\theta_t),\thetas},
\end{align}
where the expectation is with respect to the randomness in the context generation $\calA_t$. In the following we will first show that $\adot{g(\thetas),\thetas}=\max_{\theta \in \Theta'} \adot{g(\theta),\thetas}$.

Indeed,we observe that $\forall \theta', \theta''\in \Theta$ we have
\begin{align}
	\max_{\theta \in \Theta}\adot{g(\theta),\theta'}\geq \adot{g(\theta'),\theta'} &= \bbE[\max_{a\in \calA_t} \adot{a,\theta'}]\nonumber \\
	&\geq \mathbb{E}[\adot{\arg\max_{a\in \calA_t} \adot{a,\theta''},\theta'}] = \adot{g(\theta''),\theta'}.
\end{align}
The above inequalities have to be met with equality since we can select
$\theta''=\arg\max_{\theta \in \Theta}\adot{g(\theta),\theta'}$ making the first and last terms equal.
Hence, we have proved that
\begin{equation}\label{eq:max_prop_g}
    \adot{g(\theta'),\theta'}=\max_{\theta \in \Theta} \adot{g(\theta),\theta'}, \quad \forall \; \theta'\in \Theta.
\end{equation}

Substituting in the last line of \eqref{eq:reg_init} using the triangle inequality, we get that

\begin{align}\label{eq:diff_bound}
    |R_T^\calM(I)-R_T^\Lambda(L)|&\stackrel{(i)}{=} |\sum_{t=1}^T \adot{\arg\max_{a\in \calA_t} \adot{a,\thetas},\thetas} - \mathbb{E}[\adot{\arg\max_{a\in \calA_t} \adot{a,\thetas},\thetas}] \nonumber \\
	&\qquad +\mathbb{E}[\adot{\arg\max_{a\in \calA_t} \adot{a,\theta_t},\thetas}|\theta_t] - \adot{\arg\max_{a\in \calA_t} \adot{a,\theta_t},\thetas}| \nonumber \\
 &\leq |\sum_{t=1}^T \adot{\arg\max_{a\in \calA_t} \adot{a,\thetas},\thetas} - \mathbb{E}[\adot{\arg\max_{a\in \calA_t} \adot{a,\thetas},\thetas}]| \nonumber \\
	&\qquad +|\sum_{t=1}^T\mathbb{E}[\adot{\arg\max_{a\in \calA_t} \adot{a,\theta_t},\thetas}|\theta_t] - \adot{\arg\max_{a\in \calA_t} \adot{a,\theta_t},\thetas}|,
\end{align}
where $(i)$ follows by definition of $R_T^\Lambda(L)$ and \eqref{eq:max_prop_g}.

We next bound the quantity
\begin{equation}
    \Sigma_T := \sum_{t=1}^T \mathbb{E}[\adot{\arg\max_{a\in \calA_t} \adot{a,\theta_t},\thetas}|\theta_t] - \adot{\arg\max_{a\in \calA_t} \adot{a,\theta_t},\thetas}.
\end{equation}
Let $\calF'_t=\sigma\{\theta_1,r_1,...,\theta_t\}$ be the filtration of all historic information of the linear bandit problem up to time $t$. we notice that
\begin{align}\label{eq:martingale-prop}
	\mathbb{E}[\Sigma_t|\calF'_t]&=\mathbb{E}[\Sigma_{t-1}|\calF'_t]+\bbE [\mathbb{E}[\adot{\arg\max_{a\in \calA_t} \adot{a,\theta_t},\thetas}|\theta_t]|\calH'_t] - \bbE[\adot{\arg\max_{a\in \calA_t} \adot{a,\theta_t},\thetas}|\calF'_t]\nonumber \\
	&=\Sigma_{t-1}.
\end{align}
Hence, $\Sigma_{t}$ is a martingale with a bounded difference (by boundedness of $\Theta, \calA_t$). By Azuma–Hoeffding inequality \cite{wainwright2019high}, we have that $|\Sigma_T|\leq c\sqrt{T\log 1/\delta}$ with probability at least $1-\frac{1}{2\delta}$. For completeness, we state a special case of the Azuma-Hoeffding inequality at the end of our proof. The summation $\Sigma'_T=\sum_{t=1}^T \adot{\arg\max_{a\in \calA_t} \adot{a,\thetas},\thetas} - \mathbb{E}[\adot{\arg\max_{a\in \calA_t} \adot{a,\thetas},\thetas}]$ can be bounded similarly. Hence, by \eqref{eq:diff_bound}, we have that with probability at least $1-\delta$
\begin{equation}\label{eq:regret-diff}
    |R_T^\calM(I)-R_T^\Lambda(L)|\leq c \sqrt{T\log 1/\delta}.
\end{equation}

We have shown that the regret of Algorithm~\ref{alg:general} over the instance $I$ is $O(\sqrt{T\log 1/\delta})$ away from $R_T^\Lambda(L)$ with probability at least $1-\delta$. It remains to show that the rewards $r_t$ generated by the described interaction with the instance $I$, are generated from $L$ by the interaction of algorithm $\Lambda$.
\begin{proposition}\label{prop:rewrite-reward}
    The reward $r_t$ can be rewritten as
    \begin{equation}
        r_t = \adot{g(\theta_t),\thetas}+\eta'_t,
    \end{equation}
    where $\bbE [\eta'_t|\calF'_t]=0, \bbE [\exp(\lambda \eta'_t)|\calF'_t]\leq \exp(2\lambda^2) \quad \forall \lambda \in \bbR$, and where $\calF'_t=\sigma \{\theta_1, r_1,..., \theta_t\}$ is the filtration of historic information up to time $t$.
\end{proposition}
\begin{proof}
We have that
\begin{align}
    r_t&=\adot {a_t,\thetas} + \eta_t = \adot{\arg\max_{b\in \calA_t} \adot{b,\theta_t}, \thetas} +\eta_t\nonumber \\
    &= \adot{g(\theta_t),\thetas} + \left(\eta_t+ \adot{\arg\max_{b\in \calA_t} \adot{b,\theta_t}, \thetas} - \adot{g(\theta_t),\thetas}\right).
\end{align}
We let $\eta'_t=\eta_t+ \adot{\arg\max_{b\in \calA_t} \adot{b,\theta_t}, \thetas} - \adot{g(\theta_t),\thetas}$. The proof that $\bbE [\eta'_t|\calF_t]=0$ follows similarly to \eqref{eq:martingale-prop}
\begin{align}
    \bbE[\eta_t'|\calF_t] &= \bbE[\eta_t|\calF_t]+\bbE [\adot{\arg\max_{b\in \calA_t} \adot{b,\theta_t}, \thetas} - \adot{g(\theta_t),\thetas}|\calF_t]\nonumber \\
    &=\bbE [\adot{\arg\max_{b\in \calA_t} \adot{b,\theta_t}, \thetas} - \adot{g(\theta_t),\thetas}|\theta_t] = 0.
\end{align}
Lastly, $\bbE [\exp(\lambda \eta'_t)|\calF_t]\leq \exp(2\lambda^2) \forall \lambda \in \bbR$ follows by boundedness of $\eta'_t$ which follows by boundedness of $\eta_t, \Theta, \calA_t$.
\end{proof}
This concludes the proof.
\begin{lemma}{[Azuma's Inequality \cite{wainwright2019high}]}
    Let $\Sigma_0, \Sigma_1,...$ be a martingale with respect to filtration $\calF_0,\calF_1,...$ such that $|\Sigma_i-\Sigma_{i-1}|\leq c$ almost surely. Then for all $\epsilon>0$, we have that
    \begin{equation}
        \mathbb{P}[|\Sigma_n-\Sigma_0|>\epsilon]\leq 2\exp(-\frac{\epsilon^2}{2nc^2}).
    \end{equation}
\end{lemma}
\end{proof}

\subsection{When $\sup_{a\in \mathcal{A}_t}\adot{a,\theta}$ is not Unique}\label{app:assump}
One solution is to choose $a_t$ as any action $\adot{a_t,\theta}\geq \sup_{a\in \mathcal{A}_t}\adot{a,\theta}-\delta$ for some $\delta>0$. The error arising from performing this step can be controlled by $\delta$, e.g., by setting $\delta = \exp(-T)$. Our proofs will follow by choosing $a_t$ as described above, using any deterministic or random choice function, as long as the action $a_t$ is measurable.

\section{Proof of Theorem~\ref{thm:main}}
\begin{theorem*}[Restatement of Theorem~\ref{thm:main}]
     Let $\Lambda_\epsilon$ be an algorithm for linear bandits with $\epsilon$ misspecification and $I$ be a contextual linear bandit instance with stochastic contexts, unknown parameter $\thetas$ and  rewards $r_t$ are generated as described in Algorithm~\ref{alg:general}. The following holds:\\
         $\bullet$ Conditioned on $\calH_{t^{(m)}}=\sigma(\calA_1,a_1,r_1,...,\calA_{t^{(m)}},a_{t^{(m)}},r_{t^{(m)}})$: in epoch $m$, the rewards $r_t$ are generated, by pulling arm $g^{(m)}(\theta_t)$, from a misspecified linear bandit instance $L_m$ for $t=t^{(m)}+1,...,t^{(m+1)}$, action set $\calX_m=\{g^{(m)}(\theta)|\theta\in \Theta'\}$, unknown parameter $\thetas$, mean rewards $\adot{g(\theta),\thetas}$, and unknown misspecification $\epsilon'_m$.\\
         $\bullet$ The misspecification $\epsilon'_m$ is bounded by the known quantity $\epsilon_m$ with probability at least $1-c\delta/M$.\\
         $\bullet$ With probability at least $1-\delta$ we have that $|R_T(I)-\sum_{m=1}^MR^{\Lambda_{\eps_m}}_{T_m}(L_m)|\leq c\sqrt{T\log(1/\delta)}$, where $R_T(I)$ is the regret of Algorithm~\ref{alg:general} over the instance $I$, $R^{\Lambda_{\eps_m}}_{T_m}(L_m)$ is the regret of algorithm $\Lambda_{\epsilon_m}$ over the bandit instance $L_m$ in phase $m$, $T_m=t^{(m+1)}-t^{(m)}$, and $c$ is a universal constant.
\end{theorem*}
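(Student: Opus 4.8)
The plan is to establish the three bullets in order, re-using the noise analysis (Proposition~\ref{prop:rewrite-reward}) and the regret decomposition \eqref{eq:reg_init} from the proof of Theorem~\ref{thm:reduction}, now carried out epoch by epoch. For the first bullet, fix $\calH_{t^{(m)}}$. Since $g^{(m)}(\theta)=\frac{1}{t^{(m)}}\sum_{t\le t^{(m)}}\arg\max_{a\in\calA_t}\adot{a,\theta}$ is a deterministic function of the contexts $\calA_1,\dots,\calA_{t^{(m)}}$, all of which are $\calH_{t^{(m)}}$-measurable, and $g$ depends only on the fixed distribution $\calD$, conditioning on $\calH_{t^{(m)}}$ freezes the map $\theta\mapsto f^{(m)}(g^{(m)}(\theta)):=\adot{g(\theta)-g^{(m)}(\theta),\thetas}$ that serves as the misspecification function on $\calX_m$. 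Writing $a_t=\arg\max_{a\in\calA_t}\adot{a,\theta_t}$ and mirroring Proposition~\ref{prop:rewrite-reward},
\begin{equation*}
r_t=\adot{g^{(m)}(\theta_t),\thetas}+f^{(m)}(g^{(m)}(\theta_t))+\eta'_t,\qquad \eta'_t:=\eta_t+\adot{a_t,\thetas}-\adot{g(\theta_t),\thetas}.
\end{equation*}
Taking $\calF''_t$ to be the history of $L_m$ up to time $t$ (i.e.\ $\calH_{t^{(m)}}$ together with $\theta_{t^{(m)}+1},r_{t^{(m)}+1},\dots,\theta_t$), one checks $\calF''_t\subseteq\calF_t$, that $\theta_t$ is $\calF''_t$-measurable, and that $\calA_t\sim\calD$ is independent of $\calF''_t$ (as $\Lambda_{\epsilon_m}$ never observes $\calA_t$); hence $\bbE[\adot{a_t,\thetas}\mid\calF''_t]=\adot{g(\theta_t),\thetas}$, which together with $\bbE[\eta_t\mid\calF_t]=0$ gives $\bbE[\eta'_t\mid\calF''_t]=0$, while boundedness of $\eta'_t-\eta_t$ (by $2$, using $\|g(\theta)\|_2\le1$) yields conditional sub-Gaussianity exactly as in Proposition~\ref{prop:rewrite-reward}. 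Thus in epoch $m$ the rewards obey the misspecified linear bandit model with arm $g^{(m)}(\theta_t)$, mean reward $\adot{g(\theta),\thetas}$, and misspecification $\epsilon'_m=\sup_{\theta\in\Theta'}|\adot{g(\theta)-g^{(m)}(\theta),\thetas}|$.

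For the second bullet, fix $\theta\in\Theta'$ and note that $\adot{g^{(m)}(\theta),\thetas}=\frac{1}{t^{(m)}}\sum_{t\le t^{(m)}}\adot{\arg\max_{a\in\calA_t}\adot{a,\theta},\thetas}$ is an average of i.i.d.\ random variables in $[-1,1]$ (the contexts are i.i.d.\ and $\|a\|_2,\|\thetas\|_2\le1$) with mean $\adot{g(\theta),\thetas}$. Hoeffding's inequality gives $|\adot{g(\theta)-g^{(m)}(\theta),\thetas}|\le\sqrt{2\log(2M|\Theta'|/\delta)/t^{(m)}}$ off an event of probability $\delta/(M|\Theta'|)$; a union bound over $\theta\in\Theta'$ together with $\epsilon_m=2\sqrt{\log(M|\Theta'|/\delta)/t^{(m)}}$, which dominates the Hoeffding radius, yields $\epsilon'_m\le\epsilon_m$ with probability at least $1-c\delta/M$, the base epoch being covered by the choice $\epsilon_1=1$. (The third bullet does not rely on this bound.)

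For the third bullet, write $R_T(I)=\sum_{m=1}^M R^{(m)}(I)$ for the per-epoch regret and apply \eqref{eq:reg_init} inside each epoch with $a^*_t(\theta):=\arg\max_{a\in\calA_t}\adot{a,\theta}$, $a_t=a^*_t(\theta_t)$, and $\bbE[\adot{a^*_t(\theta_t),\thetas}\mid\theta_t]=\adot{g(\theta_t),\thetas}$. Summed over all epochs the first two groups combine to $\Sigma'_T=\sum_{t\le T}(\adot{a^*_t(\thetas),\thetas}-\adot{g(\thetas),\thetas})$ and $\Sigma_T=\sum_{t\le T}(\adot{g(\theta_t),\thetas}-\adot{a^*_t(\theta_t),\thetas})$, which are martingale-difference sums (with respect to the context filtration and to $\calF''_t$ respectively, using again that $\calA_t$ is fresh) with increments bounded by $2$, so Azuma–Hoeffding bounds each by $c\sqrt{T\log(1/\delta)}$ with probability $1-\delta/2$. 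The third group is $\sum_{t\le T}(\adot{g(\thetas),\thetas}-\adot{g(\theta_t),\thetas})=\sum_{t\le T}(\adot{g(\thetas),\thetas}-\max_{\theta\in\Theta'}\adot{g(\theta),\thetas})+\sum_{m=1}^M R^{\Lambda_{\eps_m}}_{T_m}(L_m)$ by the definition of the regret of $L_m$. To bound the first summand I would show that, although $g$ is not continuous, $r(\theta):=\adot{g(\theta),\thetas}$ satisfies $0\le r(\thetas)-r(\theta)\le2\|\theta-\thetas\|_2$ for all $\theta\in\Theta$: the lower bound is \eqref{eq:max_prop_g} at $\theta'=\thetas$, and the upper bound follows from $\adot{a^*_t(\thetas)-a^*_t(\theta),\thetas}=\adot{a^*_t(\thetas)-a^*_t(\theta),\thetas-\theta}+\adot{a^*_t(\thetas)-a^*_t(\theta),\theta}$, where the last inner product is nonpositive by optimality of $a^*_t(\theta)$ for $\theta$ and the first is $\le2\|\theta-\thetas\|_2$ by Cauchy–Schwarz, after taking $\bbE$ over $\calA_t\sim\calD$. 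Since $\Theta'$ is a $1/T$-net of $\Theta$, each of the $T$ terms is at most $2/T$, so this summand is at most $2$, and a union bound over the two Azuma events closes the argument. I expect this last geometric step — that the $1/T$-discretization retains a near-optimal arm despite the discontinuity of $g$ — together with the filtration bookkeeping behind $\bbE[\eta'_t\mid\calF''_t]=0$ and the martingale property of $\Sigma_T$ to be where the real care lies; the Azuma and Hoeffding applications are routine.
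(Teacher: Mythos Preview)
Your proposal is correct and follows essentially the same approach as the paper: the noise analysis for bullet one mirrors Proposition~\ref{prop:rewrite-reward}, bullet two is the same Hoeffding-plus-union-bound argument (the paper's Proposition~\ref{prop:misspec}), and bullet three is the same martingale decomposition from \eqref{eq:reg_init} together with a $2/T$ discretization bound. The only noteworthy difference is in proving that $2/T$ bound: the paper chains \eqref{eq:max_prop_g} twice through the net point $\phi$ (i.e., $\adot{g(\thetas),\thetas}\le\adot{g(\thetas),\phi}+1/T\le\adot{g(\phi),\phi}+1/T\le\adot{g(\phi),\thetas}+2/T$), whereas your direct optimality-plus-Cauchy--Schwarz argument on $\adot{a^*_t(\thetas)-a^*_t(\theta),\thetas}$ is a slightly cleaner route to the same inequality.
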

\begin{proof}
Let $L_m$ be a bandit instance with actions $\calX_m=\adot{g^{(m)}(\theta)|\theta\in \Theta'}$ indexed by the set $\Theta'$, mean rewards $\adot{g(\theta),\thetas}\forall \theta \in \Theta'$, and $t\in \{t^{(m)}+1,...,t^{(m+1)}\}$. Let
\begin{equation*}
    \calH_{t^{(m)}}=\sigma\{\calA_1,a_1,r_1,...,\calA_{t^{(m)}},a_{t^{(m)}},r_{t^{(m)}}\}
\end{equation*}
be the filtration of all historic information before epoch $m$. Note that $g^{(m)}$ is defined in line $7$ of Algorithm~\ref{alg:general} and is the empirical estimate of $g$ using history $\calH_{t^{(m)}}$, hence, $g^{(m)}$ is $\calH_{t^{(m)}}$-predictable. Conditioned on $\calH_{t^{(m)}}$, we have that $L_m$ is  a misspecified linear bandit instance with misspecification, $\epsilon_m'=\sup_{\theta \in \Theta'}\adot{g(\theta)-g^{(m)}(\theta),\thetas}$. And the regret of the algorithm $\Lambda_{\epsilon_m}$ over $L_m$ is the random quantity
\begin{equation*}
    R^{\Lambda_{\eps_m}}_{T_m}(L_m):=\sum_{t=t^{(m)}+1}^{t^{(m+1)}} \adot{g(\theta),\thetas} - \adot{g(\theta_t), \thetas}.
\end{equation*}
As we have shown in Proposition~\ref{prop:rewrite-reward}, conditioned on $\calH_{t^{(m)}}$, $r_t$ is generated from $L_m$ by pulling arm $\theta_t$.
Define the event $\mathcal{E}_m=\{\epsilon'_m> \epsilon_m\}$ be the bad event that the random quantity $\epsilon'_m$ is greater than $\epsilon_m$ defined in Algorithm~\ref{alg:general}. We will show in Proposition~\ref{prop:misspec} that $\sum_{m=1}^M\mathbb{P}[\mathcal{E}_m]\leq \delta$. 

We next bound the regret of Algorithm~\ref{alg:general} in terms of the random quantities $\{R^{\Lambda_{\eps_m}}_{T_m}(L_m)\}_{m=1}^M$. By choosing $\delta$ sufficiently small, it it will be enough to bound $R^{\Lambda_{\eps_m}}_{T_m}(L_m)$ conditioned on $\calH_{t^{(m)}}$ and the good event $\mathcal{G}_m=\{\epsilon'_m\leq \epsilon_m\}$. Let us define the random quantity $R^{\Lambda_\epsilon}_T(L_\epsilon)=\sum_{m=1}^MR^{\Lambda_{\eps_m}}_{T_m}(L_m)$.
We show that $R^{\Lambda_\epsilon}_T(L_\epsilon)$ is at most $\tilde{O}(\sqrt{T})$ away from $R_T^\calM$ with high probability. By definition of $R^{\Lambda_{\eps_m}}_{T_m}(L_m)$ we have that
\begin{equation}
    R^{\Lambda_\epsilon}_T(L_\epsilon) = \sum_{t=1}^T \max_{\theta \in \Theta'} \adot{g(\theta),\thetas} - \adot{g(\theta_t), \thetas}.
\end{equation}
As in the proof of Theorem~\ref{thm:reduction}, we assume for simplicity that for each $\theta\in \Theta$, there is a unique $a_t\in \calA_t$ that satisfies $\adot{a_t,\theta}=\sup_{a\in \mathcal{A}_t}\adot{a,\theta}$ almost surely. This can be relaxed in the same way as in Theorem~\ref{thm:reduction}.

Recall that $L$ is the linear bandit instance in Theorem~\ref{thm:reduction} with access to the function $g$, and actions in $\calX=\{g(\theta)|\theta\in \Theta\}$. The regret $R^\calM_T(I)$ of the contextual algorithm can be bounded as
\begin{align}\label{eq:reg_init_gen}
	R_T^\calM(I)&\leq |R_T^\calM(I)-R_T^\Lambda(L)|+|R_T^\Lambda(L)-R^{\Lambda_\epsilon}_T(L_\epsilon)|,
\end{align}
where $R_T^\Lambda(L)$ is defined as follows
\begin{equation*}
     R_T^\Lambda(L) = \sum_{t=1}^T \max_{\theta \in \Theta} \adot{g(\theta),\thetas} - \adot{g(\theta_t), \thetas},
\end{equation*}
and $\{\theta_t\}$ are the  actions played by $\{\Lambda_{\epsilon_m}\}$.
The first term in \eqref{eq:reg_init_gen} is bounded in Theorem~\ref{thm:reduction}. In the following, we focus on bounding $|R_T^\Lambda(L)-R^{\Lambda_\epsilon}_T(L_\epsilon)|$. To that end, we first show that
\begin{equation}\label{eq:quant-effect-gen}
    |\adot{g(\thetas),\thetas}-\max_{\theta \in \Theta'} \adot{g(\theta),\thetas}|\leq 2/T.
\end{equation}

We recall from \eqref{eq:max_prop_g} that
\begin{equation}
    \adot{g(\theta'),\theta'}=\max_{\theta \in \Theta} \adot{g(\theta),\theta'}, \quad \forall \; \theta'\in \Theta.
\end{equation}
From $1/T$-net properties, we also have that there exists $\phi\in \Theta'$ such that $\|\thetas-\phi\|_2\leq 1/T$. Hence,
\begin{align}
    \max_{\theta \in \Theta'} \adot{g(\theta),\thetas} &\stackrel{(i)}{\leq} \max_{\theta \in \Theta} \adot{g(\theta),\thetas}\stackrel{(ii)}{=}\adot{g(\thetas),\thetas}\nonumber \\
    &\stackrel{(iii)}{\leq} \adot{g(\thetas),\phi}+1/T\nonumber \\
    &\leq \max_{\theta \in \Theta} \adot{g(\theta),\phi}+1/T\nonumber \\
    &\stackrel{(iv)}{=} \adot{g(\phi),\phi}+1/T \stackrel{(v)}{\leq} \adot{g(\phi),\thetas}+2/T \nonumber \\
    &\stackrel{(vi)}{\leq} \max_{\theta \in \Theta'} \adot{g(\theta),\thetas}+2/T,
\end{align}
where $(i)$ follows from $\Theta'\subseteq \Theta$, $(ii)$ follows from \eqref{eq:max_prop_g}, $(iii)$ follows from $\adot{g(\thetas),\thetas-\phi}\leq \|g(\thetas)\|_2\|\thetas-\phi\|_2\leq 1/T$, $(iv)$ follows from \eqref{eq:max_prop_g}, $(v)$ follows as in $(iii)$, and $(vi)$ follows from $\phi \in \Theta'$. Eq.~\eqref{eq:quant-effect-gen} follows. Note that in this part it is important to have $\Theta'\subseteq \Theta$.

As a result $|R_T^\Lambda(L)-R^{\Lambda_\epsilon}_T(L_\epsilon)|$ can be bounded as
\begin{align}\label{eq:sec_diff-gen}
    |R_T^\Lambda(L)-R^{\Lambda_\epsilon}_T(L_\epsilon)|&= |\sum_{t=1}^T \max_{\theta\in \Theta'}\adot{g(\theta),\thetas} - \max_{\theta\in \Theta}\adot{g(\theta),\thetas}|\nonumber\\
    &\leq \sum_{t=1}^T |\max_{\theta\in \Theta'}\adot{g(\theta),\thetas} - \max_{\theta\in \Theta}\adot{g(\theta),\thetas}|\stackrel{(i)}{\leq} 2,
\end{align}
where $(i)$ follows uses~\eqref{eq:quant-effect-gen} and \eqref{eq:max_prop_g}.

Hence, by \eqref{eq:reg_init_gen}, Theorem~\ref{thm:reduction}, \eqref{eq:sec_diff-gen} and union bound, we have that with probability at least $1-\delta$
\begin{equation}\label{eq:regret-diff}
    |R^\calM_T(I)-R^{\Lambda_\epsilon}_T(L_\epsilon)|\leq c \sqrt{T\log 1/\delta}.
\end{equation}
Since $R^{\Lambda_\epsilon}_T(L_\epsilon)=\sum_{m=1}^MR^{\Lambda_{\eps_m}}_{T_m}(L_m)$, we have proved the second part of the theorem. It remains to show that in each epoch $m$ the rewards $r_t$ are generated from the for linear bandit instance $L_m$ and to bound the amount of misspecification. By Proposition~\ref{prop:rewrite-reward}, it suffices to show the following.
\begin{proposition}\label{prop:misspec}
    For each $m\in [M]$, we have that with probability at least $1-\delta/M$ it holds that
    \begin{equation}
        |\adot{g(\theta),\theta'}-\adot{g^{(m)}(\theta),\theta'}|\leq 2\sqrt{\frac{\log (2M|\Theta'|/\delta)}{t^{(m)}}} \forall \theta\in \Theta', \theta' \in \Theta.
    \end{equation}
\end{proposition}
\begin{proof}

    Since for a fixed $\theta'$ we have that $\adot{\arg\max_{a\in \calA_t}\adot{a,\theta},\theta'}$ is $1/4$-subgaussian with mean $\adot{g(\theta),\theta'}$, we have that with probability at least $1-\frac{\delta}{M|\Theta'|^2}$ it holds that
    \begin{align}
        |\adot{g(\theta),\theta'}-\adot{g^{(m)}(\theta),\theta'}|&=|\adot{g(\theta),\theta'}-\frac{1}{t^{(m+1)}}\sum_{t=1}^{t^{(m+1)}}\adot{\arg\max_{a\in \calA_t}\adot{a,\theta},\theta'}|\nonumber\\
        &\leq 2\sqrt{\frac{\log (2M|\Theta'|/\delta)}{t^{(m)}}}.
    \end{align}
    By the union bound, the following holds with probability at least $1-\delta/M$
    \begin{equation}
        |\adot{g(\theta),\theta'}-\adot{g^{(m)}(\theta),\theta'}|\leq 2\sqrt{\frac{\log (2M|\Theta'|/\delta)}{t^{(m)}}} \forall \theta, \theta' \in \Theta'.
    \end{equation}
    Let us pick arbitrary $\theta\in \Theta', \theta' \in \Theta$. We have that there is $\phi'\in \Theta'$ such that $\|\theta'-\phi'\|_2\leq 1/T$. 
    Hence, by Cauchy-Schwartz and the triangle inequality, we have that the following holds with probability at least $1-\delta/M$
    \begin{equation}
        |\adot{g(\theta),\theta'}-\adot{g^{(m)}(\theta),\theta'}|\leq |\adot{g(\theta),\phi'}-\adot{g^{(m)}(\theta),\phi'}|+\frac{2}{T}\leq 2\sqrt{\frac{\log (2M|\Theta'|/\delta)}{t^{(m)}}}+\frac{2}{T} \forall \theta\in \Theta', \theta' \in \Theta.
    \end{equation}
\end{proof}
This concludes the proof.
\end{proof}

\end{document}